\tikzset{
  treenode/.style = {align=center, inner sep=0pt, text centered,
    font=\sffamily},
  arn_n/.style = {treenode, circle, black, font=\sffamily\bfseries, draw=black,
    fill=white, text width=1.5em},
  arn_r/.style = {treenode, circle, black, font=\sffamily\bfseries, draw=black,
    fill=white, text width=1.0em},
  arn_x/.style = {treenode, rectangle, draw=black,
    minimum width=0.5em, minimum height=0.5em}
}
\icmltitlerunning{Fast and Three-rious: Speeding Up Weak Supervision with Triplet Methods}
\newtheorem{theorem}{Theorem}
\newtheorem{lemma}{Lemma}
\newtheorem{proposition}{Proposition}
\newtheorem{remark}{Remark}
\newcommand{\argmin}[2]{\textrm{argmin}_{#1}~#2}
\newcommand{\argmax}[2]{\textrm{argmax}_{#1}~#2}
\newcommand{\E}[2]{\mathbb{E}_{#1}\left[#2\right]}
\newcommand{\Ehat}[1]{\hat{\mathbb{E}}\left[#1\right]}
\newcommand{\Cov}[2]{\textrm{\textbf{Cov}}_{#1}\left[#2\right]}
\newcommand{\ind}[1]{\mathbbm{1}\left\{#1\right\}}
\newcommand\independent{\protect\mathpalette{\protect\independenT}{\perp}}
\def\independenT#1#2{\mathrel{\rlap{$#1#2$}\mkern2mu{#1#2}}}
\newcommand{\lf}[0]{\lambda}
  \newcommand{\colornote}[3]{{\color{#1}\bf{#2 #3}\normalfont}}
  \newcommand{\colornote}[3]{}
\definecolor{darkred}{rgb}{0.7,0.1,0.1}
\definecolor{darkgreen}{rgb}{0.1,0.5,0.1}
\definecolor{cyan}{rgb}{0.7,0.0,0.7}
\definecolor{dblue}{rgb}{0.2,0.2,0.8}
\definecolor{maroon}{rgb}{0.76,.13,.28}
\definecolor{burntorange}{rgb}{0.81,.33,0}
\definecolor{royalpurple}{rgb}{0.47,.31,0.66}
  \newcommand{\num}[1]{{\color{red}\bf{#1}\normalfont}}
  \newcommand{\num}[1]{#1}
\newcommand{\sysx}{\textsc{FlyingSquid}}
\newcommand{\maxspeedupdugong}{\num{4,000}}
\newcommand{\avgspeedupdpbench}{\num{170}}
\newcommand{\maxspeedupdpbench}{\num{440}}
\newcommand{\avgspeedupdpbenchround}{\num{170}}
\newcommand{\maxliftdpbench}{\num{4.9}}
\newcommand{\maxlifttsonline}{\num{36.5}}
\newcommand{\maxliftmvonline}{\num{15.7}}
\newcommand{\avgliftrandomabstains}{\num{25.6}}
\newcommand{\avgliftsingletriplet}{\num{23.8}}
\newcommand{\spam}{\textbf{Spam}}
\newcommand{\spouse}{\textbf{Spouse}}
\newcommand{\weather}{\textbf{Weather}}
\newcommand{\interview}{\textbf{Interview}}
\newcommand{\commercial}{\textbf{Commercial}}
\newcommand{\tennis}{\textbf{Tennis Rally}}
\newcommand{\basketball}{\textbf{Basketball}}
\begin{document}

\newif\ifsinglecolumn
\newif\ifarxiv
\arxivtrue

\twocolumn[
\icmltitle{Fast and Three-rious: Speeding Up Weak Supervision with Triplet Methods}

\icmlsetsymbol{equal}{*}

\begin{icmlauthorlist}
\icmlauthor{Daniel~Y.~Fu}{equal,to}
\icmlauthor{Mayee~F.~Chen}{equal,to}
\icmlauthor{Frederic~Sala}{to}
\icmlauthor{Sarah~M.~Hooper}{goo}
\icmlauthor{Kayvon~Fatahalian}{to}
\icmlauthor{Christopher~R{\'e}}{to}
\end{icmlauthorlist}

\icmlaffiliation{to}{Department of Computer Science, Stanford University}
\icmlaffiliation{goo}{Department of Electrical Engineering, Stanford University}

\icmlcorrespondingauthor{Daniel~Y.~Fu}{danfu@cs.stanford.edu}

\icmlkeywords{Weak Supervision, Latent Variable Models}

\vskip 0.3in
]

\printAffiliationsAndNotice{\icmlEqualContribution} 

\begin{abstract}

Weak supervision is a popular method for building machine learning models
without relying on ground truth annotations.
Instead, it generates probabilistic training
labels by estimating the accuracies of multiple noisy labeling sources (e.g.,
heuristics, crowd workers).
Existing approaches use latent variable estimation to model the noisy sources,
but these methods can be computationally expensive, scaling superlinearly in
the data.
In this work, we show that, for a class of latent variable models highly
applicable to weak supervision, we can find
a \textit{closed-form solution} to model parameters, 
obviating the need for iterative solutions like stochastic gradient
descent (SGD).
We use this insight to build \sysx, a weak supervision framework that
runs \textit{orders of magnitude} faster than previous weak supervision
approaches and requires fewer assumptions.
In particular, we prove bounds on generalization error without assuming that
the latent variable model can exactly parameterize the underlying data
distribution.
Empirically, we validate \sysx\ on benchmark weak supervision datasets and
find that it achieves the same or higher quality compared to previous
approaches without the need to tune an SGD procedure,
recovers model parameters \avgspeedupdpbenchround\ times faster on average, and
enables new video analysis and online learning applications.

\end{abstract}

\begin{figure*}[t]
  \centering
  \includegraphics[width=6.75in]{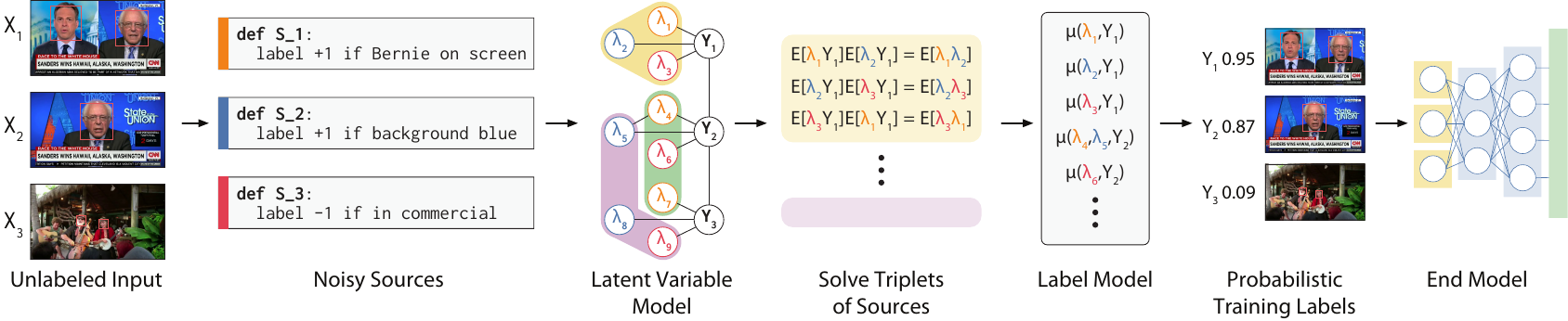}
  \ifsinglecolumn
  \else
  \vspace{-2em}
  \fi
  \caption{
  The \sysx\ pipeline.
  Users provide weak supervision sources, which generate noisy labels for a
  set of unlabeled data.
  \sysx\ uses a latent variable model and constructs 
  triplets of sources to turn model parameter estimation into a set of minimal
  subproblems with closed-form solutions.
  The label model then generates probabilistic training labels to
  train a downstream end model.
  }
  \label{fig:overview}
\end{figure*}


\section{Introduction}
\label{sec:intro}

Modern machine learning systems require large amounts of labeled training data
to be successful.
Weak supervision is a class of popular methods for building models without
resorting to manually labeling training data~\cite{dehghani2017neural,
dehghani2017learning, jia2017constrained, mahajan2018exploring, niu2012deepdive};
it drives applications used by
billions of people every day, ranging from Gmail~\cite{sheng2020gmail} to
AI products at Apple~\cite{re2019overton} and search
products~\cite{bach2018snorkel}.
These approaches use noisy
sources, such as heuristics, crowd workers, external
knowledge bases, and user-defined functions \cite{gupta2014improved, Ratner19,
karger2011iterative, dawid1979maximum, mintz2009distant, zhang:cacm17,
hearst1992automatic} to
generate probabilistic training labels without hand-labeling.

The major technical challenge in weak supervision is to efficiently estimate
the accuracies of---and potentially the correlations among---the noisy sources
without any labeled data~\cite{guan2018said,takamatsu2012reducing,
xiao2015learning,Ratner18}.
Standard approaches to this problem, from classical crowdsourcing to more
recent methods, use latent variable probabilistic graphical models (PGMs) to
model the primary sources of signal---the
agreements and disagreements between sources, along with known or estimated
source independencies~\cite{dawid1979maximum, karger2011iterative, Ratner16}.

However, latent variable estimation is
challenging, and the techniques
are often sample- and computationally-complex.
For example, \citet{bach2018snorkel} required multiple iterations of a
Gibbs-based algorithm, and \citet{Ratner19} required estimating
the full inverse covariance matrix among the sources, while
\citet{sala2019multiresws} and \citet{zhan2019sequentialws} required the use of
multiple iterations of stochastic gradient descent (SGD) to learn accuracy
parameters. 
These limitations make it difficult to use weak supervision in applications
that require modeling complex temporal or spatial dependencies, such as video
and image analysis, or in streaming applications that have strict latency
requirements.
In contrast, our solution is motivated by a key observation: that by breaking
the problem into minimal subproblems---solving parameters
for triplets of sources at a time, similar to~\citet{joglekar2013evaluating}
and~\citet{chaganty2014estimating}---we can reduce parameter estimation into
solving systems of equations that have simple, \emph{closed-form solutions}.

Concretely, we show that, for a class of binary Ising models,
we can reduce the problem of accuracy and
correlation estimation to solving a
set of systems of equations whose size is linear in the number of sources.
These systems admit a closed-form solution, so we can estimate the model parameters
in time linear in the data with provable bounds,
even though inference is NP-hard in general Ising models~\cite{ch2012complexity, koller2009probabilistic}.
Critically, the class of Ising models we use captures many weak supervision settings
and is larger than that used in previous efforts.
We use these insights to build \sysx, a new weak supervision framework that
learns label source accuracies with a closed-form solution.

We analyze the downstream performance of end models trained with labels
generated by \sysx, and prove the following results:
\begin{itemize}[itemsep=0.5pt,topsep=0pt]
\item 
We prove that the generalization error of a model trained with
labels generated by \sysx\ scales at
the same asymptotic rate as supervised learning.

\item
We analyze model misspecification using KL divergence, a more fine-grained result than \citet{Ratner19}.

\item 
We show that our parameter estimation approach can be sample optimal up to constant
factors via an information-theoretic lower bound on minimax risk.

\item 
We prove a first-of-its-kind result for downstream generalization of a
window-based online weak supervision algorithm,
accounting for distributional drift.
\end{itemize}

Next, we empirically validate \sysx\ on three benchmark weak supervision datasets that have been
used to evaluate previous state-of-the-art weak supervision
frameworks~\cite{Ratner18}, as well as on four video analysis
tasks, where labeling training data is particularly expensive and modeling
temporal dependencies introduces significant slowdowns in learning graphical
model parameters.
We find that \sysx\ achieves the same or higher quality as previous approaches
while learning parameters orders of magnitude faster.
Since \sysx\ runs so fast, we can learn graphical model parameters
\textit{in the training loop} of a discriminative end model.
This allows us to extend \sysx\ to the online learning setting with a window-based algorithm, where we update
model parameters at the same time as we generate labels for an end model.
In summary, we observe the following empirical results:
\begin{itemize}[itemsep=0.5pt,topsep=0pt]
\item We replicate evaluations of previous approaches and match or exceed their
accuracy (up to \maxliftdpbench\ F1 points).

\item On  tasks with relatively simple graphical model structures,
\sysx\ learns model parameters \avgspeedupdpbench\ times faster on
average; on
video analysis tasks, where there are complex temporal dependencies, \sysx\
learns up to \maxspeedupdugong\ times faster.

\item We demonstrate that our window-based online weak supervision extension can
both update model parameters and train an end model completely online,
outperforming a majority vote baseline by up to \maxliftmvonline\ F1 points.
\end{itemize}

\tikzstyle{place4}=[rectangle,draw=black!100,dashed,fill=white!100,thick, inner sep=12.5pt,fill opacity=0.2]

\definecolor{tableaured}{RGB}{225,87,89}
\definecolor{tableauyellow}{RGB}{237,201,73}
\definecolor{tableaugreen}{RGB}{90,161,80}
\definecolor{tableaupurple}{RGB}{176,123,161}
\begin{figure*}[t]
\centering
\begin{tikzpicture}[-,>=stealth',level/.style={sibling distance = 1cm/#1,
  level distance = 1.5cm}] 
\draw [fill=tableauyellow, draw opacity=0.0, fill opacity=0.5, thick, rounded corners] (-1.45, -1.2) rectangle (1.45, -2.0);
\draw [fill=tableaugreen, draw opacity=1.0, fill opacity=0.3, thin, rounded corners, dashed] (.55, -1.2) rectangle (2.45, -2.0);
\draw [fill=tableaugreen, draw opacity=1.0, fill opacity=0.3, thin, rounded corners, dashed] (4.05, -1.2) rectangle (3.15, -2.0);
\node [arn_n] (Y) {$Y$} ;
\node [arn_n] [below=1cm of Y] (L2){$\lf_2$}   ;
\node [arn_n] [left of=L2] (L1){$\lf_1$}   ;
\node [arn_n] [ right of=L2] (L3){$\lf_3$}   ;
\node [arn_n] [ right of=L3] (L4){$\lf_4$}   ;
\node [arn_n] [ right=1cm of L4] (Lm) {$\lf_m$}   ;
\node [] [ right=0.2cm of L4] {$...$}   ;
\path (Y) edge            (L1);
\path (Y) edge            (L2);
\path (Y) edge            (L3);
\path (Y) edge            (L4);
\path (Y) edge            (Lm);
\end{tikzpicture}
\hspace{1em}
\begin{tikzpicture}[-,>=stealth',level/.style={sibling distance = 1cm/#1,
  level distance = 1.5cm}] 
\draw [fill=tableauyellow, draw opacity=0.0, fill opacity=0.5, thick, rounded corners] (-1.45, -1.2) rectangle (-0.55, -2.0);
\draw [fill=tableaugreen, draw opacity=1.0, fill opacity=0.3, thin, rounded corners, dashed] (-0.45, -1.2) rectangle (2.45, -2.0);
\draw [fill=tableauyellow, draw opacity=0.0, fill opacity=0.5, thick, rounded corners] (4.0, -1.2) rectangle (3.2, -2.0);
\node [arn_n] (Y) {$Y$} ;
\node [arn_n] [below=1cm of Y] (L2){$\lf_2$}   ;
\node [arn_n] [left of=L2] (L1){$\lf_1$}   ;
\node [arn_n] [ right of=L2] (L3){$\lf_3$}   ;
\node [arn_n] [ right of=L3] (L4){$\lf_4$}   ;
\node [arn_n] [ right=1cm of L4] (Lm) {$\lf_m$}   ;
\node [] [ right=0.2cm of L4] {$...$}   ;
\path (Y) edge            (L1);
\path (L2) edge            (L1);
\path (Y) edge            (L2);
\path (Y) edge            (L3);
\path (Y) edge            (L4);
\path (Y) edge            (Lm);
\end{tikzpicture}
\hspace{1em}
\begin{tikzpicture}[-,>=stealth',level/.style={sibling distance = 1cm/#1,
  level distance = 1.0cm}] 
\draw [fill=tableauyellow, draw opacity=0.0, fill opacity=0.5, thick, rounded corners] (-0.95, 0.5) rectangle (-0.15, 1.4);
\draw [fill=tableauyellow, draw opacity=0.0, fill opacity=0.5, thick, rounded corners] (-0.45, -1.5) rectangle (2.15, -0.7);
\draw [fill=tableaugreen, draw opacity = 1.0, fill opacity=0.3, thin, rounded corners, dashed] (-0.1, 1.4) rectangle (2.4, 0.5);
\draw [fill=tableaupurple, draw opacity = 1.0, fill opacity=0.3, thick, rounded corners, dotted] (2.5, 1.4) rectangle (4.1, 0.5);
\draw [fill=tableaupurple, draw opacity=1.0, fill opacity=0.3, thick, rounded corners, dotted] (3.0, -1.5) rectangle (3.85, -0.7);
\node [arn_n] (Y1) {$Y_1$} ;
\node [arn_n] [ right=1.1cm of Y1] (Y2) {$Y_2$} ;
\node [arn_n] [ right=1.1cm of Y2] (Y3) {$Y_3$} ;
\node [arn_n] [below=0.5cm of Y1] (L3){$\lf_3$}   ;
\node [arn_n] [below=0.5cm of Y2] (L6){$\lf_6$}   ;
\node [arn_n] [below=0.5cm of Y3] (L9){$\lf_9$}   ;
\node [arn_n] [above left=0.5cm and 0.1 cm of Y1] (L1){$\lf_1$}   ;
\node [arn_n] [right=0.2cm of L1] (L2){$\lf_2$}   ;
\node [arn_n] [above left=0.5cm and 0.1 cm of Y2] (L4){$\lf_4$}   ;
\node [arn_n] [right=0.2cm of L4] (L5){$\lf_5$}   ;
\node [arn_n] [above left=0.5cm and 0.1 cm of Y3] (L7){$\lf_7$}   ;
\node [arn_n] [right=0.2cm of L7] (L8){$\lf_8$}   ;
\path (Y1) edge            (Y2);
\path (Y2) edge            (Y3);
\path (Y1) edge            (L3);
\path (Y2) edge            (L6);
\path (Y3) edge            (L9);
\path (Y1) edge            (L1);
\path (Y1) edge            (L2);
\path (Y2) edge            (L4);
\path (Y2) edge            (L5);
\path (Y3) edge            (L7);
\path (Y3) edge            (L8);
\path (L1) edge            (L2);
\end{tikzpicture}
\ifsinglecolumn
\else
\vspace{-1.5em}
\fi
\caption{Example of dependency structure graphs and triplets (rectangles). Left: Conditionally independent sources; Middle: With dependencies. Right: Multiple temporally-correlated labels $\{Y_1, Y_2, Y_3\}$ with per-label sources.}
\label{fig:graph_structures}
\end{figure*}

We release \sysx\ as a novel layer integrated into PyTorch.\footnote{https://github.com/HazyResearch/flyingsquid}
This layer allows weak supervision to be integrated off-the-shelf
into any deep learning model, learning the accuracies of noisy labeling sources
in the same training loop as the end model.
Our layer can be used in any standard training set up, enabling
new modes of training from multiple label sources.


\section{Weakly Supervised Machine Learning}
\label{sec:overview}

In this section, we give an overview of weak supervision and our problem setup.
In Section~\ref{sec:overview_ws}, we give an overview of the inputs to weak
supervision from the user's perspective.
In Section~\ref{sec:overview_problem}, we describe the formal problem setup.
Finally, in Section~\ref{sec:overview_label_model}, we show how the problem
reduces to estimating the parameters of a latent variable PGM.

\subsection{Background: Weak Supervision}
\label{sec:overview_ws}
We first give some background on weak supervision at a high level.
In weak supervision, practitioners programmatically generate training labels through the process shown in Figure~\ref{fig:overview}.
Users build multiple weak supervision sources that assign noisy labels to data.
For example, an analyst trying to detect interviews of Bernie Sanders in a
corpus of cable TV news may use off-the-shelf face detection and identity
classification networks to detect frames where Sanders is on screen, or she
may write a Python function to search closed captions for instances
of the text ``Bernie Sanders."
Critically, these weak supervision sources can vote or abstain on individual
data points; this lets users express high-precision signals without requiring
them to have high recall as well.
For example, while the text ``Bernie Sanders" in the transcript is a strong
signal for an interview, the absence of the text is not a strong signal for the
absence of an interview (once he is introduced, his name is not mentioned for
most of the interview).

These sources are noisy and may conflict with each other, so a latent variable
model, which we refer to as a \textit{label model}, is used to express the
accuracies of and correlations between them.
Once its parameters are learned, the model is used to aggregate source votes and generate probabilistic
training labels, which are in turn used to train a downstream discriminative
model (\emph{end model} from here on).

\subsection{Problem Setup}
\label{sec:overview_problem}
Now, we formally define our learning problem.
Let $\bm{X} = [X_1, X_2, \ldots, X_D] \in \mathcal{X}$ be a vector of $D$
related elements (e.g., contiguous frames in a video, or
neighboring pixels in an image).
Let $\bm{Y} = [Y_1, Y_2, \ldots, Y_D] \in \mathcal{Y}$ be the vector of
\textit{unobserved} true labels for each element (e.g., the per-frame label for
event detection in video, or a per-pixel label for a segmentation mask in an
image).
We refer to each $Y_i$ as a \textit{task}.
We have $ (\bm{X}, \bm{Y}) \sim \mathcal{D}$ for some distribution $\mathcal{D}$.
We simplify to binary $Y_i \in \{\pm 1\}$ for ease of exposition (we
discuss the multi-class case in
\ifarxiv
Appendix \ref{subsec:extensions}).
\else
Appendix C.2).
\fi
Let $m$ be the number of sources $S_1, \ldots, S_m$, each assigning a label $\lambda_j \in \{\pm 1 \}$ to some single element $X_i$ to vote on its respective $Y_i$,
or abstaining ($\lambda_j = 0$).

The goal is to apply the $m$ weak supervision sources to an unlabeled dataset
$\{\bm{X}^i\}_{i = 1}^n$ with $n$ data points to create an $n \times m$ label matrix $L$,
combine the source votes into element-wise
probabilistic training labels, $\{\bm{\widetilde{Y}}^i\}_{i = 1}^n$, and use them to train a discriminative
classifier $f_w : \mathcal{X} \rightarrow \mathcal{Y}$, \textit{all without
observing any ground truth labels}.

\subsection{Label Model}
\label{sec:overview_label_model}
Now, we describe how we use a probabilistic graphical model to generate
training data based on labeling function outputs.
First, we describe how we use a graph to specify the conditional dependencies
between label sources and tasks.
Next, we describe how to represent the task labels $\bm{Y}$ and source votes
$\bm{\lf}$ using a binary Ising
model from user-provided conditional dependencies between sources and tasks.
Then, we discuss how to perform inference using the junction tree formula and introduce the label model parameters our method focuses on estimating.

\paragraph{Conditional Dependencies}
Let a graph $G_{dep}$ specify conditional dependencies between sources and
tasks, using standard technical notions from the PGM
literature~\cite{koller2009probabilistic, Lauritzen, wainwright2008graphical}. 
In particular, the lack of an edge in $G_{dep}$ between a pair of
variables indicates independence conditioned on a \emph{separator set} of
variables \cite{Lauritzen}.
We assume that $G_{dep}$ is user-provided; it can also be
estimated directly from source votes~\cite{Ratner19}.
Figure~\ref{fig:graph_structures} shows three graphs, capturing different
relationships between tasks and supervision sources.
Figure~\ref{fig:graph_structures} (left) is a single-task scenario
where noisy source errors are conditionally independent; this case covers many
benchmark weak supervision datasets.
Here, $D = 1$, and there are no dependencies between
different elements in the dataset (e.g., randomly sampled comments from YouTube
for sentiment analysis).
Figure~\ref{fig:graph_structures} (middle) has dependencies between the errors of two sources ($\lambda_1$ and
$\lambda_2$).
Finally, Figure~\ref{fig:graph_structures} (right) depicts a more complex
scenario, where three tasks have dependencies between
them.
This structure is common in applications with temporal dependencies like video; for example, $Y_1, Y_2, Y_3$ might be contiguous
frames~\cite{sala2019multiresws}. 

\paragraph{Binary Ising Model}
We augment the dependency graph $G_{dep}$ to set up a binary Ising model on
$G = (V, E)$.
Let the vertices $V = \{\bm{Y}, \bm{v}\}$ contain a set of hidden variables $\bm{Y}$ (one for every
task $Y_i$) and observed variables $\bm{v}$, generated by augmenting $\bm{\lf}$.
We generate $\bm{v}$ by letting there be a pair of binary observed variables
$(v_{2i-1}, v_{2i})$ for each label source $\lambda_i$, such that
$(v_{2i-1}, v_{2i})$ is equal to $(1, -1)$ when $\lambda_i = 1$, $(-1, 1)$ when
$\lambda_i = -1$, and $(1, 1)$ or $(-1, -1)$ with equal probability when
$\lambda_i = 0$. This mapping also produces an augmented label matrix $\mathcal{L}$ from the empirical label matrix $L$, which contains $n$ samples of source labels.

Next, let the edges $E$ be constructed as follows.
Let $Y^{dep}(i)$ denote the task that $\lf_i$ labels for all $i \in [1, m]$.
Then for all $i$, there is an edge between each of $(v_{2i-1}, v_{2i})$ and $Y^{dep}(i)$ representing the accuracy of $\lf_i$ as well
as an edge between $v_{2i-1}$ and $v_{2i}$ representing the abstain rate of $\lf_i$.
If there is an edge between $\lf_i$ and $\lf_j$ in $G_{dep}$, then there are
four edges between $(v_{2i-1}, v_{2i})$ and $(v_{2j-1}, v_{2j})$.
We also define $Y(j)$ as the hidden variable that $v_j$ acts on for all $j \in [1, 2m]$; in particular, $Y(2i - 1) = Y^{dep}(i)$.

\paragraph{Inference}
The Ising model defines a joint distribution $P(\bm{Y}, \bm{\lf})$ (detailed
in
\ifarxiv
Appendix \ref{subsec:core_alg}),
\else
Appendix C.1),
\fi
which we wish to use for inference.
We can take advantage of the graphical model properties of $G_{dep}$ for
efficient inference.
In particular, suppose that $G_{dep}$ is triangulated; if not, edges can always
be added to $G_{dep}$ until it is.
Then, $G_{dep}$ admits a junction tree representation with maximal cliques
$C \in \tilde{\mathcal{C}}_{dep}$ and separator sets $S \in \mathcal{S}_{dep}$.
Inference is performed via a standard approach, using the junction tree formula
\begin{align}
P(\bm{Y}, \bm{\lambda}) = {\prod_{C \in \tilde{\mathcal{C}}_{dep}} \mu_C} / {\prod_{S \in \mathcal{S}_{dep}} \mu_S^{d(S) - 1}},
\label{eq:inf}
\end{align}
where $\mu_C$ is the marginal probability of a clique $C$, $\mu_S$ is the
marginal probability of a separator set $S$, and $d(S)$ is the number of maximal cliques $S$ is adjacent to~\cite{Lauritzen, wainwright2008graphical}.
We refer to these marginals as the \textit{label model parameters}
$\bm{\mu}$.

We assume the distribution prior $P(\bm{\bar{Y}})$ is user-provided, but it can also be
estimated directly by using source votes as in \citet{Ratner19} or by optimizing a composite likelihood function as in \citet{chaganty2014estimating}.
Some other marginals are directly observable from the votes generated by the sources
$S_1, \ldots, S_m$.
However, marginals containing elements from both $\bm{Y}$ and $\bm{\lambda}$ are not directly observable, since we do not observe $\bm{Y}$.
The challenge is thus recovering this set of marginals 
$P(Y_i, \ldots, Y_j, \lambda_k, \ldots, \lambda_l)$.


\section{Learning The Label Model}
\label{sec:model}

Now that we have defined our label model parameters $\bm{\mu}$, we need to recover
the parameters directly
from the label matrix $L$ without observing the true labels $\bm{Y}$.
First, we discuss how we recover the mean parameters of our Ising model using
Algorithm~\ref{alg:triplet}
(Section~\ref{subsec:model_mean_params}).
Then, we map the mean parameters to label model parameters
(Section~\ref{subsec:model_mapping})
by computing
expectations over cliques of $G$ and applying a linear transform to obtain
$\bm{\mu}$.
Finally, we discuss an extension to the online setting (Section~\ref{subsec:ext}).

\paragraph{Inputs and Outputs}
As input, we take in a label matrix $L$ that has, on average, better-than-random samples; dependency graph $G_{dep}$; and the prior $P(\bm{\bar{Y}})$.
As output, we want to compute $\bm{\mu}$, which would enable us to produce probabilistic training data via \eqref{eq:inf}.

\subsection{Learning the Mean Parameters}
\label{subsec:model_mean_params}
We explain how to compute the mean parameters $\E{}{Y_i}, \E{}{Y_i Y_j}, \E{}{v_i Y(i)}$, and $\E{}{v_i v_j}$ of the Ising model.
Note that all of these parameters can be directly estimated besides $\E{}{v_i Y(i)}$.
Although we cannot observe $Y(i)$, we can compute $\E{}{v_i Y(i)}$ using a
closed-form method by relying on notions of independence and rates of agreement
between groups of three conditionally independent observed variables for the
hidden variable $Y(i)$. Set $a_i := \E{}{v_i Y(i)}$, which can be thought of as the \textit{accuracy} of the observed variable scaled to $[-1, +1]$. The following proposition produces sufficient signal to learn from:
\begin{proposition}
If $v_i \independent v_j | Y(i)$, then $v_i Y(i) \independent v_j Y(i)$. 
\label{lemma:triplet}
\end{proposition}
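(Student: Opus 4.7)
The plan is to reduce the claim of marginal independence to the equality of a single moment. Since both $v_i Y(i)$ and $v_j Y(i)$ take values in $\{\pm 1\}$, their joint law is pinned down by three numbers --- the two marginal means and the expectation of their product --- so independence is equivalent to the single factorization $\E{}{(v_i Y(i))(v_j Y(i))} = \E{}{v_i Y(i)}\,\E{}{v_j Y(i)}$. Verifying this identity is therefore the whole task.

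For the left-hand side, I would use $Y(i)^2 = 1$ to rewrite $\E{}{(v_i Y(i))(v_j Y(i))} = \E{}{v_i v_j}$, then invoke the tower property together with the hypothesized conditional independence $v_i \independent v_j \mid Y(i)$ to obtain $\E{}{v_i v_j} = \E{Y(i)}{\E{}{v_i \mid Y(i)}\,\E{}{v_j \mid Y(i)}}$.

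The crux is then to express each conditional mean as a linear function of $Y(i)$. The binary Ising construction places singleton potentials only on the task variables $\bm{Y}$, so conditioning on $Y(i)$ and marginalizing out the remainder of the graph leaves a distribution on the observed variables that is invariant under the simultaneous sign flip $\bm{v}\mapsto -\bm{v}$ with $Y(i)\mapsto -Y(i)$. This yields the class-symmetric identity $\E{}{v_k \mid Y(i)=y} = y\cdot \E{}{v_k Y(i)}$ for any $y \in \{\pm 1\}$ and any $v_k$ coupled to $Y(i)$. Plugging this back in gives $\E{}{v_i v_j} = \E{}{Y(i)^2}\cdot \E{}{v_i Y(i)}\,\E{}{v_j Y(i)} = \E{}{v_i Y(i)}\,\E{}{v_j Y(i)}$, which is precisely the factorization we wanted.

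The main obstacle is this symmetry step: one must justify that the Ising model's invariance under $\bm{v}\mapsto -\bm{v}$ is preserved after conditioning on $Y(i)$ and tracing out the other hidden tasks. External fields on other $Y$'s are harmless because they contribute factors that do not involve $\bm{v}$, but one does need to be careful that no singleton potentials live on the observed variables themselves. Once the class-symmetric conditional means are in place, the remainder of the argument collapses to the elementary moment manipulation above.
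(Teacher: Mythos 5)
Your high-level architecture is sound and is genuinely different from the paper's: reducing independence of two $\{\pm 1\}$-valued variables to the single covariance identity $\E{}{v_i v_j} = \E{}{v_i Y(i)}\,\E{}{v_j Y(i)}$ and then attacking it with the tower property is cleaner than the paper's route (which reduces to one probability identity and proves it by an explicit configuration-by-configuration bijection on the Ising model). The reduction itself is correct, as is the step $\E{}{v_iv_j}=\E{}{\E{}{v_i\mid Y(i)}\,\E{}{v_j\mid Y(i)}}$.

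The gap is exactly where you flagged it: the class-symmetric identity $\E{}{v_k \mid Y(i)=y} = y\,\E{}{v_k Y(i)}$ does \emph{not} hold for every $v_k$ coupled to $Y(i)$, and the sign-flip invariance you invoke to get it is false in the multi-task model. The claim that ``external fields on other $Y$'s are harmless because they contribute factors that do not involve $\bm{v}$'' is the error: after conditioning on $Y(i)$ and marginalizing out another task $Y_k$, the field $\theta_{Y_k}Y_k$ is absorbed into a factor like $\cosh\bigl(\theta_{Y_k} + \theta_{Y_iY_k} + \textstyle\sum_{l:Y(l)=Y_k}\theta_l v_l\bigr)$, which is not even in the $v_l$'s when $\theta_{Y_k}\neq 0$, so the conditional law of $\bm{v}$ is not invariant under the flip. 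Concretely, take two tasks with $P \propto \exp(\theta_{Y_1}Y_1+\theta_{Y_2}Y_2+\theta_{12}Y_1Y_2+\theta_j v_jY_2)$ and condition on $Y_1$: one computes $\E{}{v_j\mid Y_1=1}=\tanh(\theta_{Y_2}+\theta_{12})\tanh\theta_j$ while $\E{}{v_j\mid Y_1=-1}=\tanh(\theta_{Y_2}-\theta_{12})\tanh\theta_j$, which are not negatives of each other unless $\theta_{Y_2}=0$. Since the proposition is explicitly meant to cover triplet members voting on different tasks (the paper's own proof treats $Y(j)\neq Y(i)$ as the main case), your argument as written does not go through. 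The good news is that your factorization only needs \emph{one} of the two conditional means to be odd in $y$: if $\E{}{v_i\mid Y(i)=y}=y\,c$ then $\E{}{\E{}{v_i\mid Y(i)}\E{}{v_j\mid Y(i)}} = c\,\E{}{Y(i)\,\E{}{v_j\mid Y(i)}} = \E{}{v_iY(i)}\,\E{}{v_jY(i)}$ with no assumption on $v_j$. Establishing that oddness for the variable attached to $Y(i)$ is precisely the content of the paper's bijection argument (a \emph{partial} flip of $v_i$, its observed neighbors, and the other sources on $Y(i)$, not a global one), and in the single-task case your global-flip symmetry does become valid because the lone field $\theta_Y Y$ cancels upon conditioning.
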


\begin{algorithm}[t]
	\caption{Triplet Method (before averaging)}
	\begin{algorithmic}
		\STATE \textbf{Input:}
		Set of variables $\Omega_G$, augmented label matrix $\mathcal{L}$ 
		\STATE Initialize $A = \emptyset$	
		\WHILE{$\exists \; v_i \in \Omega_G - A$}
			\STATE Pick $v_j, v_k: v_i \independent v_j | Y(i), v_i \independent v_k | Y(i), v_j \independent v_k | Y(i)$. 
			\STATE Estimate $\hat{\mathbb{E}}[v_i v_j] = \frac{1}{n} \sum_{t} \mathcal{L}_{it} \mathcal{L}_{jt}$, $\hat{\mathbb{E}}[v_i v_k] = \frac{1}{n} \sum_{t} \mathcal{L}_{it} \mathcal{L}_{kt}$, and $\hat{\mathbb{E}}[v_j v_k] = \frac{1}{n} \sum_{t} \mathcal{L}_{jt} \mathcal{L}_{kt}$. 
  		         \STATE $\hat{a}_i \gets \sqrt{|\hat{\mathbb{E}}[v_i v_j] \cdot \hat{\mathbb{E}}[v_i v_k] \, / \,\hat{\mathbb{E}}[v_j v_k]|}$
			\STATE $\hat{a}_j \gets \sqrt{|\hat{\mathbb{E}}[v_i v_j] \cdot \hat{\mathbb{E}}[v_j v_k] \, / \, \hat{\mathbb{E}}[v_i v_k]|}$
			\STATE $\hat{a}_k \gets \sqrt{|\hat{\mathbb{E}}[v_i v_k] \cdot \hat{\mathbb{E}}[v_j v_k] \, / \, \hat{\mathbb{E}}[v_i v_j]|}$
			\STATE $A \gets A \cup \{v_i, v_j, v_k \}$
		\ENDWHILE
		\RETURN \textsc{ResolveSigns}$(\hat{a}_i) \; \forall \; v_i \in V$
	\end{algorithmic}
	\label{alg:triplet}
\end{algorithm}

Our proof is provided in
\ifarxiv
Appendix \ref{subsubsec:prop1proof}.
\else
Appendix C.1.1.
\fi
This follows from a symmetry argument applied to the conditional independence of two variables $v_i$ and $v_j$ given $Y(i)$. Then
\begin{align*}
a_i a_j = \E{}{v_i Y(i)}\E{}{v_j Y(i)} = \E{}{v_i v_j Y(i)^2} = \E{}{v_i v_j},
\end{align*}
where we used $Y(i)^2 = 1$. While we cannot observe $a_i$, the product of $a_i a_j$ is just $\E{}{v_i v_j}$, the observable rate at which a pair of variables act together. We can then utilize a third variable $v_k$ such that $a_i a_k$ and $a_j a_k$ are also observable, and solve a system of three equations for the accuracies up to sign, e.g., $|a_i|, |a_j|, |a_k|$.  We explain how to recover signs with the \textsc{ResolveSigns} function in
\ifarxiv
Appendix \ref{subsubsec:resolvesigns}.
\else
Appendix C.1.5.
\fi

Formally, define $\Omega_G = \{v_i \in V: \exists \; v_j, v_k \; \text{s.t.} \; v_i \independent v_j | Y(i), v_j \independent v_k | Y(i), v_i \independent v_k | Y(i)\}$ to be the set of variables that can be grouped
into triplets in this way.
For each variable $v_i \in \Omega_G$, we can compute the accuracy $a_i$ by solving the system
$a_i a_j = \E{}{v_i v_j}, a_i a_k = \E{}{v_i v_k} , a_j a_k = \E{}{v_j v_k}.$
In many practical settings, $\Omega_G = V$, so the \textit{triplet method} of recovery applies to each $v_i$, motivating Algorithm~\ref{alg:triplet} (some examples of valid triplet groupings shown in Figure~\ref{fig:graph_structures}).
Note that variables can appear in multiple triplets, and variables do not necessarily need
to vote on the same task $Y(i)$ as long as they are conditionally independent given $Y(i)$.  
Different triplets give different accuracy values, so we compute accuracy values
from all possible triplets and use the mean or median over all triplets.
In cases where $\Omega_G$ is not equal to $V$, we supplement the triplet method with other independence properties to recover accuracies on more complex graphs, detailed in
\ifarxiv
Appendix \ref{subsec:extensions}.
\else
Appendix C.2.
\fi

\begin{algorithm}[t]
	\caption{Label Model Parameter Recovery}
	\begin{algorithmic}
		\STATE \textbf{Input:}
		$G_{dep}$, distribution prior $P(\bm{\bar{Y}})$, label matrix $L$.
        \ifsinglecolumn
        \else
        \vspace{-0em} 
        \fi
		\STATE Augment $G_{dep}$ and $L$ to generate $G = (V, E)$ with cliqueset $\mathcal{C}$ and augmented label matrix $\mathcal{L}$.
		\STATE Obtain set of variables $\Omega_G$ with solvable accuracies. 
		\STATE Compute mean parameters and estimate all $\hat{a}_i = \Ehat{v_i Y(i)}$ using Algorithm \ref{alg:triplet}.
		\FOR{clique $C \in \mathcal{C}$ of observed variables}		
			\STATE Compute $\hat{a}_C = \Ehat{\prod_{k \in C} v_k Y(C)}$ by factorizing into observable averages and mean parameters.  
			\STATE Map $\hat{a}_C$ in $G$ to $\hat{a}_{C_{dep}}$ in $G_{dep}$.
			\STATE Linearly transform $\hat{a}_{C_{dep}}$ to $\hat{\mu}_{C_{dep}}$.
		\ENDFOR	
		\RETURN Label model parameters $\bm{\hat{\mu}}$
	\end{algorithmic}
	\label{alg:full}
\end{algorithm}

\subsection{Mapping to the Label Model Parameters}
\label{subsec:model_mapping}
Now we map the mean parameters of our Ising model to label model parameters.
We use the mean parameters to compute relevant expectations over the set $\mathcal{C}$ of all cliques in $G$, map them to expectations over cliques $\mathcal{C}_{dep}$ in $G_{dep}$, and linearly transform them into label model parameters. Define $Y(C)$ as the hidden variable that the entire clique $C \in \mathcal{C}$ of observed variables acts on. Each expectation over a clique of observed variables $C$ and $Y(C)$, denoted $a_C := \E{}{\prod_{k \in C} v_k Y(C)}$, can be factorized in terms of the mean parameters and directly observable expectations
\ifarxiv
(Appendix \ref{subsubsec:large_cliques}).
\else
(Appendix C.1.2).
\fi
For instance, $v_i v_j \independent Y(i, j)$ for $(v_i, v_j) \in E$, such that $\E{}{v_i v_j Y(i, j)} = \E{}{v_i v_j} \cdot \E{}{Y(i, j)}$.

Next, we convert the expectations over cliques in $G$ back into expectations over cliques in $G_{dep}$. 
\ifsinglecolumn
Denote $a_{C_{dep}} \\ := \E{}{\prod_{k \in C_{dep}} \lf_k Y^{dep}(C_{dep})}$
\else
Denote $a_{C_{dep}} := \E{}{\prod_{k \in C_{dep}} \lf_k Y^{dep}(C_{dep})}$
\fi
for each source clique $C_{dep} \in \mathcal{C}_{dep}$; then, there exists a $C \in \mathcal{C}$ over $\{v_{2k - 1}\}_{k \in C_{dep}}$ such that
$a_C = \E{}{\prod_{k \in C_{dep}} v_{2k-1} Y^{dep}(C_{dep})} = a_{C_{dep}}$
\ifarxiv
(Appendix \ref{subsubsec:augment}).
\else
(Appendix C.1.3).
\fi

Finally, the label model parameters, which are marginal distributions over maximal cliques and separator sets, can be expressed as linear combinations of $a_{C_{dep}}$ and probabilities that can be estimated directly from the data. Below is an example of how to recover $\mu_i(a, b) = P(Y^{dep}(i) = a, \lf_i = b)$ from $\E{}{\lf_i Y^{dep}(i)}$:
\begin{align}
\begin{bsmallmatrix}
1 & 1 & 1 & 1 & 1 & 1 \\ 
1 & 0 & 1 & 0 & 1 & 0 \\
1 & 1 & 0 & 0 & 0 & 0 \\
1 & 0 & 0 & 0 & 0 & 1 \\
0 & 0 & 1 & 1 & 0 & 0 \\
0 & 0 & 1 & 0 & 0 & 0
\end{bsmallmatrix} \begin{bsmallmatrix}
\mu_i(1, 1) \\ \mu_i(-1, 1) \\ \mu_i(1, 0) \\ \mu_i(-1, 0) \\ \mu_i(1, -1) \\ \mu_i(-1, -1)
\end{bsmallmatrix} = \begin{bsmallmatrix}
1 \\ P(Y^{dep}(i) = 1) \\ P(\lf_i = 1) \\ P(\lf_i Y^{dep}(i) = 1) \\ P(\lf_i = 0) \\ P(\lf_i = 0, Y^{dep}(i) = 1)
\end{bsmallmatrix}.
\label{eq:maineq}
\end{align}
$P(\lf_i Y^{dep}(i) = 1)$ can be written as $\frac{1}{2}(\E{}{\lf_i Y^{dep}(i)} - P(\lf_i = 0) + 1)$ and $P(\lf_i = 0, Y^{dep}(i) = 1)$ is factorizable due to the construction of $G$, so all values on the right of \eqref{eq:maineq} are known, and we can solve for $\mu_i$. Extending this example to larger cliques requires computing more $a_C$ values and more directly estimatable probabilities; we detail the general case in 
\ifarxiv
Appendix \ref{subsubsec:prod_to_joint}.
\else
Appendix C.1.4.
\fi

\subsection{Weak Supervision in Online Learning}
\label{subsec:ext}

Now we discuss an extension to online learning.
Online learning introduces two challenges: first, samples are introduced one by
one, so we can only see each $\bm{X}^t$ once before discarding it;
second, online learning is subject to \textit{distributional drift},
meaning that the distribution $P_t$ each $(\bm{X}^t, \bm{Y}^t)$ is sampled from changes
over time.
Our closed-form approach is fast, both in terms of sample complexity and wall-clock time, and only requires computing the averages
of observable summary statistics, so we can learn $\bm{\mu}_t$ online with a
rolling window, interleaving label model estimation and end model training.
We describe this online variant of our method and how window size can be adjusted to
optimize for sampling noise and distributional drift in
\ifarxiv
Appendix \ref{subsec:online}.
\else
Appendix C.3.
\fi


\section{Theoretical Analysis}
\label{sec:theory}

In this section, we analyze our method for label model parameter recovery and provide bounds on its performance. 
First, we derive a $\mathcal{O}(1/\sqrt{n})$ bound for the sampling error $\|\bm{\hat{\mu}}- \bm{\mu}\|_2$ in Algorithm~\ref{alg:full}. Next, we show that this sampling error has a tight minimax lower bound for certain graphical models, proving that our method is information-theoretically optimal. Then, we present a generalization error bound for the end model that scales in the sampling error and a \textit{model misspecification} term, which exists when the underlying data distribution $\mathcal{D}$ cannot be represented with our graphical model. Lastly, we interpret these results, which are more fine-grained than prior weak supervision analyses, in terms of end model performance and label model tradeoffs. All proofs are provided in
\ifarxiv
Appendix \ref{sec:proofs}.
\else
Appendix D.
\fi

In
\ifarxiv
Appendix \ref{subsubsec:online_theory},
\else
Appendix C.3.1,
\fi
we give two further results for the online variant of the algorithm: selecting an optimal window size to minimize sampling error, and providing a guarantee on end model performance even in the presence of distributional drift, sample noise, and model misspecification.

\textbf{Sampling Error} 
We first control the error in estimating the label model parameters $\bm{\hat{\mu}}$. The noise comes from sampling in the empirical estimates of moments and probabilities used by Algorithm \ref{alg:full}.

\begin{theorem}
Let $\bm{\hat{\mu}}$ be an estimate of $\bm{\mu}$ produced by Algorithm \ref{alg:full} using $n$ unlabeled data points. Then, assuming that cliques in $G_{dep}$ are limited to $3$ vertices, 
\begin{align*}
\E{}{\| \bm{\hat{\mu}} - \bm{\mu} \|_2} \le \frac{1}{a^5_{\min}} \left( 3.19 C_1 \sqrt{\frac{m}{n}} + \frac{6.35 C_2}{\sqrt{r}} \frac{m}{\sqrt{n}}\right),
\end{align*}
where $a_{\min} > 0$ is a lower bound on the absolute value of the accuracies of the sources, and $r$ is the minimum frequency at which sources abstain, if they do so.
\label{thm:sampling_offline}
\end{theorem}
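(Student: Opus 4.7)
The plan is to decompose $\|\bm{\hat{\mu}} - \bm{\mu}\|_2$ along the contributions of the three stages of Algorithm \ref{alg:full}: (a) estimation error of the triplet accuracies $\hat{a}_i$ for each $v_i \in \Omega_G$, (b) error in the factorization of larger-clique expectations $\hat{a}_C$ into products of mean parameters and directly observable expectations, and (c) error in the linear transform of Equation \eqref{eq:maineq} that converts $\hat{a}_{C_{dep}}$ and observable probabilities into the marginals $\hat{\mu}_{C_{dep}}$. Because cliques in $G_{dep}$ are limited to $3$ vertices, the total number of marginals to estimate is $O(m)$, so once I control a per-clique expected error I can bound the $L_2$ norm by summing squared errors and applying Jensen's inequality.

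First I would control the empirical error in observable moments. For any $v_i, v_j$, the estimate $\hat{\mathbb{E}}[v_i v_j] = \frac{1}{n}\sum_t \mathcal{L}_{it}\mathcal{L}_{jt}$ is an average of $[-1,1]$-bounded random variables, so $\E{}{|\hat{\mathbb{E}}[v_i v_j] - \E{}{v_i v_j}|} \le C/\sqrt{n}$ by the standard variance bound and Jensen. Directly observable probabilities such as $P(\lf_i = 0)$ or $P(\lf_i = 0, \lf_j = 0)$ obey the same rate; those probabilities that are conditioned on an abstain event (or that appear on the right-hand side of \eqref{eq:maineq} normalized by an abstain frequency) have an effective sample size $nr$, yielding a $1/\sqrt{nr}$ factor that accounts for the second term of the bound.

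Second I would propagate these moment errors through the triplet recovery $\hat{a}_i = \sqrt{|\hat{\mathbb{E}}[v_iv_j]\,\hat{\mathbb{E}}[v_iv_k]\,/\,\hat{\mathbb{E}}[v_jv_k]|}$. A first-order Taylor expansion around the true moments, together with Proposition \ref{lemma:triplet} which ensures $\E{}{v_iv_j} = a_ia_j$ etc., shows $|\hat{a}_i - a_i|$ is bounded by a linear combination of the three moment errors with coefficients that are $O(1/a_{\min}^2)$ from differentiating the square root and reciprocal at $a_i a_j, a_i a_k, a_j a_k$ of magnitude $\ge a_{\min}^2$. I then propagate through the factorization of $\hat{a}_C$ for clique sizes up to $3$: each such factorization is a product of at most three bounded quantities, so by the identity $\prod \hat{x}_\ell - \prod x_\ell = \sum_\ell (\hat{x}_\ell - x_\ell)\prod_{\ell' \ne \ell} x_{\ell'} + \text{higher order}$, the first-order term contributes $\sqrt{m/n}$ after summing and the second-order cross term contributes $m/\sqrt{n}$ (since cross products of $m$ independent $O(1/\sqrt{n})$ errors accumulate differently in $L_2$). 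Finally, the linear transform \eqref{eq:maineq} has a fixed inverse of bounded norm, and dividing by $a_{\min}^2$ or $a_{\min}^3$ in intermediate factorizations of larger cliques yields the cumulative $a_{\min}^{-5}$ out front.

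The main obstacle is the careful bookkeeping of the $a_{\min}^{-1}$ factors: tracking exactly how the square-root step, the factorization of clique expectations, and the inversion of the \eqref{eq:maineq} system compound to give the exponent $5$, and showing that no accumulation occurs across cliques beyond the $O(m)$ count. A second subtlety is converting from per-coordinate expected absolute errors to $\E{}{\|\bm{\hat{\mu}} - \bm{\mu}\|_2}$; I would apply Jensen to the sum of squared coordinate errors, which gives the $\sqrt{m}$ factor in the first term and retains the $m$ factor in the second once the quadratic cross terms are taken into account. The remaining work is bookkeeping of absolute constants ($3.19$ and $6.35$), which come from the Hoeffding-type variance bound together with the Taylor remainders.
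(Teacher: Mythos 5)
Your high-level decomposition matches the paper's: concentration of observable moments, propagation through the triplet formula, and a fixed linear system mapping to the marginals, with abstain-conditioned quantities estimated on an effective sample of size $rn$ (which is indeed where the $1/\sqrt{r}$ comes from). However, the quantitative accounting that produces the stated bound is wrong in three places, and following your plan as written would not yield the theorem. First, the exponent $5$ on $a_{\min}$ comes \emph{entirely} from the triplet step, not from compounding across the factorization and linear-transform stages. The sensitivity of $\hat{a}_i = \sqrt{|\hat{M}_{ij}\hat{M}_{ik}/\hat{M}_{jk}|}$ to the moment $\hat{M}_{jk}$ is of order $\frac{1}{a_{\min}}\cdot\frac{1}{a_{\min}^4}$ (one factor of $a_{\min}^{-1}$ from $|\hat{a}_i+a_i|\ge 2a_{\min}$ via the reverse triangle inequality, and $a_{\min}^{-4}$ from the product of the two denominators $|\hat{M}_{jk}|,|M_{jk}|\ge a_{\min}^2$), so your claimed $O(1/a_{\min}^2)$ coefficient is incorrect; the paper's Lemma on $\E{}{\|\hat a - a\|_2}$ gets $\frac{1}{a_{\min}^5}\sqrt{m/n}$ directly from a telescoping bound on the ratio, with no Taylor remainder to control. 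The downstream clique factorizations (e.g.\ $a_C = \E{}{\prod v_k}\cdot\E{}{Y(C)}$ or division by $\E{}{Y(C)}$) involve the class prior, not the accuracies, and contribute no further powers of $a_{\min}$.

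Second, the $m/\sqrt{n}$ scaling of the second term is not a ``quadratic cross term'' effect from expanding products of estimates; it is a counting effect. The pairwise marginals $\mu_{ij}$ range over the edges of $G_{dep}$, of which there can be $\Theta(m^2)$ in the worst case (one large conditionally dependent subgraph), so the Frobenius/$L_2$ norm over those entries scales as $\sqrt{m^2/n}=m/\sqrt{n}$; likewise each single-source quantity such as $\|\hat p(0)-p(0)\|_2$ appears in up to $m$ edge systems, picking up a $\sqrt{m}$ factor. Third, the constants $3.19$ and $6.35$ are not Hoeffding or Taylor-remainder constants: they are the operator norms $\|A_1^{-1}\|_2$ and $\|A_2^{-1}\|_2$ of the fixed linear systems that convert the right-hand-side probability vectors into the singleton and pairwise marginals. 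You also leave unaddressed the sign-recovery assumption ($\mathrm{sign}(\hat a_i)=\mathrm{sign}(a_i)$) and the need for $n$ large enough that the empirical moments are bounded away from zero, both of which the paper states explicitly before the telescoping argument is valid.
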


If no sources abstain, $\sqrt{r}$ is not present in the bound. For higher-order cliques, the error scales in $m$ with the size of the largest clique. In the case of full conditional independence, only the first term in the bound is present, so the error scales as $\mathcal{O}\left(\sqrt{\frac{m}{n}}\right)$.

\textbf{Optimality} We show that our method is sample optimal in both $n$ and $m$ up to constant factors for certain graphical models.
We bound the minimax risk for the parameter estimates to be $\Omega \left(\frac{m}{\sqrt{n}} \right)$ via Assouad's Lemma \cite{Yu1997}. This bound holds for any binary Ising model used in our framework, but in particular it is tight when our observed variables are all conditionally independent and do not abstain.  

\begin{theorem}
Let
$\mathcal{P} = \Big\{P(Y, \bm{v}) = \frac{1}{Z} \exp \big( \theta_Y Y + \sum_{i = 1}^m \theta_i v_i Y \big), 
\theta \in \mathbb{R}^{m + 1} \Big\}$
be a family of distributions.
Using $L_2$ norm estimation of the minimax risk, the sampling error is
lower bounded as
\begin{align*}
\inf_{\bm{\hat{\mu}}} \sup_{P \in \mathcal{P}} &\E{P}{||\bm{\hat{\mu}} - \bm{\mu}(P)||_2} \ge \frac{e_{min}}{8} \sqrt{\frac{m}{n}}.
\end{align*}
Here $\bm{\mu}(P)$ is the set of label model parameters corresponding to a distribution $P$, and $e_{min}$ is the minimum eigenvalue of $\Cov{}{Y, \bm{v}}$ for distributions in $\mathcal{P}$.

\label{thm:lower_bound}
\end{theorem}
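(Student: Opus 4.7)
The plan is to apply Assouad's Lemma to a hypercube-indexed subfamily of $\mathcal{P}$, using the exponential-family structure to relate perturbations of the canonical parameters $\bm{\theta}$ to perturbations of $\bm{\mu}(P)$, with the eigenvalue $e_{\min}$ governing that correspondence.

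As a preliminary, I would record the standard exponential-family identities for $\mathcal{P}$: with sufficient statistic $T(Y, \bm{v}) = (Y, v_1 Y, \ldots, v_m Y)$ and log-partition $A(\bm{\theta})$, the map $\bm{\theta} \mapsto \E{}{T}$ has Jacobian $\nabla^2 A(\bm{\theta}) = \Cov{}{T}$. A short calculation using $Y^2 = v_i^2 = 1$ shows that $\Cov{}{T}$ is linearly related to $\Cov{}{Y, \bm{v}}$, so their minimum eigenvalues coincide up to a universal factor; in particular, $e_{\min}$ lower-bounds the Lipschitz modulus of the map $\bm{\theta} \mapsto \E{}{T}$, and then of $\bm{\theta} \mapsto \bm{\mu}(P)$, since $\bm{\mu}(P)$ is a linear function of $\E{}{T}$ and the fixed prior $P(\bm{\bar{Y}})$ (Section~\ref{subsec:model_mapping}).

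I would then construct the packing $\{P_\tau : \tau \in \{-1,+1\}^m\} \subset \mathcal{P}$ by $\bm{\theta}(\tau) = \bm{\theta}_0 + \delta \tau$, perturbing only the $m$ accuracy parameters and fixing $\theta_Y$. For neighbors $\tau, \tau'$ differing in a single coordinate, the mean-value theorem plus the Jacobian calculation gives the per-flip separation
\begin{align*}
\|\bm{\mu}(P_\tau) - \bm{\mu}(P_{\tau'})\|_2 \ge 2 \delta\, e_{\min}.
\end{align*}
To bound the corresponding KL divergence, I would use the Bregman-divergence form for exponential families together with the fact that the diagonal entries of $\Cov{}{T}$ are at most $1$ (since $|v_i Y| \le 1$), obtaining $\mathrm{KL}(P_\tau \| P_{\tau'}) \le 2\delta^2$; over $n$ i.i.d.\ samples this tensorizes to $\mathrm{KL}(P_\tau^{\otimes n} \| P_{\tau'}^{\otimes n}) \le 2 n \delta^2$, and Pinsker yields $\mathrm{TV}(P_\tau^{\otimes n}, P_{\tau'}^{\otimes n}) \le \delta \sqrt{n}$. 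Choosing $\delta = 1/(2\sqrt{n})$ holds this TV below $1/2$ while keeping the per-flip separation at $e_{\min}/\sqrt{n}$.

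Finally, I would apply the $L_1$-form of Assouad's Lemma (equivalently, $m$ parallel Le Cam two-point tests, one per coordinate of $\tau$) to obtain an $L_1$ lower bound of order $m \cdot e_{\min}/\sqrt{n}$ on $\E{}{\|\bm{\hat\mu} - \bm{\mu}\|_1}$, and then convert to $L_2$ via $\|x\|_1 \le \sqrt{\dim \bm{\mu}}\, \|x\|_2$ with $\dim \bm{\mu} = O(m)$; careful bookkeeping through Pinsker and Assouad collapses the constants to the stated $\frac{e_{\min}}{8}\sqrt{m/n}$. The main obstacle will be the first step: explicitly relating $\Cov{}{T}$ (which governs the exponential-family Jacobian) to $\Cov{}{Y, \bm{v}}$ (which appears in the theorem statement) and verifying that the linear map to $\bm{\mu}(P)$ preserves the eigenvalue factor, so that the final constant can be pinned down as $1/8$ rather than a generic $\Omega(1)$.
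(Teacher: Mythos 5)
Your proposal follows the same overall architecture as the paper's proof: a hypercube packing $\bm{\theta}(\tau) = \bm{\theta}_0 + \delta\tau$ of the accuracy parameters with $\theta_Y$ fixed, Assouad's Lemma, a KL bound of $2\delta^2$ between neighboring vertices, Pinsker plus tensorization, the choice $\delta = 1/(2\sqrt{n})$, and a conversion between canonical and mean parameters governed by $e_{\min}$ via strong convexity of the log-partition function. Two steps differ in execution. First, you bound the neighbor KL by the Bregman/Hessian form, $KL(P_\tau\|P_{\tau'}) = \tfrac12(2\delta)^2[\nabla^2A(\xi)]_{ii} \le 2\delta^2$ since $\mathrm{Var}(v_iY)\le 1$; the paper instead computes $KL = 2\delta\bigl(\tfrac{e^\delta}{\cosh\delta}-1\bigr)$ explicitly with $\cosh$ identities and then shows this is $\le 2\delta^2$ by concavity. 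Your route is shorter and generalizes beyond this specific family, at the cost of invoking the second-order Taylor identity; both land on the same constant. Second, you convert $\bm{\theta}\to\bm{\mu}$ \emph{before} running Assouad, whereas the paper runs Assouad entirely in $\bm{\theta}$-space (where the Hamming separation is trivially $\|\bm{\theta}-\bm{\theta}_\tau\|_2 \ge \tfrac{\delta}{\sqrt m}\sum_j\mathbf{1}\{\mathrm{sign}(\theta_j)\ne\tau_j\}$ via Cauchy--Schwarz) and converts only at the very end using its Lemma on Fenchel duality. Your ordering hides a subtlety: a single-coordinate flip in $\bm{\theta}$ moves $\bm{\mu}$ by at least $2\delta e_{\min}$ in norm, but that displacement is spread across all coordinates of $\bm{\mu}$, so the per-coordinate Hamming separation needed for $m$ parallel two-point tests in $\bm{\mu}$-space does not follow directly; the clean fix is exactly the paper's ordering. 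Finally, the obstacle you flag --- that the exponential-family Jacobian is $\Cov{}{(Y, v_1Y,\ldots,v_mY)}$ rather than $\Cov{}{Y,\bm{v}}$ as written in the theorem statement --- is real, but the paper's own proof silently makes the same identification, so you are not missing anything the paper supplies.
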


\textbf{Generalization Bound} We provide a bound quantifying the performance gap between the end model parametrization that uses outputs of our label model and the optimal end model parametrization over the true distribution of labels.

Let $P_{\bm{\hat{\mu}}}(\cdot | \bm{\lf})$ be the probabilistic output of our learned label model parametrized by $\bm{\hat{\mu}}$ given some source labels $\bm{\lf}$. Define a loss function $L(w, \bm{X}, \bm{Y}) \in [0, 1]$, where $w$ parametrizes the end model $f_w \in \mathcal{F}: \mathcal{X} \rightarrow \mathcal{Y}$, and choose $\hat{w}$ such that  
\begin{align*}
\hat{w} = \argmin{w}{\frac{1}{n} \sum_{i = 1}^n \E{\bm{\widetilde{Y}} \sim P_{\bm{\hat{\mu}}}(\cdot | \bm{\lf}(\bm{X}^i))}{L(w, \bm{X}^i, \bm{\widetilde{Y}})}}.
\end{align*}

While previous approaches \cite{Ratner19} make the strong assumption that there exists some $\bm{\mu}$ such that sampling $(\bm{X}, \bm{\widetilde{Y}})$ from $P_{\bm{\mu}}$
is equivalent to sampling from $\mathcal{D}$, 
our generalization error bound accounts for potential model misspecification:

\begin{theorem}
Let $w^* = \argmin{w}{\E{(\bm{X}, \bm{Y}) \sim \mathcal{D}}{L(w, \bm{X}, \bm{Y})}}$. There exists a $\hat{w}$ computed from the outputs of our label model such that the generalization error for $\bm{Y}$ satisfies
\ifsinglecolumn
\begin{align*}
\E{\mathcal{D}}{L(\hat{w}, \bm{X}, \bm{Y}) - L(w^*, \bm{X}, \bm{Y})} 
\le  \; \gamma(n) + \frac{8 |\mathcal{Y}|}{e_{min}} ||\bm{\hat{\mu}} - \bm{\mu}||_2 + \delta(\mathcal{D}, P_{\bm{\mu}}),
\end{align*}
\else
\begin{align*}
&\E{\mathcal{D}}{L(\hat{w}, \bm{X}, \bm{Y}) - L(w^*, \bm{X}, \bm{Y})} \\
&\qquad \le  \; \gamma(n) + \frac{8 |\mathcal{Y}|}{e_{min}} ||\bm{\hat{\mu}} - \bm{\mu}||_2 + \delta(\mathcal{D}, P_{\bm{\mu}}),
\end{align*}
\fi
where $\delta(\mathcal{D}, P_{\bm{\mu}}) = 2\sqrt{2 \, KL (\mathcal{D}(\bm{Y} | \bm{X}) \;|| \;P_{\bm{\mu}}(\bm{Y} | \bm{X}))}
$, $e_{min}$ is the minimum eigenvalue of $\Cov{}{\bm{Y}, \bm{v}}$ over the construction of the binary Ising model, and $\gamma(n)$ is a decreasing function that bounds the error from performing empirical risk minimization to learn $\hat{w}$.
\label{thm:gen_offline}
\end{theorem}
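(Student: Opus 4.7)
The plan is to decompose the excess risk $\E_{\mathcal{D}}[L(\hat{w},\bm{X},\bm{Y}) - L(w^*,\bm{X},\bm{Y})]$ into three conceptually distinct pieces: an empirical-risk-minimization (ERM) gap producing $\gamma(n)$, an estimation gap between $P_{\bm{\hat{\mu}}}$ and $P_{\bm{\mu}}$ producing the $\|\bm{\hat{\mu}} - \bm{\mu}\|_2$ term, and a misspecification gap between $P_{\bm{\mu}}$ and $\mathcal{D}$ producing $\delta(\mathcal{D}, P_{\bm{\mu}})$. Concretely, I introduce the auxiliary risks $R(w)=\E_{\mathcal{D}}[L]$, $R_{\bm{\mu}}(w)=\E_{P_{\bm{\mu}}}[L]$, $\hat{R}_{\bm{\mu}}(w)=\frac{1}{n}\sum_i \E_{\widetilde{\bm{Y}}\sim P_{\bm{\mu}}(\cdot|\bm{\lf}(\bm{X}^i))}[L(w,\bm{X}^i,\widetilde{\bm{Y}})]$, and the analogous $\hat{R}_{\bm{\hat{\mu}}}$, and then telescope
\begin{equation*}
R(\hat{w}) - R(w^*) = \bigl[R - R_{\bm{\mu}}\bigr](\hat{w}) + \bigl[R_{\bm{\mu}} - \hat{R}_{\bm{\mu}}\bigr](\hat{w}) + \bigl[\hat{R}_{\bm{\mu}} - \hat{R}_{\bm{\hat{\mu}}}\bigr](\hat{w}) + \bigl[\hat{R}_{\bm{\hat{\mu}}}(\hat{w}) - \hat{R}_{\bm{\hat{\mu}}}(w^*)\bigr] + \bigl[\hat{R}_{\bm{\hat{\mu}}} - \hat{R}_{\bm{\mu}}\bigr](w^*) + \bigl[\hat{R}_{\bm{\mu}} - R_{\bm{\mu}}\bigr](w^*) + \bigl[R_{\bm{\mu}} - R\bigr](w^*).
\end{equation*}
The fourth bracket is $\le 0$ by the definition of $\hat{w}$; the remaining three pairs match the three terms in the theorem.

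\textbf{Misspecification pair.} Since $L \in [0,1]$ and the $\bm{X}$-marginal is shared, each of $|[R-R_{\bm{\mu}}](w)|$ is at most $\E_{\bm{X}}[\,\mathrm{TV}(\mathcal{D}(\bm{Y}|\bm{X}),P_{\bm{\mu}}(\bm{Y}|\bm{X}))\,]$. Applying Pinsker's inequality pointwise and Jensen's inequality yields the conditional-KL bound $\sqrt{KL(\mathcal{D}(\bm{Y}|\bm{X})\|P_{\bm{\mu}}(\bm{Y}|\bm{X}))/2}$; summing the two occurrences (at $\hat{w}$ and at $w^*$) and absorbing constants gives $\delta(\mathcal{D},P_{\bm{\mu}})$.

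\textbf{ERM pair.} The differences $[R_{\bm{\mu}}-\hat{R}_{\bm{\mu}}](\hat{w})$ and $[\hat{R}_{\bm{\mu}}-R_{\bm{\mu}}](w^*)$ are handled by a standard uniform-convergence argument over $\mathcal{F}$ under the distribution $P_{\bm{\mu}}$ (Rademacher or VC bounds), yielding a decreasing function $\gamma(n)$.

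\textbf{Estimation pair and main obstacle.} The terms $[\hat{R}_{\bm{\mu}} - \hat{R}_{\bm{\hat{\mu}}}](w)$ require bounding $|\E_{P_{\bm{\mu}}(\cdot|\bm{\lf})}[L] - \E_{P_{\bm{\hat{\mu}}}(\cdot|\bm{\lf})}[L]|$ by $\mathrm{TV}(P_{\bm{\mu}}(\cdot|\bm{\lf}),P_{\bm{\hat{\mu}}}(\cdot|\bm{\lf}))$, which must in turn be controlled by $\|\bm{\hat{\mu}} - \bm{\mu}\|_2$. This is the hard part. My plan is to express the joint $P_{\bm{\mu}}(\bm{Y},\bm{\lf})$ via the junction-tree formula \eqref{eq:inf}, so that it is a rational function of entries of $\bm{\mu}$; a first-order expansion then gives $|P_{\bm{\hat{\mu}}}(y,\bm{\lf}) - P_{\bm{\mu}}(y,\bm{\lf})| \le C \|\bm{\hat{\mu}} - \bm{\mu}\|_2$ by Cauchy--Schwarz applied to the gradient. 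Summing over the $|\mathcal{Y}|$ values of $y$ and dividing by $P_{\bm{\mu}}(\bm{\lf})$ to convert joint to conditional produces the $|\mathcal{Y}|/e_{\min}$ prefactor, where $e_{\min}$ (the minimum eigenvalue of $\Cov{}{\bm{Y},\bm{v}}$) provides the uniform lower bound on the denominator through a standard linear-algebraic identity relating the covariance spectrum to the feasible set of mean parameters. Combining the two occurrences (at $\hat{w}$ and $w^*$) yields the $\tfrac{8|\mathcal{Y}|}{e_{\min}}\|\bm{\hat{\mu}}-\bm{\mu}\|_2$ term. The principal difficulty is (i) verifying that the rational-function gradient is uniformly bounded in the feasible parameter region and (ii) pinning down the exact constant $8/e_{\min}$; I expect this to follow from the same covariance-inversion argument used in the minimax lower bound of Theorem~\ref{thm:lower_bound}.
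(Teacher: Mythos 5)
Your high-level decomposition is the same as the paper's: isolate a misspecification gap between $\mathcal{D}$ and $P_{\bm{\mu}}$, an ERM gap under $P_{\bm{\mu}}$, and an estimation gap between $P_{\bm{\mu}}$ and $P_{\bm{\hat{\mu}}}$. Your misspecification pair is handled exactly as in the paper (loss in $[0,1]$, total variation, Pinsker, Jensen), and the ERM pair is standard in both treatments. The problem is the estimation pair, which you correctly identify as the hard part but then resolve by a mechanism that does not work. You propose to view $P_{\bm{\mu}}(\bm{Y},\bm{\lf})$ as a rational function of the entries of $\bm{\mu}$ via the junction-tree formula, take a first-order expansion, and obtain the prefactor $1/e_{\min}$ by claiming that $e_{\min}$, the minimum eigenvalue of the covariance of $(\bm{Y},\bm{v})$, lower-bounds the denominator $P_{\bm{\mu}}(\bm{\lf})$ through ``a standard linear-algebraic identity relating the covariance spectrum to the feasible set of mean parameters.'' No such identity exists: the spectrum of the covariance matrix does not control how small a marginal probability of a source configuration can be, and the gradient of the junction-tree rational function blows up as separator-set marginals approach zero independently of $e_{\min}$. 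So the step that is supposed to produce the stated constant $\tfrac{8|\mathcal{Y}|}{e_{\min}}$ is deferred to a fact that is false as stated.

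The place where $e_{\min}$ actually enters is different, and it is the one substantive modification the paper makes to the argument it inherits from \citet{Ratner19}. Because the label model is an exponential family, the log-partition function $A(\theta)$ has Hessian equal to the covariance of the sufficient statistics, hence is $e_{\min}$-strongly convex; by Fenchel duality its conjugate has $\tfrac{1}{e_{\min}}$-Lipschitz gradient, which is precisely the inverse map from mean to canonical parameters. This gives $\|\hat{\theta}-\theta\|\le \tfrac{1}{e_{\min}}\|\bm{\hat{\mu}}-\bm{\mu}\|_2$ (Lemma~\ref{lemma:fenchel}, the same lemma used in Theorem~\ref{thm:lower_bound}). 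One then controls the change in the \emph{conditional} distribution $P(\cdot\mid\bm{\lf})$ in terms of $\|\hat{\theta}-\theta\|$ directly, exploiting that the log-density is linear in $\theta$ (this is the Ratner et al.\ step that produces the $8|\mathcal{Y}|$), and composes the two bounds. If you replace your junction-tree expansion with this canonical-parameter detour, the rest of your telescoping argument goes through and recovers the theorem; as written, however, the estimation pair has a genuine gap.
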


\textbf{Interpreting the Bounds} The generalization error in Theorem \ref{thm:gen_offline} has two components, involving the noise awareness of the model and the model misspecification. Using the sampling error result, the first two terms $\gamma(n)$ and $||\bm{\hat{\mu}} - \bm{\mu}||_2$ scale in $\mathcal{O}(1/\sqrt{n})$, which can be tight by Theorem~\ref{thm:lower_bound} and is the same asymptotic rate as supervised approaches.

The third term $\delta(\mathcal{D}, P_{\bm{\mu}})$ is a divergence between our model and $\mathcal{D}$. Richer models can represent more distributions and have a smaller KL term, but may suffer a higher sample complexity. This tradeoff suggests the importance of selecting an appropriately constrained graphical model in practice.


\section{Evaluation}
\label{sec:evaluation}


\begin{table*}[t]
    \centering
    \tiny
    \begin{tabular}{@{}rlccccccccccccc@{}}
        \toprule
        \multicolumn{5}{c}{} & \multicolumn{5}{c}{\textbf{End Model Performance (F1), Label Model Training Time (s)}}                                                                               & \multicolumn{4}{c}{\textbf{Lift, Speedup}} \\
        \cmidrule(l){6-10} \cmidrule(l){11-14} 
        & \textbf{Task}                & $D$                & $m$                 & \textbf{Prop}         & \textbf{TS}     & \textbf{MV}    & \textbf{DP}    & \textbf{SDP}            & \textbf{\sysx\ (l.m. in paren.)} & \textbf{TS} & \textbf{MV} & \textbf{DP} & \textbf{SDP}  \\
        \midrule
        \parbox[t]{0mm}{\multirow{7}{*}{\rotatebox[origin=c]{90}{\textbf{Benchmarks}}}}
        & \multirow{2}{*}{\spouse}     & \multirow{2}{*}{1} & \multirow{2}{*}{9}  & \multirow{2}{*}{0.07} & 20.4 $\pm$ 0.2  &19.3 $\pm$ 0.01 & 44.7 $\pm$ 1.7 & --                      & \textbf{49.6 $\pm$ 2.4} (47.0)   & +29.3       & +30.3       & +4.9        & --            \\
        &                              &                    &                     &                       & --              & --             & 7.5 $\pm$ 0.9  & --                      & \textbf{0.017 $\pm$ 0.003}       & --          & --          & 440$\times$ & --            \\
        \cmidrule(l){2-14}
        & \multirow{2}{*}{\spam}       & \multirow{2}{*}{1} & \multirow{2}{*}{10} & \multirow{2}{*}{0.49} & 91.5            & 88.3           & 91.8           & --                      & \textbf{92.3} (89.1)             & +0.8        & +4.0        & +0.5        & --            \\
        &                              &                    &                     &                       & --              & --             & 0.76 $\pm$ 0.1 & --                      & \textbf{0.014 $\pm$ 0.002}       & --          & --          & 54$\times$  & --            \\
        \cmidrule(l){2-14}
        & \multirow{2}{*}{\weather}    & \multirow{2}{*}{1} & \multirow{2}{*}{103}& \multirow{2}{*}{0.53} & 74.6            & 87.3           & 87.3           & --                      & \textbf{88.9} (77.6)             & +14.3       & +1.6        & +1.6        & --            \\
        &                              &                    &                     &                       & --              & --             & 0.78 $\pm$ 0.1 & --                      & \textbf{0.150 $\pm$ 0.03}        & --          & --          & 5.2$\times$ & --            \\
        \midrule
        \parbox[t]{0mm}{\multirow{10}{*}{\rotatebox[origin=c]{90}{\textbf{Video Analysis}}}}
        & \multirow{2}{*}{\interview}  & \multirow{2}{*}{6} & \multirow{2}{*}{24} & \multirow{2}{*}{0.03} & 80.0 $\pm$ 3.4  & 58.0 $\pm$ 5.3 &  8.7 $\pm$ 0.2 & 92.0 $\pm$ 2.2          & 91.9 $\pm$ 1.6 (\textbf{93.0})   & +11.9       & +33.9       & +83.2       & -0.1          \\
        &                              &                    &                     &                       & --              & --             & 31.5 $\pm$ 1.0 & 256.6 $\pm$ 5.4         & \textbf{0.423 $\pm$ 0.04}        & --          & --          &74.5$\times$ &   607$\times$ \\
        \cmidrule(l){2-14}
        & \multirow{2}{*}{\commercial} & \multirow{2}{*}{6} & \multirow{2}{*}{24} & \multirow{2}{*}{0.32} & 90.9 $\pm$ 1.0  & 91.8 $\pm$ 0.2 & 90.5 $\pm$ 0.4 & 89.8 $\pm$ 0.5          & \textbf{92.3 $\pm$ 0.4} (88.4)   & +1.4        & +0.5        & +1.8        & +2.5          \\
        &                              &                    &                     &                       & --              & --             & 23.3 $\pm$ 1.0 & 265.6 $\pm$ 6.2         & \textbf{0.067 $\pm$ 0.01}        & --          & --          & 350$\times$ & 4,000$\times$ \\
        \cmidrule(l){2-14}
        & \multirow{2}{*}{\tennis}     & \multirow{2}{*}{14}& \multirow{2}{*}{84} & \multirow{2}{*}{0.34} & 57.6 $\pm$ 3.4  & 80.2 $\pm$ 1.0 & 82.5 $\pm$ 0.3 & 80.6 $\pm$ 0.7          & \textbf{82.8 $\pm$ 0.4} (82.0)   & +25.2       & +2.6        & +0.3        & +2.2          \\
        &                              &                    &                     &                       & --              & --             & 41.1 $\pm$ 1.9 & 398.4 $\pm$ 7.5         & \textbf{0.199 $\pm$ 0.04}        & --          & --          & 210$\times$ & 2,000$\times$ \\
        \cmidrule(l){2-14}
        & \multirow{2}{*}{\basketball} & \multirow{2}{*}{8} & \multirow{2}{*}{32} & \multirow{2}{*}{0.12} & 26.8 $\pm$ 1.3  & 8.1 $\pm$ 5.4  & 7.7 $\pm$ 3.3  & \textbf{38.2 $\pm$ 4.1} & 37.9 $\pm$ 1.9 (27.9)            & +11.1       & +29.8       & +30.2       & -0.3          \\
        &                              &                    &                     &                       & --              & --             & 28.7 $\pm$ 2.0 & 248.6 $\pm$ 7.7         & \textbf{0.092 $\pm$ 0.03}        & --          & --          & 310$\times$ & 2,700$\times$ \\
        \bottomrule \\
        \vspace{-3em}
        \end{tabular}
        \caption{
        \sysx\ performance in terms of F1 score (first row of each task), and
        label model training time in seconds (second row).
        We report mean $\pm$ standard deviation across five random weight
        initializations of the end model (except for \spam\ and \weather, which
        use logistic regression).
        Improvement in terms of mean end model lift, speedup in terms of mean
        runtime.
        We compare \sysx's end model and label model (label
        model in parentheses) against traditionally supervised (TS) end models trained on the labeled dev set,
        majority vote (MV), data programming (DP) and sequential
        data programming (SDP).
        $D$: number of related elements modeled (contiguous sequences of frames
        for video tasks).
        $m$: number of supervision sources.
        Prop: proportion of positive examples.
        }
    \label{table:offline}
\end{table*}

The primary goal of our evaluation is to validate that \sysx\ can achieve the
same or higher quality as state-of-the-art weak supervision frameworks 
(Section~\ref{sec:eval_quality}) while
learning label model parameters orders of magnitude faster
(Section~\ref{sec:eval_speedup}).
We also evaluate the online extension and discuss how online learning can be
preferable to offline learning in the presence of distributional shift over
time (Section~\ref{sec:eval_online}).

\paragraph{Datasets}
We evaluate \sysx\ on three benchmark datasets and four video analysis tasks.
Each dataset consists of a large (\num{187--64,130}) unlabeled training set, a
smaller (\num{50--9,479}) hand-labeled \textit{development set}, and a held-out
test set.
We use the unlabeled training set to train the label model and end model, and
use the labeled development set for a) training a
traditional supervision baseline, and b) for hyperparameter tuning of the label and end
models.
More details about each task and the experiments in 
\ifarxiv
Appendix \ref{sec:extexp}.
\else
Appendix E.
\fi

\textit{Benchmark Tasks.}
We draw three benchmark weak supervision datasets from a previous
evaluation of a state-of-the-art weak supervision framework~\cite{Ratner18}.
\spouse\ seeks to identify mentions of spouse relationships in
a set of news articles~\cite{corney2016million},
\spam\ classifies whether YouTube comments are spam~\cite{alberto2015tubespam},
and \weather\ is a weather sentiment task from
Crowdflower~\cite{CrowdflowerWeather}.

\textit{Video Analysis Tasks.}
We use video analysis as another driving task: video data is large and
expensive to label, and modeling temporal dependencies is
important for quality but introduces significant slowdowns in label model parameter
recovery~\cite{sala2019multiresws}.
\interview\ and \commercial\ identify interviews with Bernie Sanders
and commercials in a corpus of TV news, respectively~\cite{fu2019rekall,
InternetArchive}.
\tennis\ identifies tennis rallies during a match from broadcast footage.
\basketball\ identifies basketball videos in a subset of
ActivityNet~\cite{caba2015activitynet}.

\subsection{Quality}
\label{sec:eval_quality}
We now validate that end models trained with labels generated by \sysx\
achieve the same or higher quality as previous state-of-the-art weak
supervision frameworks.
We also discuss the relative performance of \sysx's label model compared to the
end model, and ablations of our method.

\paragraph{End Model Quality}
To evaluate end model quality, we use \sysx\ to generate labels for the
unlabeled training set and compare the end models trained with these labels
against four baselines:
\begin{enumerate}[itemsep=0.5pt,topsep=0pt]
    \item \textit{Traditional Supervision} \textbf{[TS]}: We train the end
    model using the small hand-labeled development set.
    \item \textit{Majority Vote} \textbf{[MV]}: We generate training labels
    over the unlabeled training set using majority vote.
    \item \textit{Data Programming} \textbf{[DP]}: We use data programming, a
    state-of-the-art weak supervision framework that models each data point
    separately~\cite{Ratner19}.
    \item \textit{Sequential Data Programming} \textbf{[SDP]}:
    For the video tasks, we also use a state-of-the-art sequential weak
    supervision framework, which models sequences of frames~\cite{sala2019multiresws}.
\end{enumerate}
Table~\ref{table:offline} shows our results.
We achieve the same or higher end model quality compared to previous weak
supervision frameworks.
Since \sysx\ does not rely on SGD to learn
label model parameters, there are fewer hyperparameters to tune, which can help
us achieve higher quality than previous reported results.

\paragraph{Label Model vs. End Model Performance}
Table~\ref{table:offline} also shows the performance of \sysx's label
model.
In four of the seven tasks, the end model outperforms the label
model, since it can learn new features directly from the input data that are
not available to the noisy sources.
For example, the sources in the \commercial\ task rely on simple visual
heuristics like the presence of black frames (in our dataset,
commercials tend to be book-ended on either side by black frames); the end
model, which is a deep network, is able to pick up on subtler features over the
pixel space.
In three tasks, however, the label model nearly matches or slightly outperforms
the end model.
In these cases, the sources have access to features that are difficult
for an end model to learn with the amount of unlabeled data available.
For example, the sources in the \interview\ task rely on an identity
classifier that has learned to identify Bernie Sanders from thousands of
examples.

\paragraph{Ablations}
We describe the results of two ablation studies (detailed results in
\ifarxiv
Appendix \ref{sec:supp_ablation}).
\else
Appendix E.4).
\fi
In the first study, we replace abstentions with random votes instead of
augmenting $G_{dep}$.
This results in a
degradation of \avgliftrandomabstains\ points, demonstrating the importance of
allowing supervision sources to abstain.
In the second study, we examine the effect of using individual triplet
assignments instead of taking the median or mean over all possible assignments.
On average, taking random assignments results in a degradation of \avgliftsingletriplet\
points compared to taking an aggregate.
Furthermore, there is a large degree of variance in label model performance
when using individual triplet assignments.
While the best assignments can match \sysx, bad assignments result in
significantly worse performance.

\subsection{Speedup}
\label{sec:eval_speedup}
We now evaluate the speedup that \sysx\ provides over previous weak
supervision frameworks.
Table~\ref{table:offline} shows measurements of how long it takes to train each
label model.
Since \sysx\ learns source accuracies and correlations with a closed-form
solution, it runs orders of magnitude faster than previous weak supervision
frameworks, which rely on multiple iterations of stochastic gradient descent
and thus scale superlinearly in the data.
Speedup varies due to the optimal number of iterations for DP and SDP, which
are SGD-based (number of iterations is tuned for accuracy), but \sysx\ runs
up to \maxspeedupdpbench\ times
faster than data programming on benchmark tasks, and up to \maxspeedupdugong\
times faster than sequential data programming on the video tasks (where
modeling sequential dependencies results in much slower performance).


\begin{table}[t]
    \centering
    \ifsinglecolumn
    \else
    \tiny
    \fi
    \begin{tabular}{@{}lccccc@{}}
        \toprule
        \multicolumn{1}{c}{} & \multicolumn{3}{c}{\textbf{Streaming End Model (F1)}}         & \multicolumn{2}{c}{\textbf{Improvement}} \\
        \cmidrule(l){2-4} \cmidrule(l){5-6} 
        \textbf{Task}        & \textbf{TS}     & \textbf{MV}    & \textbf{\sysx}             & \textbf{TS} & \textbf{MV}  \\
        \midrule
        \interview           & 41.9 $\pm$ 4.0  & 37.8 $\pm$ 9.5 & \textbf{53.5 $\pm$ 0.5}    & +11.6       & +15.7        \\
        \commercial          & 56.5 $\pm$ 1.7  & 78.9 $\pm$ 14.5& \textbf{93.0 $\pm$ 0.5}    & +36.5       & +14.1        \\
        \tennis              & 41.5 $\pm$ 1.7  & 81.6 $\pm$ 0.6 & \textbf{82.7 $\pm$ 0.4}    & +25.2       & +1.1         \\
        \basketball          & 20.7 $\pm$ 4.2  & 22.0 $\pm$ 11.3& \textbf{26.7 $\pm$ 0.3}    & +6.0        & +4.7         \\
        \bottomrule \\
        \ifsinglecolumn
        \else
        \vspace{-3em}
        \fi
        \end{tabular}
        \caption{
        We compare performance of an end model trained with an online pass over
        the training set, and then the test set with labels from \sysx,
        against a model trained with majority vote (MV) labels over the training and test set, and a 
        traditionally supervised (TS) model trained with ground truth labels over the test set.
        We report mean $\pm$ standard deviation from five random weight
        initializations.
        }
    \label{table:online}
\end{table}

\subsection{Online Weak Supervision}
\label{sec:eval_online}
We now evaluate the ability of our online extension to simultaneously train a
label model and end model online for our video analysis tasks.
We also use synthetic experiments to demonstrate when training a model online
can be preferable to training a model offline.

\paragraph{Core Validation}
We first validate our online extension by using the \sysx\ PyTorch layer to
simultaneously train a label model and end model online for our video analysis tasks.
We train first on the training set and then on the test set (using probabilistic labels for both).
We compare against online traditional supervision (TS) and majority vote (MV)
baselines.
Since the training set is unlabeled, the TS model is trained only on the
ground-truth test set labels, while the MV baseline uses majority vote to label the
training and test sets.
To mimic the online setting, each datapoint is only seen once during training.

Table~\ref{table:online} shows our results.
Our method outperforms MV by up to \maxliftmvonline\ F1 points, and
TS by up to \maxlifttsonline\ F1 points.
Even though TS is trained on ground-truth test set labels,
it underperforms both other methods because it only does a single pass over the
(relatively small) test set.
MV and \sysx, on the other hand, see many more examples in the weakly-labeled
training set before having to classify the test set.

The online version of \sysx\ often underperforms its offline equivalent
(Table~\ref{table:offline}),
since the online model can only perform a single iteration of SGD with
each datapoint.
However, in 2 cases, the online model overperforms the offline
model, for two reasons: a) the training set is large enough to make up the
difference in having multiple epochs with SGD, and b) online training over
the test set enables continued specialization to the test set.

\paragraph{Distributional Drift Over Time}
We also study the effect of distributional drift over time using synthetic experiments.
Distributional drift can mean that label model
parameters learned on previous data points may not describe future data points.
Figure~\ref{fig:micro} shows the results of online vs. offline training in two
settings with different amounts of drift.
On the left is a setting with limited drift; in this setting, the
offline model learns better parameters than the online model, since it has
access to more data, all of which is representative of the test set.
On the right is a setting with large amounts of periodic drift; in this
setting, the offline model cannot learn parameters that work for all data
points.
But the online model, which only learns parameters for a recent window of data
points, is able to specialize to the periodic shifts.

\begin{figure}[t]
  \centering
  \includegraphics[width=3.25in]{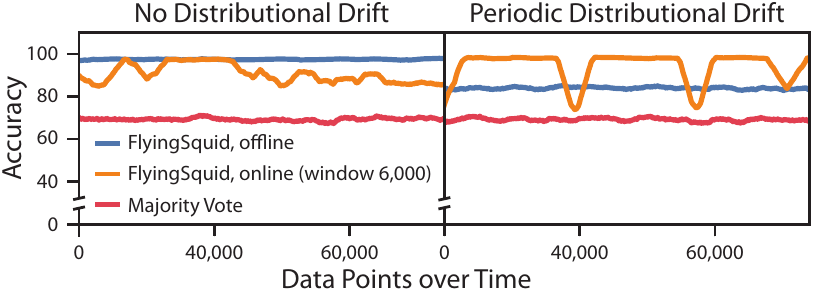}
  \ifsinglecolumn
  \else
  \vspace{-2em}
  \fi
  \caption{
  When there is large distributional drift, online learning can outperform
  offline learning by adapting over time (synthetic).
  }
  \label{fig:micro}
\end{figure}


\section{Related Work}
\label{sec:related}

\paragraph{Latent Variable Estimation}
Latent variable estimation is a classic problem in machine learning, used for hidden Markov Models, Markov random
fields, topic modeling, and
more~\cite{wainwright2008graphical, koller2009probabilistic}.
General algorithms do not admit closed-form
solutions; classical techniques like expectation maximization and Gibbs sampling can
require many iterations to converge, while techniques like tensor decomposition
run the expensive power method~\cite{anandkumar2014tensor}.
We show that the weak supervision setting allows us to break down
the parameter estimation problem into subproblems with closed-form solutions.

Our solution is similar to previous methods that have exploited triplets of
conditionally-independent variables to solve latent variable
estimation~\cite{joglekar2013evaluating, chaganty2014estimating}.
\citet{joglekar2013evaluating}~focuses on the explicit context of crowdsourcing
and is equivalent to a simplified version of Algorithm~\ref{alg:triplet} when
all the label sources are conditionally independent from each other and do not
abstain.
In contrast, our work handles a wider variety of use cases critical for
weak supervision (such as sources that can abstain) and develops theoretical characterizations for downstream
model behavior.
\citet{chaganty2014estimating}~shows how to estimate the canonical parameters
of a wide class of graphical models by applying tensor decomposition to recover
conditional parameters.
By comparison, our work is more specialized, which lets us replace tensor
decomposition with a non-iterative closed-form solution, even for
non-binary variables.
A more detailed comparison against both of these methods is available in
\ifarxiv
Appendix \ref{sec:extended_related}.
\else
Appendix A.
\fi

\paragraph{Weak Supervision}
Our work is related to several such techniques, such as
distant supervision~\cite{mintz2009distant,craven1999constructing,
hoffmann2011knowledge,takamatsu2012reducing}, co-training
methods~\cite{blum1998combining}, pattern-based
supervision~\cite{gupta2014improved} and feature annotation~\cite{mann2010generalized,zaidan2008modeling,liang2009learning}.
Recently, weak supervision frameworks rely on latent graphical models
and other methods to systematically integrate multiple noisy
sources~\cite{Ratner16, Ratner18, bach2017learning, bach2018snorkel,
guan2018said, khetan2017learning, sheng2020gmail, re2019overton}.
Two recent approaches have proposed new methods for modeling sequential
dependencies in particular, which is important in applications like
video~\cite{zhan2019sequentialws, sala2019multiresws, safranchik2020weakly}.
These approaches largely rely on iterative methods like stochastic
gradient descent, and do not run closed-form solutions to latent variable estimation.

\paragraph{Crowdsourcing}
Our work is related to crowdsourcing (crowd workers can be thought of as
noisy label sources).
A common approach in crowdsourcing is filtering crowd workers using a small set
of gold tasks, or filtering based on number of previous tasks completed or with
monetary incentives~\cite{rashtchian2010collecting, shaw2011designing,
sorokin2008utility, downs2010your, mitra2015comparing,kittur2008crowdsourcing}.
In contrast, in our setting, we do not have access to ground truth data to
estimate source accuracies, and we cannot filter out noisy sources \textit{a
priori}.
Other techniques can estimate worker accuracies without ground truth
annotations, but assume that workers are independent~\cite{karger2011iterative}.
We can also directly model crowd workers using our label model, as in the
\weather\ task.

\paragraph{Online Learning}
Training models online traditionally requires hand
labels~\cite{cesa2006prediction, shalev2012online}, but recent
approaches like~\citet{mullapudi2019online} train models online using a
student-teacher framework (training a student network online based on the
outputs of a more powerful teacher network).
In contrast, our method does not rely on a powerful network that has been
pre-trained to carry out the end task.
In both traditional and newer distillation settings, a critical challenge is
updating model parameters to account for domain shift~\cite{shalev2012online}.
For our online setting, we deal with distributional drift via a standard rolling window.


\section{Conclusion}
\label{sec:conc}

We have proposed a method for latent variable estimation by
decomposing it into minimal subproblems with closed-form solutions.
We have used this method to build \sysx, a new weak supervision framework that
achieves the same or higher quality as previous approaches while running orders
of magnitude faster, and presented an extension to online learning embodied in
a novel \sysx\ layer.
We have proven generalization and sampling error bounds
and shown that our method can be sample optimal.
In future work, we plan to extend our insights to more problems where
closed-form latent variable estimation can result in faster algorithms or new
applications---problems such as structure learning and data augmentation.

\section*{Acknowledgments}

We thank Avanika Narayan for helping with the Tennis dataset, and Avner May for helpful discussions.
We gratefully acknowledge the support of DARPA under Nos. FA86501827865 (SDH) and FA86501827882 (ASED); NIH under No. U54EB020405 (Mobilize), NSF under Nos. CCF1763315 (Beyond Sparsity), CCF1563078 (Volume to Velocity), and 1937301 (RTML); ONR under No. N000141712266 (Unifying Weak Supervision); the Moore Foundation, NXP, Xilinx, LETI-CEA, Intel, IBM, Microsoft, NEC, Toshiba, TSMC, ARM, Hitachi, BASF, Accenture, Ericsson, Qualcomm, Analog Devices, the Okawa Foundation, American Family Insurance, Google Cloud, Swiss Re,
Brown Institute for Media Innovation, the HAI-AWS Cloud Credits for Research program,
Department of Defense (DoD) through the National Defense Science and
Engineering Graduate Fellowship (NDSEG) Program, 
Fannie and John Hertz Foundation,
National Science Foundation Graduate Research Fellowship under Grant No. DGE-1656518,
Texas Instruments Stanford Graduate Fellowship in Science and Engineering,
and members of the Stanford DAWN project: Teradata, Facebook, Google, Ant Financial, NEC, VMWare, and Infosys. The U.S. Government is authorized to reproduce and distribute reprints for Governmental purposes notwithstanding any copyright notation thereon. Any opinions, findings, and conclusions or recommendations expressed in this material are those of the authors and do not necessarily reflect the views, policies, or endorsements, either expressed or implied, of DARPA, NIH, ONR, or the U.S. Government.


\bibliography{main}
\bibliographystyle{icml2020}

\ifarxiv
\onecolumn
\appendix

First, we provide an extended discussion of related work.
Next, we provide a glossary of terms and notation that we use throughout this
paper for easy summary.
Next, we discuss additional algorithmic details, and we give the proofs of our
main results (each theorem).
Finally, we give additional experimental details.

\section{Extended Related Work}
\label{sec:extended_related}

The notion of the ``triplet" of (conditionally) independent variables as the source of minimal signal in latent variable models was observed and exploited in two innovative works, both using moments to deal with the challenge of the latent variable. These are
\begin{itemize}
\item \citet{joglekar2013evaluating}, in the explicit context of crowdsourcing, and
\item \citet{chaganty2014estimating}, for estimating the parameters of certain latent variable graphical models.
\end{itemize}

The ``3-Differences Scheme" described in 3.1 of \citet{joglekar2013evaluating} is equivalent to our approach in Algorithm 1 
in the basic case where there are no abstains and the signs of the accuracies are non-negative. \citet{joglekar2013evaluating} focuses on crowdsourcing, and thus offers two contributions for this setting: (i) computing confidence intervals for worker accuracies and (ii) a set of techniques for extending the three-voters case by collapsing multiple voters into a pair `super-voters' in order to build a better triplet for a particular worker. Both of these are useful directions for extensions of our work. In contrast, our approach focuses on efficiently handling the non-binary abstains case critical for weak supervision and develops theoretical characterizations for the downstream model behavior when using our generated labels.

A more general approach to learning latent variable graphical models is described in \citet{chaganty2014estimating}. Here there is an explicit description of the ``three-views" approach. It is shown how to estimate the canonical parameters of a remarkably wide class of graphical models (e.g., both directed and undirected) by applying the tensor decomposition idea (developed in \citet{anandkumar2014tensor}) to recover conditional parameters. By comparison, our work is more specialized, looking at undirected (in fact, specifically Ising) models in the context of weak supervision. The benefits of this specialization are that we can replace the use of the tensor power iteration technique with a non-iterative closed-form solution, even for non-binary variables. Nevertheless, the techniques in \citet{chaganty2014estimating} can be useful for weak supervision as well, and their pseudolikelihood approach to recover canonical parameters suggests that forward methods of inference could be used in our label model. We also note that closed-form triplet methods can be used to estimate \emph{part} of the parameters of a more complex exponential family model (where some variables are involved in pairwise interactions at most, others in more complex patterns), so that resorting to tensor power iterations can be minimized. 

A further work that builds on the approach of \citet{chaganty2014estimating} is \citet{raghunathan2016estimation}, where moments are used in combination with a linear technique. However, the setting here is different from weak supervision. The authors of \citet{raghunathan2016estimation} study \emph{indirect} supervision. Here, for any unlabeled data point $x$, the label $y$ is not seen, but a variable $o$ is observed. So far this framework resembles weak supervision, but in the indirect setting, the supervision distribution $S(o | y)$ is known---while for weak supervision, it is not. Instead, in \citet{chaganty2014estimating}, the $S$ distribution is given for two particular applications: local privacy and a light-weight annotation scheme for POS tagging.

\section{Glossary}
\label{sec:gloss}

The glossary is given in Table~\ref{table:glossary} below.
\begin{table*}[h]
\centering
\small
\begin{tabular}{l l}
\toprule
Symbol & Used for \\
\midrule
$\bm{X}$ & Unlabeled data vector, $\bm{X} = [X_1, X_2, \ldots, X_D] \in \mathcal{X}$ \\
$\bm{X}^i$ & $i$th unlabeled data vector \\
$X_i$ & $i$th data element  \\
$D$ & Length of the unlabeled data vector \\
$\bm{Y}$ & Latent, ground-truth label vector, $\bm{Y} = [Y_1, Y_2, \ldots, Y_D] \in \mathcal{Y}$, also referred to as hidden variables  \\
$\bm{Y}^i$ & $i$th ground-truth label vector \\
$Y_i$ & Ground-truth label for $i$th task, $Y_i \in \{-1, +1\}$  \\
$\mathcal{D}$ & Distribution from which we assume $(\bm{X}, \bm{Y})$ data points are sampled i.i.d. \\
$S_i$ & $i$th weak supervision source\\
$m$ & Number of weak supervision sources \\
$\lf_i$ & Label of $S_i$ for $\bm{X}$ where $\lf_i \in \{-1, 0, 1\}$; all $m$ labels per $\bm{X}$ collectively denoted $\bm{\lf}$ \\
$n$ & Number of data vectors \\
$\bm{\widetilde{Y}}$ & Probabilistic training labels for a label vector \\
$f_w$ & Discrimative classifier used as end model, parametrized by $w$ \\
$G_{dep}$ & Source dependency graph \\ 
$G$ & Augmented graph $G = (V, E)$ used for binary Ising model, where $V=\{\bm{Y}, \bm{v}\}$ \\
$\bm{v}$ & Observed variables of the graphical model corresponding to $\bm{\lf}$ \\
$L$ & Label matrix containing $n$ samples of source labels $\lf_1, \ldots, \lf_m$\\
$\mathcal{L}$ & Augmented label matrix computed from $L$ \\
$Y^{dep}(i)$ & Task that $\lf_i$ labels \\
$Y(i)$ & Hidden variable that the observed variable $v_i$ acts on \\
$\mathcal{C}_{dep}$ & Cliqueset (maximal and non-maximal) of $G_{dep}$ \\
$\tilde{\mathcal{C}}_{dep}, \mathcal{S}_{dep}$  & The maximal cliques and separator sets of the junction tree over $G_{dep}$ \\
$\bm{\mu}$ & The label model parameters collectively over all $\mu_C$, $\mu_S$, the marginal distributions of $C \in \tilde{\mathcal{C}}_{dep}$, $S \in \mathcal{S}_{dep}$ \\
$P(\bm{\bar{Y}})$ & Class prior for the $\bm{Y}$ label vector \\
$a_i$ & $\E{}{v_i Y(i)}$, the unobservable mean parameters of binary Ising model $G$ \\
$\Omega_G$ & Set of vertices in $V$ to which the triplet method can be applied \\
$\mathcal{C}$ & Cliqueset (maximal and non-maximal) of $G$ \\
$a_C$ & The expectation over the product of observed variables in clique $C \in \mathcal{C}$ and $Y(C)$\\
$a_{C_{dep}}$ & The expectation over the product of sources in clique $C_{dep} \in \mathcal{C}_{dep}$ and $Y^{dep}(C_{dep})$\\
\toprule
\end{tabular}
\caption{
	Glossary of variables and symbols used in this paper.
}
\label{table:glossary}
\end{table*}

\section{Further Algorithmic Details}
\label{sec:extalg}

In this section, we present more details on the main algorithm, extensions to more complex models, and the online variant.

\subsection{Core Algorithm}
\label{subsec:core_alg}

We first present the general binary Ising model and the proof of Proposition 1 
that follows from this construction. We also prove another independence property over this general class of Ising models that can be used to factorize expectations over arbitrarily large cliques. Next, we detail the exact setup of the graphical model when sources can abstain, as well as the special case when they never abstain, and define the mappings necessary to convert between values over $\bm{v}, G$ and $\bm{\lf}, G_{dep}$. We then formalize the linear transformation from $a_{C_{dep}}$ to $\mu_{C_{dep}}$, and finally we explain the \textsc{ResolveSigns} function used in Algorithm 1. 

First, we give the explicit form of the density for the Ising model we use. Given the graph $G = (V,E)$, we can write the corresponding joint distribution of $\bm{Y}, \bm{v}$ as
\begin{align}
f_G(\bm{Y}, \bm{v}) = \frac{1}{Z} \exp \Big(\sum_{k = 1}^D \theta_{Y_k} Y_k + \sum_{(Y_k, Y_l) \in E} \theta_{Y_k, Y_l} Y_k Y_l + \sum_{v_i \in \bm{v}} \theta_i v_i Y(i) + \sum_{(v_k, v_l) \in E} \theta_{k, l} v_k v_l \Big),
\label{eq:vising}
\end{align}
where $Z$ is the partition function, and the $\theta$ terms collectively are the canonical parameters of the model. Note that this is the most general definition of the binary Ising model with multiple dependent hidden variables and observed variables that we use.

\subsubsection{Proof of Proposition 1}
\label{subsubsec:prop1proof}


We present the proof of Proposition $1$, which is the underlying independence property of \eqref{eq:vising} that enables us to use the triplet method. We aim to show that for any $a, b \in \{-1, +1\}^2$, 
\begin{align}
P\big(v_i Y(i) = a, v_j Y(i) = b\big) = P(v_i Y(i) = a) \cdot P(v_j Y(i) = b),
\label{eq:prop1IMS}
\end{align} 

where $v_i \independent v_j | Y(i)$. For now, assume that $Y(j) \neq Y(i)$. 

Because $v_i$ and $v_j$ are conditionally independent given $Y(i)$, we have that $P(v_i = a, v_j = b | Y(i) = 1) = P(v_i = a | Y(i) = 1) \cdot P(v_j = b | Y(i) = 1)$, and similarly for $v_i = -a, v_j = -b$ conditional on $Y(i) = -1$. Then
\begin{align}
&P(v_i = a, v_j = b, Y(i) = 1) \cdot P(Y = 1) = P(v_i = a, Y(i) = 1) \cdot P(v_j = b, Y(i) = 1) \nonumber \\
&P(v_i = -a, v_j = -b, Y(i) = -1) \cdot P(Y = -1) = P(v_i = -a, Y(i) = -1) \cdot P(v_j = -b, Y(i) = -1).
\label{eq:cond_ind}
\end{align} 

Note that terms in \eqref{eq:prop1IMS} can be split depending on if $Y(i)$ is $1$ or $-1$, so proving independence of $v_i Y(i)$ and $v_j Y(i)$ is equivalent to
\begin{align*}
&P(v_i = a, v_j = b, Y(i) = 1) + P(v_i = -a, v_j = -b, Y(i) = -1) \\
&= \left(P(v_i = a, Y(i) = 1) + P(v_i = -a, Y(i) = -1) \right) \cdot \left(P(v_j = b, Y(i) = 1) + P(v_j = -b, Y(i) = -1) \right).
\end{align*} 

We substitute \eqref{eq:cond_ind} into the right hand side. After rearranging, our equation to prove is
\begin{align*}
&P(v_i = a, v_j = b, Y(i) = 1) \cdot P(Y(i) = -1) + P(v_i = -a, v_j = -b, Y(i) = -1) \cdot P(Y(i) = 1) \\
&= P(v_i = -a, Y(i) = -1) \cdot P(v_j = b, Y(i) = 1) + P(v_i = a, Y(i) = 1) \cdot P(v_j = -b, Y(i) = -1).
\end{align*}

Due to symmetry of the terms above, it is thus sufficient to prove
\begin{align}
P(v_i = a, v_j = b, Y(i) = 1) \cdot P(Y(i) = -1) = P(v_i = -a, Y(i) = -1) \cdot P(v_j = b, Y(i) = 1).
\label{eq:IMS}
\end{align}

Let $N(v_i)$ be the set of $v_i$'s neighbors in $\bm{v}$, and $N(Y_i)$ be the set of $Y_i$'s neighbors in $\bm{Y}$. Let $\mathcal{S}$ be the event space for the hidden and observed variables, such that each element of the set $\mathcal{S}$ is a sequence of $+1$s and $-1$s of length equal to $|V|$. Denote $\mathcal{S}(v_i, v_j, Y(i))$ to be the event space for $V$ besides $v_i$, $v_j$, and $Y(i)$; we also have similar definitions used for $\mathcal{S}(Y(i)), \mathcal{S}(v_i, Y(i))$, $\mathcal{S}(v_j, Y(i))$.

Our approach is to write each probability in \eqref{eq:IMS} as a summation of joint probabilities over $\mathcal{S}(v_i, Y(i)), \mathcal{S}(v_j, Y(i)),$ and $\mathcal{S}(v_i, v_j, Y(i))$ using \eqref{eq:vising}. To do this more efficiently, we can factor each joint probability defined according to \eqref{eq:vising} into a product over \textit{isolated variables} and a product over \textit{non-isolated variables}. Recall that our marginal variables are $v_i$, $v_j$ and $Y(i)$. Define the set of non-isolated variables to be the marginal variables, plus all variables that interact directly with the marginal variables according to the potentials in the binary Ising model. Per this definition, the non-isolated variables are $V_{NI} = \{v_i, v_j, Y(i), Y(j), N(Y(i)), N(v_i), N(v_j), v_{Y(i)}\}$ where $v_{Y(i)} = \{v: Y(v) = Y(i) \}$ and the isolated variables are all other variables not in this set, $V_I = V \backslash V_{NI}$. We can thus factorize each probability into a term $\psi(\cdot)$ corresponding to factors of the binary Ising model that only have isolated variables and a term $\zeta(\cdot)$ coresponding to factors that have non-isolated variables.
\begin{align*}
P(v_i = a, v_j = b, Y(i) = 1) &= \frac{1}{Z} \sum_{s^{(a, b)} \in \mathcal{S}(v_i, v_j, Y(i))} \psi(s^{(a, b)}) \cdot \zeta(v_i = a, v_j = b, Y(i) = 1, s^{(a, b)}) \\
P(Y(i) = -1) &= \frac{1}{Z} \sum_{s^{(Y)} \in \mathcal{S}(Y(i))} \psi(s^{(Y)}) \cdot \zeta(Y(i) = -1, s^{(Y)}) \\
P(v_i = -a, Y(i) = -1) &= \frac{1}{Z} \sum_{s^{(a)} \in \mathcal{S}(v_i, Y(i))} \psi(s^{(a)}) \cdot \zeta(v_i = -a, Y(i) = -1, s^{(a)}) \\
P(v_j = b, Y(i) = 1) &= \frac{1}{Z} \sum_{s^{(b)} \in \mathcal{S}(v_j, Y(i))} \psi(s^{(b)}) \cdot \zeta(v_j  = b, Y(i) = 1, s^{(b)})
\end{align*}

To be precise, $\psi(\cdot)$ is
\begin{align*}
\psi(s^{(a, b)}) = \exp \Big(\sum_{\mathclap{\substack{Y_k \notin N(Y(i)) \\ \cup Y(i) \cup Y(j)}}} \theta_{Y_k} Y_k^{(a, b)} + \sum_{\mathclap{\substack{Y_k, Y_l \notin \\ N(Y(i)) \cup Y(i) \cup Y(j)}}} \theta_{Y_k, Y_l} Y_k^{(a, b)} Y_l^{(a, b)} + \sum_{\mathclap{\substack{Y(k) \notin N(Y(i)) \cup Y(i) \cup Y(j), \\ k \notin N(v_j) \cup v_j}}} \theta_k v_k^{(a, b)} Y(k)^{(a, b)} + \sum_{\mathclap{\substack{v_k, v_l \notin N(v_i) \cup v_i \\ \cup N(v_j) \cup v_j}}} \theta_{l, k} v_k^{(a, b)} v_l^{(a, b)} \Big),
\end{align*}

where $s^{(a, b)} = \{Y_1^{(a, b)}, \dots, Y_D^{(a, b)}, v_1^{(a, b)}, \dots \}$, and similar definitions hold for $s^{(a)}, s^{(b)}$, and $s^{(Y)}$. Then, \eqref{eq:IMS} is equivalent to showing
\begin{align*}
&\sum_{s^{(a, b)}, s^{(Y)}} \psi(s^{(a, b)}) \cdot \psi(s^{(Y)}) \cdot  \zeta(v_i = a, v_j = b, Y(i) = 1, s^{(a, b)}) \cdot \zeta(Y(i) = -1, s^{(Y)}) \\
= \; &\sum_{s^{(a)}, s^{(b)}} \psi(s^{(a)}) \cdot \psi(s^{(b)}) \cdot  \zeta(v_i = -a, Y(i) = -1, s^{(a)}) \cdot \zeta(v_j = b, Y(i) = 1, s^{(b)}).
\end{align*}

We can show this by finding values of $s^{(a)}$ and $s^{(b)}$ that correspond to each $s^{(a, b)}$ and $s^{(Y)}$. Note that the $\psi$ terms will cancel each other out if we directly set $s^{(a)}[V_I] = s^{(Y)}[V_I]$ and $s^{(b)}[V_I] = s^{(a, b)}[V_I]$.  Therefore, we want to set $s^{(a)}[V_{NI}]$ and $s^{(b)}[V_{NI}]$ such that the products of $\zeta$s are equivalent. We write them out explicitly first:
\begin{align*}
\zeta(v_i = a, v_j = b, Y(i) = 1, s^{(a, b)}) &= \exp \Big(\theta_{Y(i)} + \sum_{\mathclap{Y_k \in N(Y(i)) \cup Y(j)}} \theta_{Y_k} Y_k^{(a, b)}  + \sum_{\mathclap{Y_k \in N(Y(i))}} \theta_{Y_k, Y(i)} Y_k^{(a, b)} + \sum_{\mathclap{\substack{Y_k \in N(Y(i)) \cup Y(j), \\ Y_l \notin N(Y(i)) \cup Y(i) \cup Y(j)}}} \theta_{Y_k, Y_l} Y_k^{(a, b)} Y_l^{(a, b)} \\
&+ \theta_i a + \theta_j b Y(j)^{(a, b)} + \sum_{\mathclap{\substack{k \neq i, j, \\Y(k) = Y(i)}}} \theta_k v_k^{(a, b)} + \; \; \sum_{\mathclap{\substack{k \neq i, j, \\ Y(k) \in N(Y(i)) \cup Y(j) \\ | k \in N(v_j)}}} \theta_k v_k^{(a, b)} Y(k)^{(a, b)} + \sum_{\mathclap{v_k \in N(v_i)}} \theta_{i, k} a v_k^{(a, b)} \\
&+ \sum_{\mathclap{v_k \in N(v_j)}} \theta_{j, k} b v_k^{(a, b)}\Big)
\end{align*}
\begin{align*}
\zeta(Y(i) = -1, s^{(Y)}) &= \exp \Big(-\theta_{Y(i)} + \sum_{\mathclap{Y_k \in N(Y(i)) \cup Y(j)}} \theta_{Y_k} Y_k^{(Y)} - \sum_{\mathclap{Y_k \in N(Y(i))}} \theta_{Y_k, Y(i)} Y_k^{(Y)} + \sum_{\mathclap{\substack{Y_k \in N(Y(i)) \cup Y(j), \\ Y_l \notin N(Y(i)) \cup Y(i) \cup Y(j)}}} \theta_{Y_k, Y_l} Y_k^{(Y)} Y_l^{(Y)} \\
&- \theta_i v_i^{(Y)} + \theta_j v_j^{(Y)} Y(j)^{(Y)} - \sum_{\mathclap{\substack{k \neq i, j, \\Y(k) = Y(i)}}} \theta_k v_k^{(Y)} + \; \; \sum_{\mathclap{\substack{k \neq i, j, \\ Y(k) \in N(Y(i)) \cup Y(j) \\| k \in N(v_j)}}} \theta_k v_k^{(Y)} Y(k)^{(Y)} + \sum_{\mathclap{\substack{v_k \in N(v_i), \\ v_l \neq v_i}}} \theta_{k, l} v_k^{(Y)} v_l^{(Y)} \\
&+ \sum_{\mathclap{\substack{v_k \in N(v_j), \\ v_l \neq v_j}}} \theta_{k, l} v_k^{(Y)} v_l^{(Y)} \Big)
\end{align*}
\begin{align*}
\zeta(v_i = -a, Y(i) = -1, s^{(a)}) &= \exp \Big(-\theta_{Y(i)} + \sum_{\mathclap{Y_k \in N(Y(i)) \cup Y(j)}} \theta_{Y_k} Y_k^{(a)} - \sum_{\mathclap{Y_k \in N(Y(i))}} \theta_{Y_k, Y(i)} Y_k^{(a)} + \sum_{\mathclap{\substack{Y_k \in N(Y(i)) \cup Y(j), \\ Y_l \notin N(Y(i)) \cup Y(i) \cup Y(j)}}} \theta_{Y_k, Y_l} Y_k^{(a)} Y_l^{(a)} \\
&+ \theta_i a + \theta_j v_j^{(a)} Y(j)^{(a)} - \sum_{\mathclap{\substack{k \neq i, j, \\ Y(k) = Y(i)}}} \theta_k v_k^{(a)} + \;\; \sum_{\mathclap{\substack{k \neq i, j, \\ Y(k) \in N(Y(i)) \cup Y(j) \\| k \in N(v_j)}}} \theta v_k^{(a)} Y(k)^{(a)} + \sum_{\mathclap{\substack{v_k \in N(v_j), \\ v_l \neq v_j}}} \theta_{k, l} v_k^{(a)} v_l^{(a)} \\
&- \sum_{\mathclap{v_k \in N(v_i)}} \theta_{i, k} a v_k^{(a)} \Big)
\end{align*}
\begin{align*}
\zeta(v_j = b, Y(i) = 1, s^{(b)}) &= \exp \Big(\theta_{Y(i)} + \sum_{\mathclap{Y_k \in N(Y(i)) \cup Y(j)}} \theta_{Y_k} Y_k^{(b)} + \sum_{\mathclap{Y_k \in N(Y(i))}} \theta_{Y_k, Y(i)} Y_k^{(b)} + \sum_{\mathclap{\substack{Y_k \in N(Y(i)) \cup Y(j), \\ Y_l \notin N(Y(i)) \cup Y(i) \cup Y(j)}}} \theta_{Y_k, Y_l} Y_k^{(b)} Y_l^{(b)} \\
&+ \theta_i v_i^{(b)} + \theta_j b Y(j)^{(b)} + \sum_{\mathclap{\substack{k \neq i, j, \\ Y(k) = Y(i)}}} \theta_k v_k^{(b)} +  \; \; \sum_{\mathclap{\substack{k \neq i, j, \\ Y(k) \in N(Y(i)) \cup Y(j) \\| k \in N(v_j)}}} \theta_k v_k^{(b)} Y(k)^{(b)} + \sum_{\mathclap{\substack{v_k \in N(v_i), \\ v_l \neq v_i}}} \theta_{k, l} v_k^{(b)} v_l^{(b)} \\
&+ \sum_{\mathclap{v_k \in N(v_j)}} \theta_{j, k} b v_k^{(b)} \Big)
\end{align*}

We present a simple mapping from $s^{(a, b)}$ and $s^{(Y)}$ to $s^{(a)}$ and $s^{(b)}$ such that $\zeta(v_i = a, v_j = b, Y(i) = 1, s^{(a, b)}) \cdot \zeta(Y(i) = -1, s^{(Y)}) = \zeta(v_i = -a, Y(i) = -1, s^{(a)}) \cdot \zeta(v_j = b, Y(i) = 1, s^{(b)})$ holds:
\begin{center}
\begin{tabular}{l | c c}
 & $s^{(a)}$ & $s^{(b)}$ \\
 \hline 
$v_i$ & $-$ & $-v_i^{(Y)}$\\
$v_j$ & $v_j^{(Y)}$ &  $-$ \\ 
$Y_k \in N(Y(i)) \cup Y(j)$ & $Y_k^{(Y)}$ & $Y_k^{(a, b)}$ \\
$v_k \in N(v_i)$ & $-v_k^{(a, b)}$ & $-v_k^{(Y)}$ \\
$v_k \in N(v_j)$ & $v_k^{(Y)}$ & $v_k^{(a, b)}$ \\
$v_{Y(i)}$ & $-v_k^{(a, b)}$ & $-v_k^{(Y)}$
\end{tabular}
\end{center}

With this construction of $s^{(a)}$ and $s^{(b)}$, we have shown that $v_i Y(i)$ and $v_j Y(i)$ are independent. (In the case that $Y(j) = Y(i)$, the proof is almost exactly the same).

\subsubsection{Handling Larger Cliques}
\label{subsubsec:large_cliques}

We discuss how arbitrarily large cliques can be factorized into mean parameters and observable statistics to compute values of $a_C$ in Algorithm 2. 
This is due to the following general independence property that arises from construction of the Ising model in \eqref{eq:vising}:
\begin{proposition}
For a clique $C$ of $v_k$'s all connected to a single $Y(C)$, we have that $\prod_{k \in C} v_k  \independent Y(C)$ if $|C|$ is even, and $ \prod_{k \in C} v_k Y(C) \independent Y(C)$ if $|C|$ is odd. 
\label{prop:larger_cliques}
\end{proposition}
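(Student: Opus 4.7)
The plan is to generalize the sign-flipping symmetry argument used for Proposition~\ref{lemma:triplet} to cliques of arbitrary size. I would begin by writing the joint density $f_G(\bm{Y},\bm{v})$ from~\eqref{eq:vising} and collecting the exponent terms touching $C \cup \{Y(C)\}$: the accuracy terms $\theta_k v_k Y(C)$ for $k \in C$, the intra-clique pairwise terms $\theta_{kl} v_k v_l$ for $k,l \in C$, cross-potentials $\theta_{kj} v_k v_j$ with external neighbors $v_j \notin C$, and couplings of $Y(C)$ to adjacent hidden variables. The central object is the involution $\Phi$ that flips $v_k \mapsto -v_k$ for every $k \in C$ and additionally flips $Y(C)$ when $|C|$ is odd. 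Intra-clique terms are $\Phi$-invariant; the accuracy terms are $\Phi$-invariant in the odd case (two sign flips per term cancel) and collectively mirror a flip of $Y(C)$ in the even case. The quantity of interest is $\Phi$-invariant in both cases: for even $|C|$, $W := \prod_{k \in C} v_k$ is unchanged because $(-1)^{|C|}=1$; for odd $|C|$, $W \cdot Y(C)$ is unchanged because the extra sign on $Y(C)$ cancels the parity of the flip on $W$.

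With these pieces, I would reduce the independence claim to the moment identity that characterizes independence of two $\pm 1$ variables: for even $|C|$ it suffices to show $\E{}{W Y(C)} = \E{}{W}\E{}{Y(C)}$, and the odd case is analogous with $W Y(C)$ in place of $W$. I would verify this identity by conditioning on all variables outside $C \cup \{Y(C)\}$ and writing the resulting conditional Gibbs measure on $(v_k)_{k \in C}$ with local fields $\phi_k = \theta_k Y(C) + c_k$, where $c_k$ absorbs cross-edge contributions. Within the level set $\{W = s\}$, the flip $v_k \mapsto -v_k$ is a bijection when $|C|$ is even, and pairing configurations under this bijection matches the $Y(C) = +1$ and $Y(C) = -1$ branches of the conditional joint distribution. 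Marginalizing over the external variables then lifts this to the unconditional independence; the odd case is handled analogously after also flipping $Y(C)$ in the pairing.

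The main obstacle is that the naive flip does not preserve the cross-edge contributions $\sum_k c_k v_k$, and in the odd case also introduces sign changes on the couplings of $Y(C)$ to other hidden variables. I would dispatch this exactly as in the proof of Proposition~\ref{lemma:triplet}: partition the vertex set into variables isolated and non-isolated relative to $C \cup \{Y(C)\}$, factor the exponent into $\psi$- and $\zeta$-parts, and construct an explicit bijection---an analogue of the lookup table at the end of that proof, now parameterized by the parity of $|C|$---that propagates the sign flip onto the relevant external neighbors so that the extra signs cancel term-by-term. Once this bijection is in place, the four joint probabilities defining the desired moment identity pair up, the even/odd dichotomy emerges from the parity of $|C|$ in the transformed product, and what remains is exponent arithmetic.
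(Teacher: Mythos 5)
Your proposal is correct and takes essentially the same route as the paper's proof: a parity-dependent sign-flip involution on the clique (plus $Y(C)$ in the odd case), combined with the isolated/non-isolated decomposition and an explicit configuration bijection that propagates the flip onto external neighbors so the cross-potential and $\theta_Y$ terms cancel in products of summands. The only cosmetic difference is that you reduce independence to the moment identity $\E{}{W\,Y(C)}=\E{}{W}\,\E{}{Y(C)}$ for $\pm 1$-valued variables, whereas the paper directly pairs configurations to factorize the joint probabilities $P\left(\prod_{k\in C} v_k = a,\, Y(C)=b\right)$; for binary variables these are equivalent.
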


Therefore, if $|C|$ is even, then $a_C = \E{}{\prod_{k \in C} v_k} \cdot \E{}{Y(C)}$. If $|C|$ is odd, then $a_C = \E{}{\prod_{k \in C} v_k} / \E{}{Y(C)}$. 

\begin{proof}
We assume that there is only one hidden variable $Y$, although generalizing to the case where $D > 1$ is straightforward because our proposed independence property only acts on the hidden variable associated with a clique of observed variables.

We first prove the case where $|C|$ is even. We aim to show that for any $a, b \in \{-1, +1\}^2$,
\begin{align*}
P\Big(\prod_{k \in C}v_k = a, Y = b \Big) = P\Big(\prod_{k \in C} v_k = a \Big) P(Y = b).
\end{align*}

Using the concept of isolated variables and non-isolated variables earlier, the set of all observed variables $V_I$ besides those in $C$ and their neighbors can be ignored. Furthermore, suppose that $\mathcal{S}^{(C, a)}$ is the set of all $k \in C$ such that  $\prod_{k \in C} v_k = a$. For example, if $C = \{i, j\}$ and $a = -1$, $\mathcal{S}^{(C, -1)} = \{ (v_i, v_j) = (1, -1), (-1, 1) \}$. We write out each of the above probabilities as well as the partition function $Z$:
\begin{align*}
P\Big(\prod_{i \in C} v_i = a, Y = b \Big) &= \frac{1}{Z} \sum_{s^{(a, b)} \in \mathcal{S}(C, Y)} \psi\big(s^{(a, b)}\big) \sum_{s^{(C_1, a)} \in \mathcal{S}^{(C)}} \exp \Big(\theta_Y b +\sum_{i \in C} \theta_i b s_{v_i}^{(C_1)} + \sum_{i \notin C} \theta_i b v_i^{(a, b)} \\
&+ \sum_{(i, j) \in C} \theta_{i, j} s_{v_i}^{(C_1)} s_{v_j}^{(C_1)} + \sum_{i \in C} \sum_{j \in N(v_i) \backslash v_C} \theta_{i, j} s_{v_i}^{(C_1)} v_j^{(a, b)}\Big)
\end{align*}
\begin{align*}
P\Big(\prod_{i \in C} v_i = a\Big) &= \frac{1}{Z}  \sum_{s^{(a)}\in \mathcal{S}(C)} \psi\big(s^{(a)}\big) \sum_{s^{(C_2, a)} \in \mathcal{S}^{(C)}} \exp \Big(\theta_Y Y^{(a)} + \sum_{i \in C} \theta_i s_{v_i}^{(C_1)} Y^{(a)} + \sum_{i \notin C} \theta_i v_i^{(a)} Y^{(a)} \\
&+ \sum_{(i, j) \in C} \theta_{i, j} s_{v_i}^{(C_2)} s_{v_j}^{(C_2)} + \sum_{i \in C} \sum_{j\in N(v_i) \backslash v_C} \theta_{i,j} s_{v_i}^{(C_1)} v_j^{(a)} \Big)
\end{align*}
\begin{align*}
P(Y = b) &= \sum_{s^{(b)} \in \mathcal{S}(Y)} \psi\big(s^{(b)}\big) \exp \Big(\theta_Y b + \sum_{i \in C} \theta_i b v_i^{(b)} + \sum_{i \notin C} \theta_i v_i^{(b)} Y^{(b)} \\
&+ \sum_{(i, j) \in C} \theta_{i,j} v_i^{(b)} v_j^{(b)} + \sum_{i \in C} \sum_{j \in N(v_i) \backslash v_C} \theta_{i,j} v_i^{(b)} v_j^{(b)} \Big) \\
Z &= \sum_{s^{(z)} \in \mathcal{S}} \psi \big(s^{(z)}\big) \exp \Big(\theta_Y Y^{(z)} + \sum_{i\in C} \theta_i v_i^{(z)} Y^{(z)} + \sum_{i \notin C} \theta_i v_i^{(z)} Y^{(z)} + \sum_{(i, j) \in C} \theta_{i,j} v_i^{(z)} v_j^{(z)} \\
&+ \sum_{i \in C} \sum_{j \in N(v_i) \backslash v_C} \theta_{i,j} v_i^{(z)} v_j^{(z)} \Big)
\end{align*}

We want to show that we can map from each $s^{(a, b)}$, $s^{(z)}$ and $s^{(C_1)}$ to a respective $s^{(a)}, s^{(b)}$, and $s^{(C_2)}$. The $\psi(\cdot)$ terms can be ignored since we can just directly set $s^{(a)}[V_I] = s^{(a, b)}[V_I]$ and $s^{(b)}[V_I] = s^{(z)}[V_I]$. Using the above expressions for probabilities and the cumulant function, our desired statement to prove for each $s^{(a, b)}$, $s^{(z)}$ and $s^{(C_1)}$ is
\begin{align}
&\exp \Big(\theta_Y (b + Y^{(z)}) + \sum_{i \in C} \theta_i \big(b s_{v_i}^{(C_1)}  + v_i^{(z)} Y^{(z)}\big) + \sum_{i \notin C} \theta_i \big(b v_i^{(a, b)} + v_i^{(z)} Y^{(z)}\big) \nonumber \\
&+ \sum_{(i, j) \in C} \theta_{i,j}\big(s_{v_i}^{(C_1)} s_{v_j}^{(C_1)} + v_i^{(z)} v_j^{(z)}\big) + \sum_{i \in C} \sum_{j \in N(v_i) \backslash v_C} \theta_{i,j}\big(s_{v_i}^{(C_1)} v_k^{(a, b)} + v_i^{(z)} v_k^{(z)}\big) \Big) \nonumber \\
= \;& \exp\Big(\theta_Y \big(b + Y^{(a)}\big) + \sum_{i \in C} \theta_i \big(s_{v_i}^{(C_2)} Y^{(a)} + b v_i^{(b)}\big) + \sum_{i\notin C} \theta_i \big(v_i^{(a)} Y^{(a)} + b v_i^{(b)}\big) \nonumber \\
&+ \sum_{(i, j) \in C} \theta_{i,j} \big(s_{v_i}^{(C_2)} s_{v_j}^{(C_2)} + v_i^{(b)} v_j^{(b)}\big) + \sum_{i \in C}\sum_{j \in N(v_i) \backslash v_C}\theta_{i,j} \big(s_{v_i}^{(C_2)} v_j^{(a)} + v_i^{(b)} v_j^{(b)}\big) \Big)
\label{eq:parity_proof}
\end{align}

We can ensure that the above expression is satisfied with the following relationship between $s^{(a, b)}$, $s^{(z)}, s^{(C_1)}$ and $s^{(a)}, s^{(b)}, s^{(C_2)}$. If $Y^{(z)} = b$, then we set $Y^{(a)} = b$, $s_{v_i}^{(C_2)} = s_{v_i}^{(C_1)}$ for $i \in C$, and $v_i^{(b)} = v_i^{(z)}, v_i^{(a)} = v_i^{(a, b)}$ for all $v_i$. If $Y^{(z)} = -b$, then we set $Y^{(a)} = -b$, $s_{v_i}^{(C_2)} = -s_{v_i}^{(C_1)}$ for $i \in C$, and $v_i^{(b)} = - v_i^{(z)}, v_i^{(a)} = - v_i^{(a, b)}$ for all $v_i$. However, note that setting either all $s_{v_i}^{(C_2)}$ to be $s_{v_i}^{(C_1)}$ or $-s_{v_i}^{(C_1)}$ means that both $s^{(C_1)}$ and $-s^{(C_1)}$ are in $\mathcal{S}^{(C)}$. This is only true when $|C|$ is even because $\prod_{i \in C} (-v_i) = (-1)^{|C|} \prod_{i \in C} v_i = (-1)^{|C|} a$. 

Our proof approach is similar when $|C|$ is odd. We aim to show that for any $a, b \in \{-1, +1\}^2$,
\begin{align*}
P\Big(\prod_{k \in C} v_k Y = a, Y = b\Big) = P\Big(\prod_{k \in C} v_k Y = a \Big) P(Y = b).
\end{align*}

$P(\prod_{k \in C} v_k Y = a, Y = b)$ can be written as $P(\prod_{k \in C} v_k = \frac{a}{b}, Y = b)$, which follows the same format of the probability we used for the case where $|C|$ is even. We will end up with a desired equation to prove that is identical to \eqref{eq:parity_proof}, except that we must modify $s^{(C_1)}$ and $s^{(C_2)}$. $s^{(C_1)}$ is now from the set $\mathcal{S}^{(C, a/b)}$, and $s^{(C_2)}$ is from the set $\mathcal{S}^{(C, a/b)}$ when $Y^{(a)} = b$ and from the set $s^{(C, -a/b)}$ when $Y^{(a)} = -b$. We can set $s^{(a)}$, $s^{(b)}$, and $s^{(C_2)}$ the exact same way as before; in particular, $s_{v_i}^{(C_2)} = s_{v_i}^{(C_1)}$ when $Y^{(a)} = b$ and $s_{v_i}^{(C_2)} = -s_{v_i}^{(C_1)}$ when $Y^{(a)} = -b$. Both $s_{v_i}^{(C_1)}, Y^{(a)} = b$ and $-s_{v_i}^{(C_1)}, Y^{(a)} = -b$ satisfy $\prod_{i \in C} v_i Y = a$, since $\prod_{i \in C} (-v_i) (-Y) = (-1)^{|C| + 1} \prod_{i \in C} v_i Y = a$ when $|C|$ is odd. 
\end{proof}

\subsubsection{Augmenting the dependency graph} 
\label{subsubsec:augment}

We define the graphical model particular to how $G_{dep}$ is augmented, which gives way to a concise mapping between each $a_C$ and $a_{C_{dep}}$.

In the case where no sources can abstain at all, $\lf_i$ takes on values $\{\pm 1\}$ and thus the augmentation is not necessary. We have that $G = G_{dep}$, $\bm{v} = \bm{\lf}$, and the graphical model's joint distribution \eqref{eq:vising} reduces to 
\begin{align}
f_G &(Y, \bm{\lambda}) =  \frac{1}{Z} \exp \Big(\sum_{k = 1}^D \theta_{Y_k} Y_k + \sum_{(Y_k, Y_l) \in E} \theta_{Y_k, Y_l} Y_k Y_l + \sum_{i = 1}^m \theta_{i} \lf_i Y(i) + \sum_{(\lf_i, \lf_j) \in E} \theta_{i, j} \lf_i \lf_j \Big).
\label{eq:exp_fam}
\end{align}

All of Algorithm 2 
will be done on $\{\bm{Y}, \bm{\lf}\}$. While the triplet method is still used for recovering mean parameters, the mapping from $a_C$ to $a_{C_{dep}}$ is trivial, and the linear transformation back to $\mu_{C_{dep}}$ will have terms containing $\lf_i = 0$ reduced to $0$.

In the case where sources abstain, we have discussed how to generate $\bm{v}$ from $\bm{\lf}$ and $G$ from $G_{dep}$, of which an example is shown in Figure \ref{fig:abstains}. Most importantly, we suppose that when $\lf_i = 0$, we set $(v_{2i - 1}, v_{2i})$ to either $(1, 1)$ or $(-1, -1)$ with equal probability such that
\begin{align}
P\big((v_{2i - 1}, v_{2i}) = (1, 1), V \backslash \{v_{2i - 1}, v_{2i} \}\big) = P\big((v_{2i - 1}, v_{2i}) = (-1, -1), V \backslash \{v_{2i - 1}, v_{2i} \}\big) = \frac{1}{2} P(\lf_i = 0, V \backslash \{v_{2i - 1}, v_{2i}\}).
\label{eq:abstains_split}
\end{align}
\tikzstyle{place4}=[rectangle,draw=black!100,dashed,fill=white!100,thick, inner sep=12.5pt,fill opacity=0.2]

\definecolor{tableaured}{RGB}{225,87,89}
\definecolor{tableauyellow}{RGB}{237,201,73}
\definecolor{tableaugreen}{RGB}{90,161,80}
\definecolor{tableaupurple}{RGB}{176,123,161}
\begin{figure*}[t]
\centering
\begin{tikzpicture}[-,>=stealth',level/.style={sibling distance = 1cm/#1,
  level distance = 1.5cm}, baseline=(current bounding box.north)] 
\draw [fill=tableauyellow, draw opacity=0.0, fill opacity=0.5, thick, rounded corners] (-1.40, -1.2) rectangle (-0.60, -2.0);
\draw [fill=tableauyellow, draw opacity=0.0, fill opacity=0.5, thick, rounded corners] (0.5, -1.2) rectangle (2.3, -2.0);
\draw [fill=tableaugreen, draw opacity=1.0, fill opacity=0.3, thin, rounded corners, dashed] (0.7, -1.2) rectangle (2.5, -2.0);
\draw [fill=tableaugreen, draw opacity=1.0, fill opacity=0.3, thin, rounded corners, dashed] (-0.4, -1.2) rectangle (0.4, -2.0);

\node [arn_n] (Y) {$Y$} ;
\node [arn_n] [below =1cm of Y] (L2){$\lf_2$}   ;
\node [arn_n] [left of=L2] (L1){$\lf_1$}   ;
\node [arn_n] [ right of=L2] (L3){$\lf_3$}   ;
\node [arn_n] [ right of=L3] (L4){$\lf_4$}   ;
\path (Y) edge            (L1);
\path (L2) edge				(L1);
\path (Y) edge            (L2);
\path (Y) edge            (L3);
\path (Y) edge			  (L4);
\end{tikzpicture}
\hspace{2em}
\begin{tikzpicture}[-,>=stealth',level/.style={sibling distance = 1cm/#1,
  level distance = 1.5cm}, baseline=(current bounding box.north)] 
\draw [fill=tableauyellow, draw opacity=0.0, fill opacity=0.5, thick, rounded corners] (-2.78, -1.2) rectangle (-1.98, -2.0);
\draw [fill=tableaugreen, draw opacity=1.0, fill opacity=0.3, thin, rounded corners, dashed] (-0.78, -1.2) rectangle (0.02, -2.0);
\draw [fill=tableauyellow, draw opacity=0.0, fill opacity=0.5, thick, rounded corners] (1.15, -1.2) rectangle (1.95, -2.0);
\draw [fill=tableaugreen, draw opacity=1.0, fill opacity=0.3, thin, rounded corners, dashed] (1.25, -1.2) rectangle (2.05, -2.0);
\draw [fill=tableauyellow, draw opacity=0.0, fill opacity=0.5, thick, rounded corners] (3.15, -1.2) rectangle (3.95, -2.0);
\draw [fill=tableaugreen, draw opacity=1.0, fill opacity=0.3, thin, rounded corners, dashed] (3.25, -1.2) rectangle (4.05, -2.0);

\node [arn_n] (Y) {$Y$} ;
\node [arn_n] [below right=1.7cm of Y] (v3^1){$v_5$}   ;
\node [arn_n] [right of=v3^1] (v3^-1){$v_6$}   ;
\node [arn_n] [right of=v3^-1] (v4^1){$v_7$}   ;
\node [arn_n] [right of=v4^1] (v4^-1){$v_8$}   ;
\node [arn_n] [left of=v3^1] (v2^-1){$v_4$}   ;
\node [arn_n] [left of=v2^-1] (v2^1){$v_3$}   ;
\node [arn_n] [left of=v2^1] (v1^-1) {$v_2$}   ;
\node [arn_n] [left of=v1^-1] (v1^1) {$v_1$}   ;
\node []      [above left=0.8cm of v1^1] {$\Longrightarrow$};
\path (Y) edge            (v1^1);
\path (Y) edge            (v1^-1);
\path (Y) edge            (v2^1);
\path (Y) edge            (v2^-1);
\path (Y) edge            (v3^1);
\path (Y) edge            (v3^-1);
\path (v3^1) edge			(v3^-1);
\path (v1^1) edge			(v1^-1);
\path (v2^1) edge			(v2^-1);
\path (v1^-1) edge			(v2^1);
\path (Y) edge				(v4^1);
\path (Y) edge				(v4^-1);
\path (v4^1) edge			(v4^-1);
\path (v1^1) edge[bend right=30] (v2^1); 
\path (v1^1) edge[bend right=30] (v2^-1); 
\path (v1^-1) edge[bend right=30] (v2^-1); 
\end{tikzpicture}
\caption{Example of mapping from $G_{dep}$ to $G$. Left: $G_{dep}$, where boxes indicate valid triplet groupings of sources. Right: $G$, where boxes indicate the triplets of observed variables that are sufficient to recover all mean parameters.}
\label{fig:abstains}
\end{figure*}

The joint distribution over $\{\bm{Y}$, $\bm{v}\}$ follows from \eqref{eq:vising}:
\begin{align}
f_{G}(\bm{Y}, \bm{v}) = \frac{1}{Z} \exp \bigg(&\sum_{k = 1}^D \theta_{Y_k} Y_k + \; \;  \sum_{\mathclap{(Y_k, Y_l) \in E}} \; \; \theta_{Y_k, Y_l} Y_k Y_l + \sum_{i = 1}^m \theta_i \begin{bmatrix} 1 & -1  \end{bmatrix} \begin{bmatrix} v_{2i - 1} \\ v_{2i} \end{bmatrix} Y^{dep}(i) \nonumber \\
+ &\sum_{i = 1}^m \theta_{i,i} v_{2i - 1} v_{2i} + \; \; \sum_{\mathclap{i, j: (\lf_i, \lf_j )\in E_{dep}}} \; \; \theta_{i,j} \begin{bmatrix} v_{2i - 1} & v_{2i} \end{bmatrix} \begin{bmatrix} 1 & -1 \\ -1 & 1 \end{bmatrix} \begin{bmatrix} v_{2j - 1} \\ v_{2j} \end{bmatrix} \bigg),
\label{eq:abstain}
\end{align}

where $E_{dep}$ is $G_{dep}$'s edge set. Note that this graphical model has the same absolute values of the canonical parameters for both $v_{2i-1} Y^{dep}(i)$ and for all four terms $(v_{2i - 1}, v_{2i}) \times (v_{2j - 1}, v_{2j})$ due to the balancing in \eqref{eq:abstains_split}. As a result, the mean parameters also exhibit the same symmetry, which we show in the following lemma.

\begin{lemma} For each $\lf_i$, we have that $\E{}{\lf_i Y^{dep}(i)} = \E{}{v_{2i - 1} Y^{dep}(i)} = - \E{}{v_{2i} Y^{dep}(i)}$.
\end{lemma}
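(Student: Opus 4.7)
The plan is to partition the expectations according to whether $\lf_i$ abstains or not, and in each case use the explicit deterministic-or-symmetric mapping from $\lf_i$ to $(v_{2i-1},v_{2i})$ given in the augmentation.

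First I would write
\begin{align*}
\E{}{\lf_i Y^{dep}(i)} = \sum_{y \in \{\pm 1\}} y \sum_{\ell \in \{-1,0,1\}} \ell \cdot P(\lf_i = \ell, Y^{dep}(i) = y),
\end{align*}
noting that the $\ell = 0$ summand vanishes. Next I would expand $\E{}{v_{2i-1} Y^{dep}(i)}$ by conditioning on $\lf_i$: on the events $\lf_i = \pm 1$, the augmentation rule sets $v_{2i-1} = \lf_i$ deterministically (and $v_{2i} = -\lf_i$), so these two contributions match the corresponding non-zero terms of the sum above, giving $\sum_{y,\ell \in \{\pm 1\}} y\ell \, P(\lf_i=\ell, Y^{dep}(i)=y)$ for $v_{2i-1}$ and its negative for $v_{2i}$.

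The only remaining piece is the abstain event $\lf_i = 0$, and this is where the balancing property \eqref{eq:abstains_split} does the work. By that identity, conditioned on $\lf_i = 0$ together with $Y^{dep}(i) = y$ (and everything else), the pair $(v_{2i-1},v_{2i})$ takes values $(1,1)$ and $(-1,-1)$ with equal probability, so
\begin{align*}
\E{}{v_{2i-1} Y^{dep}(i) \ind{\lf_i = 0}} = \sum_{y} y \cdot P(\lf_i=0, Y^{dep}(i)=y) \cdot \tfrac{1}{2}(1 + (-1)) = 0,
\end{align*}
and the identical calculation gives $\E{}{v_{2i} Y^{dep}(i) \ind{\lf_i = 0}} = 0$. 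Combining the non-abstain and abstain contributions yields $\E{}{\lf_i Y^{dep}(i)} = \E{}{v_{2i-1} Y^{dep}(i)}$ and $\E{}{v_{2i} Y^{dep}(i)} = -\E{}{v_{2i-1} Y^{dep}(i)}$.

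There is essentially no hard step here; the only place one must be careful is invoking \eqref{eq:abstains_split} correctly so that the $(1,1)/(-1,-1)$ split is truly symmetric conditionally on $Y^{dep}(i)$ (and not just marginally). Since \eqref{eq:abstains_split} asserts equality of the full joint probabilities with everything outside $(v_{2i-1},v_{2i})$ held fixed, this conditional symmetry is immediate, and the whole argument is a short case analysis.
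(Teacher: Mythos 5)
Your proof is correct and is essentially the same argument as the paper's: both rest on the fact that $v_{2i-1}=\lf_i$ (and $v_{2i}=-\lf_i$) deterministically on non-abstain events, together with the conditional symmetry of the $(1,1)/(-1,-1)$ split from \eqref{eq:abstains_split} making the abstain contribution vanish. The only difference is presentational -- you decompose the expectation directly by conditioning on $\lf_i$, while the paper expands each expectation as $2P(\cdot=1)+P(\lf_i=0)-1$ and matches probabilities term by term.
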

\begin{proof}
First, we can write out $\E{}{\lf_i Y^{dep}(i)}$ as
\begin{align*}
\E{}{\lf_i Y^{dep}(i)} &= P(\lf_i Y^{dep}(i) = 1) - P(\lf_i Y^{dep}(i) = -1) = P(\lf_i Y^{dep}(i) = 1) \\
&- (1 - P(\lf_i Y^{dep}(i) = 1) - P(\lf_i Y^{dep}(i) = 0)) \\
&= 2P(\lf_i Y^{dep}(i) = 1) + P(\lf_i = 0) - 1.
\end{align*}

We know that if we have $v_{2i - 1} = 1$ or $v_{2i} = -1$, then $\lf_i$ is either $1$ or $0$, but never $-1$; similarly, $v_{2i - 1} = -1$ and $v_{2i} = 1$ imply that $\lf_i \neq 1$. We write out $\E{}{v_{2i - 1} Y^{dep}(i)}$:
\begin{align*}
\E{}{v_{2i - 1} Y^{dep}(i)} &= 2 \left(P(v_{2i - 1} = 1, Y^{dep}(i) = 1) + P(v_{2i - 1} = -1, Y^{dep}(i) = -1) \right) - 1 \\
&= 2 \big(P((v_{2i - 1}, v_{2i}) = (1, 1), Y^{dep}(i) = 1) + P(\lf_i = 1, Y^{dep}(i) = 1) \\
&+ P(\lf_i = -1, Y^{dep}(i) = -1) + P((v_{2i - 1}, v_{2i}) = (-1, -1), Y^{dep}(i) = -1) \big) - 1 \\
&= 2 \Big(P(\lf_i Y^{dep}(i) = 1) + \frac{1}{2}P(\lf_i = 0, Y^{dep}(i) = 1) + \frac{1}{2}P(\lf_i = 0, Y^{dep}(i) = -1) \Big) - 1 \\
&= 2 P(\lf_i Y^{dep}(i) = 1) + P(\lf_i = 0) - 1 = \E{}{\lf_i Y^{dep}(i)}.
\end{align*}

Similarly, $\E{}{v_{2i} Y^{dep}(i)}$ is
\begin{align*}
\E{}{v_{2i} Y^{dep}(i)} &= 2 \big(P((v_{2i - 1}, v_{2i}) = (1, 1), Y^{dep}(i) = 1) + P(\lf_i = -1, Y^{dep}(i) = 1) \\
&+ P(\lf_i = 1, Y^{dep}(i) = -1) + P((v_{2i - 1}, v_{2i}) = (-1, -1), Y^{dep}(i) = -1) \big) - 1 \\
&= 2 \Big(P(\lf_i Y^{dep}(i) = -1) + \frac{1}{2}P(\lf_i = 0, Y^{dep}(i) = 1) + \frac{1}{2}P(\lf_i = 0, Y^{dep}(i) = -1)\Big) - 1 \\
&= 2P(\lf_i Y^{dep}(i) = -1) + P(\lf_i = 0) - 1 \\
&= P(\lf_i Y^{dep}(i) = -1) - (1 - P(\lf_i = 0) - P(\lf_i Y^{dep}(i) = -1)) \\
&= P(\lf_i Y^{dep}(i) = -1) - P(\lf_i Y^{dep}(i) = 1) = - \E{}{\lf_i Y^{dep}(i)}.
\end{align*}
\end{proof}

The triplets in Algorithm 1 
thus only need to be computed over exactly half of $\bm{v}$, each corresponding to one source, as shown in Figure \ref{fig:abstains}. Moreover, this augmentation method for $\bm{v}$ and $G$ allows us to conclude for any clique of sources $C_{dep} \in \mathcal{C}_{dep}$,
\begin{align*}
\mathbb{E} \bigg[\prod_{k \in C_{dep}} v_{2k - 1} Y^{dep}(C_{dep})\bigg] = \mathbb{E} \bigg[\prod_{k \in C_{dep}} \lf_k Y^{dep}(C_{dep})\bigg].
\end{align*}

In general, the expectation over a clique in $G_{dep}$ containing $\{\lf_i\}_{i \in C_{dep}}$ is equal to the expectation over the corresponding clique $C$ in $G$ containing $\{v_{2i - 1}\}_{i \in C_{dep}}$ such that $a_C = a_{C_{dep}}$.

\subsubsection{Linear Transformation to Label Model Parameters}
\label{subsubsec:prod_to_joint}
To convert these $a_{C_{dep}}$ into $\mu_{C_{dep}}$, we present a way to linearly map from these product probabilities and expectations back to marginal distributions, focusing on the unobservable distributions over a clique of sources and a task that the sources vote on. We first restate our example stated in Section 3.2.  
Define $\mu_i(a, b) = P(Y^{dep}(i) = a, \lf_i = b)$ for $a \in \{-1, 1\}$ and $b \in \{-1, 0, 1\}$. We can set up a series of linear equations and denote it as $A_1 \mu_i = r_i$:
\begin{align}
\begin{bmatrix}
1 & 1 & 1 & 1 & 1 & 1 \\ 
1 & 0 & 1 & 0 & 1 & 0 \\
1 & 1 & 0 & 0 & 0 & 0 \\
1 & 0 & 0 & 0 & 0 & 1 \\
0 & 0 & 1 & 1 & 0 & 0 \\
0 & 0 & 1 & 0 & 0 & 0
\end{bmatrix} \begin{bmatrix}
\mu_i(1, 1) \\ \mu_i(-1, 1) \\ \mu_i(1, 0) \\ \mu_i(-1, 0) \\ \mu_i(1, -1) \\ \mu_i(-1, -1)
\end{bmatrix} = \begin{bmatrix}
1 \\ P(Y^{dep}(i) = 1) \\ P(\lambda_i = 1) \\ P(\lf_i Y^{dep}(i) = 1) \\ P(\lambda_i = 0) \\ P(\lambda_i = 0, Y^{dep}(i) = 1)
\end{bmatrix}.
\label{eq:prod_to_joint}
\end{align}

Note that four entries on the right of the equation are observable or known. $P(\lf_i Y^{dep}(i) = 1)$ can be written in terms of $a_i$, and by construction of $(v_{2i - 1}, v_{2i})$ and \eqref{eq:abstains_split}, we can factorize $P(\lf_i = 0, Y^{dep}(i) = 1)$ into observable terms:
\begin{align*}
P(\lf_i = 0, Y^{dep}(i) = 1) &= P((v_{2i - 1}, v_{2i}) = (1, 1), Y^{dep}(i) = 1) + P((v_{2i - 1}, v_{2i}) = (-1, -1), Y^{dep}(i) = 1) \\
&= (P((v_{2i - 1}, v_{2i}) = (1, 1)) + P((v_{2i - 1}, v_{2i}) = (-1, -1))) P(Y^{dep}(i) = 1) \\
&= P(\lambda_i = 0) P(Y^{dep}(i) = 1).
\end{align*}

Here we use the fact that $v_{2i - 1} v_{2i}$ and $Y^{dep}(i)$ are independent by Proposition \ref{prop:larger_cliques}. We can verify that $A_1$ is invertible, so $\mu_i(a, b)$ can be obtained from this system.

There is a way to extend this system to the general case. We form a system of linear equations $A_s \mu_C = r_C$ for each clique  of sources $C$ in $G_{dep}$, where $s = |C|$ is the number of weak sources $\lf_i$ in the clique and $\mu_C$ is the marginal distribution over these $s$ sources and $1$ task. $A_s$ is a $2 (3^s) \times 2 (3^s)$ matrix of $0s$ and $1$s that will help map from $r_C$, a vector of probabilities known from prior steps of the algorithms or from direct estimation, to the desired label model parameter $\mu_C$. Define
\begin{align*}
A_0 = \begin{bmatrix}
1 & 1 \\
1 & 0
\end{bmatrix} \qquad B_0 = \begin{bmatrix}
0 & 0 \\
0 & 1
\end{bmatrix}
\end{align*}
\begin{align*}
D = \begin{bmatrix}
1 & 1 & 1 \\
1 & 0 & 0 \\
0 & 1 & 0
\end{bmatrix} \qquad E = \begin{bmatrix}
0 & 0 & 0 \\
0 & 0 & 1 \\
0 & 0 & 0
\end{bmatrix}
\end{align*}

Then $A_s$ and $B_s$ can be recursively constructed with
\begin{align*}
A_s &= D \otimes A_{s - 1} + E \otimes B_{s - 1} \\
B_s &= E \otimes A_{s - 1} + D \otimes B_{s - 1}, 
\end{align*}
where $\otimes$ is the Kronecker product. To define $r_C$, we first specify an ordering of elements of $\mu_C$. Let the last $\lf_{C_s}$ in the joint probability $\mu_C$ take on value $\lf_{C_s} = 1$ for the first $2 \times 3^{s - 1}$ entries, $\lf_{C_s} = 0$ for the next $2 \times 3^{s - 1}$ entries, and $\lf_{C_s} = -1$ for the last $2 \times 3^{s - 1}$ entries. In general, the $i$th $\lf_{C_i}$ in $\mu_C$ will alternate among $1, 0, -1$ every $2 \times 3^{i - 1}$ entries. Finally, the $Y(i)$ entry of $\mu_C$ alternates every other value between $1$ and $-1$. 

The ordering of $r_C$ follows a similar structure. If we rename the $Y$ and $\lf$ variables to $z_1, \dots, z_{s + 1}$ for generality, each entry $r_C(U, Z)$ is equal to $P(\prod_{z_i \in Z} z_i = 1, z_j = 0 \; \forall z_j \in U)$, where $U \cap Z = \emptyset$, and $U \subseteq C \backslash Y(i)$, $Z \subseteq C$. We also write $r_C(\emptyset, \emptyset) = 1$. The entries of $r_C$ will alternate similarly to $\mu_C$, for each $\lf_{C_i}$, the first $2 \times 3^{i - 1}$ terms will not contain $\lf_{C_i}$ in either $U$ or $Z$, the second $2 \times 3^{i-1}$ terms will have $\lf_{C_i} \in Z$, and the last $2 \times 3^{i - 1}$ terms will have $\lf_{C_i} \in U$. For $Y(i)$, elements of $r_C$ will alternate every other value between not having $Y(i)$ in $Z$ and having $Y(i)$ in $Z$. \eqref{eq:prod_to_joint} illustrates an example of the orderings for $\mu_C$ and $r_C$. 

Furthermore, we also have the system $B_s \mu_C = r^B_C$, where $r^B_C(U, Z) = P(\prod_{z_i \in Z} z_i = -1, z_j = 0 \; \forall z_j \in U)$ when $Z \neq \emptyset$, and $r_B^C(U, \emptyset) = 0$. The ordering of $r^B_C$ is the same as that of $r_C$.

\begin{lemma}
With the setup above, $A_s \mu_C = r_C$.
\end{lemma}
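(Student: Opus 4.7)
I plan to prove this by simultaneous induction on $s$ for both claims $A_s \mu_C = r_C$ and $B_s \mu_C = r^B_C$. The base case $s=0$ is a direct check: with $\mu_C = (P(Y^{dep}(i)=1),\ P(Y^{dep}(i)=-1))^{\top}$, the ordering conventions give $r_C = (1,\ P(Y^{dep}(i)=1))^{\top}$ and $r^B_C = (0,\ P(Y^{dep}(i)=-1))^{\top}$, which match $A_0 \mu_C$ and $B_0 \mu_C$ respectively.

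For the inductive step, the key observation is that the ordering in the statement places $\lf_{C_s}$ in the slowest-varying position: the first, middle, and last thirds of $\mu_C$ correspond respectively to conditioning jointly on $\lf_{C_s} = 1$, $\lf_{C_s} = 0$, $\lf_{C_s} = -1$. Writing $\mu_C = (\mu_C^{(1)},\ \mu_C^{(0)},\ \mu_C^{(-1)})^{\top}$ accordingly, each block is a $2\cdot 3^{s-1}$ vector of joint probabilities on $\{Y^{dep}(i), \lf_{C_1}, \ldots, \lf_{C_{s-1}}\}$ weighted by the corresponding value of $\lf_{C_s}$. Because conditioning on $\lf_{C_s}$ does not change the linear structure of the sub-problem, the inductive hypothesis gives $A_{s-1} \mu_C^{(v)} = r_C^{(v)}$ and $B_{s-1} \mu_C^{(v)} = r_C^{B,(v)}$ for $v \in \{1,0,-1\}$, where entries of $r_C^{(v)}$ and $r_C^{B,(v)}$ are the natural joint probabilities with $\lf_{C_s}=v$ appended.

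Next, I would expand the Kronecker products to exhibit $A_s$ as the $3\times 3$ block matrix
\begin{align*}
A_s \;=\; \begin{bmatrix} A_{s-1} & A_{s-1} & A_{s-1} \\ A_{s-1} & 0 & B_{s-1} \\ 0 & A_{s-1} & 0 \end{bmatrix},
\end{align*}
and similarly for $B_s$. Applying $A_s$ to $\mu_C$ block-wise, I would verify each of the three output sub-vectors matches the corresponding third of $r_C$: the first block sums the three $r_C^{(v)}$ (marginalizing $\lf_{C_s}$) to yield the entries where $\lf_{C_s} \notin U \cup Z$; the middle block combines $r_C^{(1)}$ with $r_C^{B,(-1)}$, which is exactly the identity $P(\prod z \cdot \lf_{C_s} = 1, \lf_{j} = 0\ \forall j \in U') = P(\prod z = 1, \lf_{C_s}=1) + P(\prod z = -1, \lf_{C_s}=-1)$ corresponding to $\lf_{C_s}\in Z$; and the final block $A_{s-1}\mu_C^{(0)} = r_C^{(0)}$ gives exactly the entries where $\lf_{C_s} \in U$. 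The parallel check for $B_s$ uses the empty-product convention $r^B_C(U,\emptyset)=0$ to handle the top block.

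The main obstacle is purely bookkeeping: aligning the three orderings (the recursive block decomposition of $A_s$ coming from the Kronecker product with $D,E$, the assignment ordering on $\mu_C$, and the $(U,Z)$ ordering on $r_C$) so that each of the three row-blocks really does produce the intended third of $r_C$. Once the indexing conventions are stated precisely, each of the three identities reduces to a single application of the law of total probability or a case split on the sign of $\lf_{C_s}$, and the induction closes.
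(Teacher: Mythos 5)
Your proposal is correct and follows essentially the same route as the paper's proof: simultaneous induction on $s$ for both $A_s\mu_C = r_C$ and $B_s\mu_C = r^B_C$, expansion of the Kronecker recursion into the $3\times 3$ block form, and verification of the three block identities via marginalization over $\lf_{C_s}$, the sign-product decomposition $P(\prod z_i\,\lf_{C_s}=1,\cdot) = P(\prod z_i =1, \lf_{C_s}=1,\cdot)+P(\prod z_i=-1,\lf_{C_s}=-1,\cdot)$, and appending the event $\lf_{C_s}=0$. The remaining work you flag is indeed just the bookkeeping of aligning the three orderings, exactly as in the paper.
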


\begin{proof}
We prove that $A_s \mu_C = r_C$ and $B_s \mu_C = r^B_C$ by induction on $s$. For the base case $s = 0$, we examine a clique over just a single $Y$:
\begin{align*}
\begin{bmatrix}
1 & 1 \\ 1 & 0
\end{bmatrix} \begin{bmatrix}
P(Y = 1) \\ P(Y = -1)
\end{bmatrix} = \begin{bmatrix}
1 \\ P(Y = 1)
\end{bmatrix} \qquad \begin{bmatrix}
0 & 0 \\ 0 & 1
\end{bmatrix} \begin{bmatrix}
P(Y = 1) \\ P(Y = -1)
\end{bmatrix} = \begin{bmatrix}
0 \\ P(Y = -1)
\end{bmatrix},
\end{align*}
which are both clearly true. Next, we assume that $A_k \mu_C = r_C$ and $B_k \mu_C = r^B_C$ for $s = k$. We want to show that $A_{k + 1} \mu_{C'} = r_{C'}$ and $B_{k + 1} \mu_{C'} = r^B_{C'}$ for a larger clique $C'$ where $C \subset C'$ and $|C'| = s + 1$. By construction of $A_{k + 1}$ and $B_{k + 1}$,
\begin{align*}
A_{k + 1} = \begin{bmatrix}
A_k & A_k & A_k \\ A_k & 0 & B_k \\ 0 & A_k & 0
\end{bmatrix} \qquad
B_{k + 1} = \begin{bmatrix}
B_k & B_k & B_k \\
B_k & 0 & A_k \\
0 & B_k & 0
\end{bmatrix}.
\end{align*}

$\mu_{C'}$, $r_{C'}$, and $r^B_{C'}$ can be written as 
\begin{align*}
\mu_{C'} &= \begin{bmatrix}\mu_C(\lf_{C'_{k + 1}} = 1)) \\ \mu_C(\lf_{C'_{k + 1}} = 0) \\ \mu_C(\lf_{C'_{k + 1}} = -1)\end{bmatrix} \qquad r_{C'} = \begin{bmatrix}
r_C \\ r_C(\lf_{C'_{k + 1}} \in Z') \\ r_C(\lf_{C'_{k + 1}} \in U') 
\end{bmatrix} \qquad r^B_{C'} = \begin{bmatrix}
r^B_C \\ r^B_C(\lf_{C'_{k + 1}} \in Z') \\ r^B_C(\lf_{C'_{k + 1}} \in U') 
\end{bmatrix},
\end{align*}

where $\mu_C(\lf_{C'_{k+ 1}} = 1) = P(Y, \lf_{C_1}, \dots, \lf_{C_k}, \lf_{C'_{k + 1}}= 1)$, $r_C(\lf_{C'_{k + 1}} \in Z') = r_C(U, Z \cup \{\lf_{C'_{k+ 1}} \})$, and so on. $U', Z'$ for $C'$ are constructed similarly to $U, Z$ for $C$.

Then the three equations for $A_k$ we want to show are
\begin{align*}
&A_k (\mu_C(\lf_{C'_{k + 1}} = 1) + \mu_C(\lf_{C'_{k + 1}} = 0) + \mu_C(\lf_{C'_{k + 1}} = -1)) = r_C \\
&A_k( \mu_C(\lf_{C'_{k + 1}} = 1)) + B_k (\mu_C(\lf_{C'_{k + 1}} = -1)) = r_C(\lf_{C'_{k + 1}} \in Z') \\
&A_k (\mu_C(\lf_{C'_{k + 1}} = 0)) = r_C(\lf_{C'_{k + 1}} \in U').
\end{align*}
The first equation is true because $\lf_{C'_{k+1}}$ is marginalized out to yield $A_k \mu_C = r_C$, which is true by our inductive hypothesis. In the third equation, the term $\lf_{C'_{k + 1}} = 0$ is added as a joint probability to all probabilities in $\mu_C$ and $r_C$, so this also holds by the inductive hypothesis. In the second equation, $A_k(\mu_C(\lf_{C'_{k+1}} = 1))$ is equal to $r_C$ with each probability having $\lf_{C'_{k+1}} = 1$ as an additional joint probability, and similarly $B_k (\mu_C(\lf_{C'_{k + 1}} = -1))$ is equal to $r^B_C$ with each nonzero probability having $\lf_{C'_{k+1}} = -1$ as an additional joint probability. For entries where $Z \neq \emptyset$, summing these up yields 
\begin{align*}
&P\Big(\prod_{z_i \in Z} z_i = 1, \lf_{C'_{k+1}} = 1, z_j = 0 \; \forall z_j \in U\Big) + P\Big(\prod_{z_i \in Z} z_i = -1, \lf_{C'_{k+1}} = -1, z_j = 0 \; \forall z_j \in U\Big) \\
=  &P\Big(\prod_{z_i \in Z} z_i \lf_{C'_{k + 1}}= 1, z_j = 0 \; \forall z_j \in U\Big). 
\end{align*}

And when $Z = \emptyset$, we have $P(\lf_{C'_{k+1}} = 1, z_j = 0 \; \forall z_j \in U)$, so all together these probabilities make up $r_C(\lf_{C'_{k+1}} \in Z')$.

The three equations for $B_k$ are similar:
\begin{align*}
&B_k (\mu_C(\lf_{C'_{k + 1}} = 1) + \mu_C(\lf_{C'_{k + 1}} = 0) + \mu_C(\lf_{C'_{k + 1}} = -1)) = r^B_C \\
&B_k( \mu_C(\lf_{C'_{k + 1}} = 1)) + A_k (\mu_C(\lf_{C'_{k + 1}} = -1)) = r^B_C(\lf_{C'_{k + 1}} \in Z') \\
&B_k (\mu_C(\lf_{C'_{k + 1}} = 0)) = r^B_C(\lf_{C'_{k + 1}} \in U').
\end{align*}
Again, the first and third equations are clearly true using the inductive hypothesis, and the second equation is also true when we decompose $\prod_{z_i \in Z'} z_i = -1$ into $\prod_{z_i \in Z} z_i = 1, \lf_{C'_{k + 1}} = -1$ and $\prod_{z_i \in Z} z_i = -1, \lf_{C'_{k + 1}} = 1$.

We complete this proof by induction to conclude that $A_s \mu_C = r_C$ and $B_s \mu_C = r^B_C$, showing a recursive approach for mapping from $r_C$ to $\mu_C$ for any clique or separator set $C$.
\end{proof}

Finally, we note that each $r_C$ is made up of computable terms. Entries of the form $r_C(\emptyset, Z) = P(\prod_{z_i \in Z} z_i = 1)$ are immediately calculated from $a_c$ for cliqes $c \subseteq C$, and entries where $Y(i) \notin Z$ can be directly estimated. Entries where $Z = \{Y(i)\}, U \neq \emptyset$ can be factorized into known or directly estimated probabilities, and all other entries can be computed by calculating each $a_c$ conditional on $U$.

As an example, to construct $r_{ij}$ for a clique $\{\lf_i, \lf_j, Y^{dep}(i, j)\}$, the only entries of $r_{ij}$ that are unobservable from the data are $P(\lambda_i Y^{dep}(i, j) = 1), \; P(\lambda_j Y^{dep}(i, j) = 1)$, $\; P(\lambda_i \lambda_j Y^{dep}(i, j) = 1), \;P(\lambda_i = 0, Y^{dep}(i, j) = 1), \;P(\lambda_j = 0, Y^{dep}(i, j) = 1),\; P(\lambda_i =0, \lambda_j Y^{dep}(i, j) = 1), \;P(\lambda_j = 0, \lambda_i Y^{dep}(i, j) = 1), $ and $P(\lambda_i = 0, \lambda_j = 0, Y^{dep}(i, j) = 1)$. We have discussed how to estimate all but the last three. 

To estimate $P(\lambda_i = 0, \lambda_j Y^{dep}(i, j) = 1)$, we can write this as
\begin{align*}
P(\lf_j Y^{dep}(i, j) = 1, \lf_i = 0) &= P(\lambda_j Y^{dep}(i, j) = 1 | \lambda_i = 0) P(\lambda_i = 0) \\
&= \frac{1 + \E{}{\lambda_j Y^{dep}(i, j) | \lambda_i = 0} - P(\lambda_j = 0 | \lf_i = 0)}{2} \cdot P(\lambda_i = 0) \\
&= \frac{1}{2}P(\lf_i = 0) +  \frac{1}{2}\E{}{\lambda_j Y^{dep}(i, j) | \lambda_i = 0} P(\lf_i = 0) + \frac{1}{2} P(\lf_j = 0, \lf_i = 0).
\end{align*}

We can solve $\E{}{\lambda_j Y^{dep}(i, j) | \lambda_i = 0}$ using the triplet method conditional on samples where $\lf_i$ abstains. $P(\lambda_i = 0, \lambda_j = 0, Y^{dep}(i, j) = 1)$ can be written as $P(\lambda_i = 0, \lambda_j = 0) P(Y^{dep}(i, j) = 1)$, of which all probabilities are observable, by Proposition \ref{prop:larger_cliques}.

\subsubsection{\textsc{ResolveSigns}} 
\label{subsubsec:resolvesigns}
This function is used to determine the signs after we have recovered the magnitudes of accuracy terms such as $|\mathbb{E}[v_iY(i)]|$. One way to implement this function is to use one known accuracy sign per $Y$. We observe that if we know the sign of $a_i = \mathbb{E}[v_i Y(i)]$, then we are able to obtain the sign of any other term  $a_j = \mathbb{E}[v_j Y(j)]$ where $Y(j) = Y(i)$. If $v_i$ and $v_j$ are conditionally independent given $Y(i)$, we directly use $a_i a_j = \E{}{v_i v_j}$ and knowledge of $a_i$'s sign to get the sign of $a_j$. If $v_i$ and $v_j$ are not conditionally independent given $Y(i)$, we need two steps to recover the sign: for some $v_k$ that is conditionally independent of both $v_i$ and $v_j$ given $Y(i)$, we first use $a_i \E{}{v_k Y(i)} = \E{}{v_i v_k}$ to get the sign of $\E{}{v_k Y(i)}$. Then we use $a_j \E{}{v_k Y(i)} = \E{}{v_j v_k}$ to get the sign of $a_j$. Therefore, knowing the sign of one accuracy per $Y$ is sufficient to recover all signs.

The \textsc{ResolveSigns} used in Algorithm 1 uses another approach and follows from the assumption that on average per $Y$, the accuracies $a_i$ are better than zero. We apply this procedure to the sets of accuracies corresponding to each hidden variable; for each set, we have two sign choices, and we check which of these two produces a non-negative sum for the accuracies. In the common case where there is just one task, there are only two choices to check overall.

\subsection{Extensions to More Complex Graphical Models}
\label{subsec:extensions}

Recall that our Ising model is constructed for binary task labels, with sufficient conditional independence on $G$ and $G_{dep}$ such that $\Omega_G = V$, and without singleton potentials. We address how to extend our method when each of these conditions do not hold.

\paragraph{Multiclass Case} We have given an algorithm for binary classes for $\bm{Y}$ (and ternary for the sources, since these can also abstain). To extend this to higher-class cases, we can apply a one-versus-all reduction repeatedly to apply our core algorithm.

\paragraph{Extension to More Complex Graphs}
In Algorithm 1, 
we rely on the fact $\Omega_G = V$ to compute all accuracies. However, certain $a_i$'s cannot be recovered when there are fewer than $3$ conditionally independent subgraphs in $G$, where a subgraph $V_a$ is defined as a set of vertices such that if $v_i \in V_a$ and $v_j \notin V_a$, $v_i \independent v_j | Y(i)$.  Instead, when there are only $1$ or $2$ subgraphs, we use another independence property, which states that $v_i Y(i) \independent Y(i)$ for all $v_i$. This means that $\E{}{v_i Y(i)} \cdot \E{}{Y(i)} = \E{}{v_i Y(i)^2} = \E{}{v_i}$, and thus $a_i = \frac{\E{}{v_i}}{\E{}{Y(i)}}$. This independence property does not require us to choose triplets of sources; instead we can directly divide to compute $a_i$. However, this approach fails in the presence of singleton potentials and can be very inaccurate when $\E{}{Y(i)}$ is close to $0$. One can use this independence property in addition to Proposition 1 
on $G$ with $2$ conditionally independent subgraphs, and when $G$ only consists of $1$ subgraph, we require that there are no singleton potentials on any of the sources.

\paragraph{Dealing with Singleton Potentials}

Our current Ising model does not include singleton potentials except on $Y_i$ terms. However, we can handle cases where sources are modeled to have singleton potentials. Proposition 1 
holds as long as either $v_i$ or $v_j$ belongs to a subgraph that has no potentials on individual observed variables. Therefore, the triplet method is able to recover mean parameters as long as we have at least two conditionally independent subgraph with no singleton potentials on observed variables. For example, just two sources conditionally independent of all the others with no singleton potential suffices to guarantee that this modified graphical model still allows for our algorithm to recover label model parameters.

In the case where we have singleton potentials on possibly every source, we have the following alternative approach. We use a slightly different parametrization and a quadratic version of the triplet method. Instead of tracking mean parameters (and thus accuracies like $\E{}{v_i Y(i)}$, we shall instead directly compute parameters that involve \emph{class-conditional} probabilities. These are, in particular, for $v_i$,
\[\mu_i =
  \begin{bmatrix}
    P(v_i = 1|Y(i) = 1) & P(v_i = 1|Y(i) = -1)  \\
    P(v_i = -1|Y(i) = 1) & P(v_i = -1|Y(i) = -1)
  \end{bmatrix}.\] 
Note that these parameters are minimal (the terms $P(v_i = 0 | Y(i) = \pm 1)$, indicating the conditional abstain rate, are determined by the columns above.

We set
\[O_{ij} =
  \begin{bmatrix}
    P(\lf_i = 1| \lf_j = 1) & P(\lf_i = 1|\lf_j = -1)  \\
    P(\lf_i = -1|\lf_j = 1) & P(\lf_i = -1|\lf_j = -1)
  \end{bmatrix}
  \text{    and    }
  P =
  \begin{bmatrix}
    P(Y=1) & 0  \\
    0 & P(Y=-1)
  \end{bmatrix}.
  \]

For a pair of conditionally independent sources, we have that 
\begin{align}
\mu_i P \mu_j^T = O_{ij}.
\label{eq:newparam}
\end{align}
 Because we can observe terms like $O_{ij}$, we can again form triplets with $i,j,k$ as before, and solve. Note that this alternative parametrization does not depend on the presence or absence of singleton potentials in the Ising model, only on the conditional independences directly defined by it. 

Moreover, there is a closed form solution to the resulting system of non-linear equations. To see this, consider the following. Note that 
\[P(v_i = 1|Y(i) = -1) = \frac{P(v_i = 1)}{P(Y(i)=-1)} - \frac{P(v_i =1 |Y(i)=1)P(Y(i)=1)}{P(Y(i)=-1)}.\]
Note that everything is observable (or known, for class balances), so that we can write the top row of $\mu_i$ as a function of a single variable. That is, we set $\alpha = P(v_i =1 |Y(i)=1)$, $c_i = \frac{P(v_i = 1)}{P(Y(i)=-1)} $ and $d_i = \frac{P(Y(i)=1)}{P(Y(i)=-1)}$. Then, the top row of $\mu_i$ becomes $[\alpha \quad c_i - d_i \alpha]$, and $c_i$ and $d_i$ are known.

Next, consider some triplets $i,j,k$, with corresponding $\mu$'s. Similarly, we set the top-left corner in the corresponding $\mu$'s to be $\alpha, \beta, \gamma$, and the corresponding terms for the top-right corner are $c_i, c_j, c_k$ and $d_i, d_j, d_k$. Then, by considering the upper-left position in \eqref{eq:newparam}, we get the system
\begin{align*}
(1+d_i d_j) \alpha \beta + c_i c_j - c_i d_j \beta - c_j d_i \alpha &= O_{ij}/P(Y=1), \\
(1+d_i d_k)\alpha \gamma + c_i c_k - c_i d_k \gamma - c_k d_i\alpha &= O_{ik}/P(Y=1), \\
(1+d_j d_k)\beta \gamma + c_j c_k - c_j d_k \gamma - c_k d_j \beta &= O_{jk}/P(Y=1). \\
\end{align*}
To solve this system, we express $\alpha$ and $\gamma$ in terms of $\beta$, using the first and third equations, and then we can plug these into the second and multiply (for example, when using $\alpha$, by $((1+d_i d_j )\beta - c_jd_i)^2$) to obtain a quadratic in terms of $\beta$. Solving this quadratic and selecting the correct root, then obtaining the remaining parameters ($\alpha, \gamma$) and filling in the rest of the $\mu_i, \mu_j, \mu_k$ terms completes the procedure. Note that we have to carry out the triplet procedure here twice per $\mu_i$, since there are two rows. Lastly, we can convert probabilities over $\bm{v}$ into equivalent probabilities over $\bm{\lf}$ as discussed in Appendix \ref{subsubsec:augment}.

\subsection{Online Algorithm}
\label{subsec:online}

The online learning setting presents new challenges for weak supervision. In the offline setting, the weak supervision pipeline has two distinct components: first, computing all probabilistic labels for a dataset and then using them to train an end model. In the online setting however, samples are introduced one by one, so we see each $\bm{X}^i$ only once and are not able to store it. 

Fortunately, Algorithm 1 
and Algorithm 2 
both rely on computing estimates of expected moments over the observable weak sources. Since these are just averages, we can efficiently produce an estimate of the label model parameters at each time step. For each new sample, we update the averages of the moments using a rolling window and use them to output its probabilistic label; then the end model is trained on this sample, and the data point itself is no longer needed for further computation. Our method is fast enough that we can ``interleave'' the two components of the weak supervision pipeline, in comparison to \citet{Ratner19} and \citet{sala2019multiresws}, which require a full covariance matrix inversion and SGD.   

The online learning environment is also subject to \textit{distributional drift} over time, where old samples may come from very different distributions compared to more recent samples. Formally, define distributional drift as the following property: for $(\bm{X}^t, \bm{Y}^t) \sim P_t$, the KL-divergence between $P_i$ and $P_{i + 1}$ is less than $KL(P_t, P_{t + 1}) \le \Delta$ for any $t$. If there were no distributional drift, i.e., $\Delta = 0$, we would invoke Algorithm 1 
or 2 
at each time step $t$ for the new sample's output label,  where the estimates of $\Ehat{v_i v_j}$ and other observable moments would be cumulatively over $t$ rather than $n$. However, because of distributional drift, it is important to prioritize most recent samples. We propose a rolling window of size $W$, which can be optimized theoretically, to average over rather than all past $t$ samples. Algorithm \ref{alg:online} describes the general meta-algorithm for the online setting.

\begin{algorithm}[t]
	\caption{Online Weak Supervision}
	\begin{algorithmic}
		\STATE \textbf{Input:} dependency graph $G_{dep}$, window $W$ for rolling averages
		\FOR{$t = 1, 2, \dots$:}
			\STATE Receive source output vector $l_t$ and distribution prior $P_t(\bm{\bar{Y}})$.
			\STATE Run Algorithm~1 and Algorithm~2 with estimates computed over $W$ samples $l_{t - W + 1 : t}$ and their augmented equivalents to output $\bm{\hat{\mu}}_t$.
			\STATE Use junction tree formula to produce probabilistic output $\bm{\widetilde{Y}}^t \sim P_{\bm{\hat{\mu}}_t} (\, \cdot \, | l_t)$.
			\STATE Use $\bm{\widetilde{Y}}^t$ to update $w_t$, the parametrization of the end model $f_w$.
		\ENDFOR
	\end{algorithmic}
	\label{alg:online}
\end{algorithm}

\subsubsection{Theoretical Analysis} 
\label{subsubsec:online_theory}

Similar to the offline setting, we analyze our method for online label model parameter recovery and provide bounds on its performance. First, we derive a bound on the sampling error $||\bm{\mu}_t - \bm{\hat{\mu}}_t ||_2$ in terms of the window size $W$, concluding that there exists an optimal $W^*$ to minimize this error. Then, we present an online generalization result that describes how well our end model can ``track'' new samples coming from a drifting distribution.

\paragraph{Controlling the Online Sampling Error with $W$}

The sampling error at each time step $t$ $||\bm{\mu}_t - \bm{\hat{\mu}}_t ||_2$ is dependent on the window size $W$ which we average samples over to produce estimates. On one hand, a small window will ensure that the estimate will be computed using samples from distributions close to $P_t$, but using few samples results in a high empirical estimation error. On the other hand, a larger window will allow us to use many samples; however, samples farther in the past will be from distributions that may not be similar to $P_t$. Hence, $W$ must be selected to minimize both the effect of using drifting distributions and the estimation error in the number of samples used.

\begin{theorem}
Let $\bm{\hat{\mu}}_t$ be an estimate of $\bm{\mu}_t$, the label model parameters at time $t$, over $W$ previous samples from the product distribution $\mathbf{Pr}_W = \prod_{i =t - W + 1}^t P_i$, which suffers a $\Delta$-distributional drift. Then, still assuming cliques in $G_{dep}$ are limited to $3$ vertices,
\begin{align*}
\E{\mathbf{Pr}_W}{||\bm{\hat{\mu}}_t - \bm{\mu}_t||_2} = \frac{1}{a^5_{\min}} \left(3.19 C_1 \sqrt{\frac{m}{W}} + \frac{6.35 C_2}{\sqrt{r}} \frac{m}{\sqrt{W}} \right) + \frac{2c (|\mathcal{C}_{dep}| + |\mathcal{S}_{dep}|) \Delta W^{3/2}}{\sqrt{6 \alpha_{P_t}}}.
\end{align*}
where $\alpha_{P_t}$ is the minimum non-zero probability that $P_t$ takes.  
A global minimum for the sampling error as a function of $W$ exists, so the window size can be set such that $W^* = \argmin{W}{\E{}{||\bm{\hat{\mu}}_t - \bm{\mu}_t||_2}}$.
\end{theorem}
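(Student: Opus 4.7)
The plan is to decompose the error via a triangle inequality into a pure sampling term (treating the $W$ observations as if they all came from $P_t$) and a distributional drift term (accounting for the fact that samples from $P_{t-W+1}, \ldots, P_{t-1}$ are drawn from different distributions). Concretely, for each observable moment such as $\Ehat{v_i v_j}$ computed over the window, I would introduce the auxiliary quantity $\bar m_{ij} = \frac{1}{W}\sum_{i=t-W+1}^{t} \E{P_i}{v_i v_j}$, which is the expectation of the empirical average under the true product distribution $\mathbf{Pr}_W$. Then the error in each moment splits as $\Ehat{v_i v_j} - \E{P_t}{v_i v_j} = (\Ehat{v_i v_j} - \bar m_{ij}) + (\bar m_{ij} - \E{P_t}{v_i v_j})$.

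For the first piece, I would replay the proof of Theorem~\ref{thm:sampling_offline} verbatim with $n$ replaced by $W$: the variables $v_i v_j$ are bounded in $[-1,1]$, so the same Hoeffding-type moment concentration and the same propagation of error through the triplet method and the linear transformation of Algorithm~\ref{alg:full} yield exactly the offline bound $\tfrac{1}{a_{\min}^5}\bigl(3.19 C_1 \sqrt{m/W} + 6.35 C_2 \sqrt{m/(rW)}\cdot\sqrt{m}\bigr)$. Here nothing in the argument uses identical distribution, only that the $W$ samples are independent and each bounded observable has the correct conditional mean under its own $P_i$; the first piece above isolates precisely this conditional-mean-driven fluctuation.

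For the second (drift) piece, I would apply Pinsker's inequality to convert the per-step KL bound $\mathrm{KL}(P_i, P_{i+1}) \le \Delta$ into a TV bound, and then use the triangle inequality for TV distance to conclude $\mathrm{TV}(P_{t-k}, P_t) \le k\sqrt{\Delta/2}$. For any indicator event underlying an observable probability, this translates directly into $|\E{P_{t-k}}{v_i v_j} - \E{P_t}{v_i v_j}| \le 2k\sqrt{\Delta/2}$, so
\begin{align*}
|\bar m_{ij} - \E{P_t}{v_i v_j}| \le \frac{1}{W}\sum_{k=0}^{W-1} 2k\sqrt{\Delta/2} \le \sqrt{2\Delta}\, W.
\end{align*}
Since the full $\bm{\mu}$ vector collects $|\mathcal{C}_{dep}|+|\mathcal{S}_{dep}|$ marginal entries and each label-model parameter is recovered by the linear transformation in Section~\ref{subsec:model_mapping}, the propagation constant for moment-to-parameter error depends on the minimum nonzero probability $\alpha_{P_t}$ (needed to invert the system stably), which is where the $\sqrt{6\alpha_{P_t}}$ denominator enters. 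Collecting the $|\mathcal{C}_{dep}|+|\mathcal{S}_{dep}|$ coordinates in an $\ell_2$-norm and using $\sqrt{\sum_{k=0}^{W-1}k^2}\le W^{3/2}/\sqrt{3}$ (when the per-coordinate bound itself is first written as a sum over $k$ before taking $\ell_2$) yields the advertised $\Delta W^{3/2}$ scaling. The main obstacle is getting this constant exactly right: one must be careful whether to bound each coordinate by $\sqrt{2\Delta}\,W$ and then take $\ell_2$, or to keep the $k$-dependence inside and apply $\ell_2$ first; the $W^{3/2}/\sqrt{6}$ form strongly suggests the latter route.

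Finally, combining the two pieces gives a function of the form $f(W) = \tfrac{A}{\sqrt{W}} + B W^{3/2}$ with $A, B>0$ determined by the constants above. Setting $f'(W)=0$ yields $-\tfrac{A}{2} W^{-3/2} + \tfrac{3B}{2} W^{1/2} = 0$, i.e.\ $W^\ast = (A/(3B))^{1/2}$, and $f''(W^\ast)>0$ shows this is the unique global minimum on $W>0$, justifying the existence of the optimal window and completing the proof.
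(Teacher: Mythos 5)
Your high-level plan (a sampling term plus a drift term, then optimize over $W$) matches the paper's, and your treatment of the sampling term's rate and of the final optimization over $W$ is fine. But the mechanism you propose for the drift term is not the one that produces the stated bound, and as written it would not produce it. You decompose at the level of individual moments and bound the bias $|\bar m_{ij} - \E{P_t}{v_iv_j}|$ by averaging per-step TV drifts over the window; that average is $\frac{1}{W}\sum_{k=0}^{W-1} O(k\sqrt{\Delta}) = O(\sqrt{\Delta}\,W)$, an $\ell_1$ average over samples, and there is no legitimate way to turn it into $\Delta W^{3/2}/\sqrt{6\alpha_{P_t}}$ --- the bias of an empirical mean is the mean of the biases, so your suggestion to ``keep the $k$-dependence inside and apply $\ell_2$ first'' over the window index has no probabilistic meaning. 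Your explanation of where $\alpha_{P_t}$ enters (stability of inverting the linear system) is also wrong. The paper instead performs a single change of measure on the entire expectation: it writes $\big|\E{\mathbf{Pr}_W}{\|\bm{\hat{\mu}}_t-\bm{\mu}_t\|_2} - \E{P_t^W}{\|\bm{\hat{\mu}}_t-\bm{\mu}_t\|_2}\big| \le \max\|\bm{\hat{\mu}}_t-\bm{\mu}_t\|_2\cdot 2\,TV(\mathbf{Pr}_W,P_t^W)$ with $P_t^W$ the $W$-fold product of $P_t$, bounds $\max\|\bm{\hat{\mu}}_t-\bm{\mu}_t\|_2\le c(|\mathcal{C}_{dep}|+|\mathcal{S}_{dep}|)$ because the parameters are probabilities, and controls $TV(\mathbf{Pr}_W,P_t^W)\le\sqrt{\tfrac12\sum_i KL(P_i\|P_t)}$ by Pinsker plus tensorization of KL over the product. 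The \emph{reverse} Pinsker inequality $KL(P_i\|P_t)\le \tfrac{2}{\alpha_{P_t}}TV(P_i,P_t)^2\le \tfrac{2\Delta^2(t-i)^2}{\alpha_{P_t}}$ is where $\alpha_{P_t}$ appears, and summing $(t-i)^2$ over the window puts a $W^3$ under the square root, hence the $\Delta W^{3/2}/\sqrt{6\alpha_{P_t}}$ scaling.

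A second, subtler gap: because you split at the moment level, your sampling piece concentrates $\Ehat{v_iv_j}$ around $\bar m_{ij}$, the time-averaged moment, not around $\E{P_t}{v_iv_j}$. The triplet recovery is nonlinear (square roots of ratios of moments), and the offline proof relies on the inputs satisfying $M_{ij}=a_ia_j$ for a single fixed model, which $\bar m_{ij}$ does not; so you cannot replay Theorem~\ref{thm:sampling_offline} verbatim on the first piece without separately controlling how the triplet solution perturbs when all three input moments are replaced by time averages. The paper's change-of-measure step sidesteps this entirely: under $P_t^W$ the samples are i.i.d.\ from $P_t$, the offline theorem applies with $n=W$, and all heterogeneity is absorbed into the single TV term.
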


\begin{proof}
Denote $P_t^W =\underbrace{P_t \times \dots P_t}_{W}$. We first bound the difference between $\E{\mathbf{Pr}_W}{||\bm{\hat{\mu}}_t - \bm{\mu}_t||_2}$ and $\E{P_t^W}{||\bm{\hat{\mu}}_t - \bm{\mu}_t||_2}$.
\begin{align*}
\Big| \E{\mathbf{Pr}_W}{\|\bm{\hat{\mu}}_t - \bm{\mu}_t\|_2} - \E{P_t^W}{\|\bm{\hat{\mu}}_t - \bm{\mu}_t\|_2} \Big| &= \Big| \; \;\sum_{\mathclap{\{x_i\}_{i = t - W + 1}^t}} \;\; \|\bm{\hat{\mu}}_t - \bm{\mu}_t \|_2 \cdot (\mathbf{Pr}_W (x_{t -w + 1}, \dots, x_t) - P_t^W(x_{t - w + 1}, \dots, x_t)) \Big| \\
& \le \max \|\bm{\hat{\mu}}_t - \bm{\mu}_t \|_2 \cdot \sum_{\mathclap{\{x_i\}_{i = t - W + 1}^t}} \;\; |\mathbf{Pr}_W (x_{t -w + 1}, \dots, x_t) - P_t^W(x_{t - w + 1}, \dots, x_t) | \\
&= \max \|\bm{\hat{\mu}}_t - \bm{\mu}_t \|_2 \cdot 2 TV(\mathbf{Pr}_W, P_t^W).
\end{align*}

Since the label model parameters are all probabilities, $\|\bm{\hat{\mu}}_t - \bm{\mu}_t \|_2$ is bounded by $c \cdot (|\mathcal{C}_{dep}| + |\mathcal{S}_{dep}|)$, where $c$ is a constant. To compute $TV(\mathbf{Pr}_W, P_t^W)$, we use Pinsker's inequality and tensorization of the KL-divergence:
\begin{align*}
TV(\mathbf{Pr}_W, P_t^W) &\le \sqrt{\frac{1}{2} KL(\mathbf{Pr}_W || P_t^W)} = \sqrt{\frac{1}{2} KL(P_{t - W + 1} \times \dots \times P_t || P_t \times \dots \times P_t)} \\
&= \sqrt{\frac{1}{2} \sum_{i = t - W + 1}^t KL(P_i || P_t)}.
\end{align*}

Each $KL(P_i || P_t)$ can be bounded above by $\frac{2}{\alpha_{P_t}} TV(P_i, P_t)^2$ by the inverse of Pinsker's inequality, where $\alpha_{P_t} = \min_{x \in \mathcal{X}, P_t(x) > 0} P_t(x)$. Since the triangle inequality is satisfied for total variation distance, $TV(P_i, P_t) \le \Delta (t - i)$. Plugging this back in, we get
\begin{align*}
TV(\mathbf{Pr}_W, P_t^W) &\le \sqrt{\frac{1}{2} \cdot \frac{2}{\alpha_{P_t}} \Delta^2 \sum_{i = t - W + 1}^t (t - i)^2} = \sqrt{\frac{\Delta^2}{\alpha_{P_t}} \sum_{i = 0}^{W - 1} i^2} \\
&= \sqrt{\frac{\Delta^2}{\alpha_{P_t}} \cdot \frac{(W - 1)W(2W - 1)}{6}} \le \frac{\Delta W^{3/2}}{\sqrt{6\alpha_{P_t}}} .
\end{align*}

Therefore,
\begin{align*}
\Big| \E{\mathbf{Pr}_W}{\|\bm{\hat{\mu}}_t - \bm{\mu}_t\|_2} - \E{P_t^W}{\|\bm{\hat{\mu}}_t - \bm{\mu}_t\|_2} \Big| &\le \frac{2c (|\mathcal{C}_{dep}| + |\mathcal{S}_{dep}|) \Delta W^{3/2}}{\sqrt{6 \alpha_{P_t}}}.
\end{align*}

Furthermore, the offline sampling error result applies over $P_t^W$, so $\E{P_t^W}{\|\bm{\hat{\mu}}_t - \bm{\mu}_t\|_2} \le \frac{1}{a^5_{\min}} \left(3.19 C_1 \sqrt{\frac{m}{W}} + \frac{6.35 C_2}{\sqrt{r}} \frac{m}{\sqrt{W}} \right)$. Hence,
\begin{align*}
\E{\mathbf{Pr}_W}{\|\bm{\hat{\mu}}_t - \bm{\mu}_t\|_2} \le \frac{1}{a^5_{\min}} \left(3.19 C_1 \sqrt{\frac{m}{W}} + \frac{6.35 C_2}{\sqrt{r}} \frac{m}{\sqrt{W}} \right) + \frac{2c (|\mathcal{C}_{dep}| + |\mathcal{S}_{dep}|) \Delta W^{3/2}}{\sqrt{6 \alpha_{P_t}}},
\end{align*}

and we set a window size $W^*$ to minimize this expression.
\end{proof}

\paragraph{Online Generalization Bound}

We provide a bound quantifying the gap in probability of incorrectly classifying an unseen $t+1$th sample between our learned end model parametrization and an optimal end model parametrization.

Because the online learning setting is subject to distributional drift over time, our methods must be able to predict the next time step's label with some guarantee despite the changing environment. The $\Delta$ drift is aggravated by $(1)$ potential model misspecification for each $P_t$ and $(2)$ sample noise. However, we are able to take into account these additional conditions by modeling the overall drift $\Delta^{\mu}$ to be a combination of intrinsic distributional drift $\Delta$, model misspecification, and estimation error of parameters. 

Recall that $\bm{X}^i \sim P_i$ is drawn from the true distribution at time $i$, while $\bm{\widetilde{Y}}_i \sim P_{\bm{\hat{\mu}}_i}(\cdot |\bm{\lf}(\bm{X}^i))$ is the probabilistic output of our label model. Define the joint distribution of a sample to be $(\bm{X}^i, \bm{\widetilde{Y}}^i) \sim P_{i, \bm{\hat{\mu}}_i}$. At each time step $t$, our goal is train our end model $f_w \in \mathcal{F}$ and evaluate its performance against the true $(\bm{X}^t, \bm{Y}^t) \sim P_t$, given that we have $t - 1$ previous samples drawn from $P_{i,\bm{\hat{\mu}}_i}$.

We define a binary loss function $L(w, x, y) = | f_w(x) - y|$ and choose $\hat{w}_t$ to minimize over the past $s$ samples such that
\begin{align*}
\hat{w}_t = \argmin{w}{\frac{1}{s}\sum_{i = t - s}^{t - 1} L(w, \bm{X}^i, \bm{\widetilde{Y}}^i)}.
\end{align*}
We present a new generalization result that bounds the probability that $f_{\hat{w}_t}(\bm{X}^t)$ does not equal the true $\bm{Y}^t$ and also accounts for model misspecification and error from parameter estimation.

\begin{theorem}
Define $\Delta^{\mu} := d_{TV}(P_{i, \bm{\hat{\mu}}_i}, P_{i +1, \bm{\hat{\mu}}_{i + 1}})$ to be the distributional drift between the two samples and $D^{\mu} := \max_i d_{TV}(P_i, P_{i,\bm{\hat{\mu}}_i})$ to be an upper bound for the total variational distance between the true distribution and the noise aware misspecified distribution. If $\Delta^{\mu} \le \frac{c(\epsilon - 8D^{\mu})^3}{\mathrm{VCdim}(\mathcal{F})}$ for some constant $c > 0$, there exists a $\hat{w}_t$ computed over the past $s = \Big\lfloor \frac{\epsilon - 8D^{\mu}}{16 \Delta^{\mu}}\Big\rfloor$ samples such that, for any time $t > s$ and $\epsilon \in (8D^{\mu}, 1)$,
\begin{align*}
\mathbf{Pr}_{\bm{\hat{\mu}}, t} (L(\hat{w}_t, \bm{X}^t, \bm{Y}^t) = 1) \le \epsilon + \min_{w^*} P_t(L(w^*, \bm{X}^t, \bm{Y}^t) = 1),
\end{align*}
where $\mathbf{Pr}_{\bm{\hat{\mu}}, t} = \prod_{i = t- s}^{t - 1} P_{i, \bm{\hat{\mu}}_i} \cdot P_t$. Furthermore, 
\begin{align*}
D^{\mu} \leq \sqrt{\frac{1}{2} \max_i KL(P_i(\bm{Y} | \bm{X} ) \;||\; P_{\bm{\mu}_i}(\bm{Y}|\bm{X}))} + m^{\frac{1}{4}}\sqrt{\frac{1}{e_{min}} \max_i || \bm{\mu}_i - \bm{\hat{\mu}}_i||_2}.
\end{align*} 
\label{thm:gen_online}
\end{theorem}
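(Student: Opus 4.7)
The plan is to combine a classical tracking-style online generalization bound with the misspecification cost $D^{\mu}$, and then bound $D^{\mu}$ itself via a triangle-inequality decomposition through the intermediate (correctly-parameterized) noise-aware distribution $P_{i,\bm{\mu}_i}$. The samples available to the learner at time $t$ are drawn from the noise-aware misspecified distributions $P_{i,\bm{\hat{\mu}}_i}$, so I first prove an ERM-tracking bound on this sequence, and then transfer from $P_{t,\bm{\hat{\mu}}_t}$ back to the true distribution $P_t$ paying one unit of $D^{\mu}$ on both the learned error and the comparator error.

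\textbf{Step 1 (tracking over the noise-aware sequence).} The key observation is that $d_{TV}(P_{i,\bm{\hat{\mu}}_i}, P_{i+1,\bm{\hat{\mu}}_{i+1}}) \le \Delta^{\mu}$ by definition, so the sequence driving ERM is a $\Delta^{\mu}$-drifting sequence of distributions. I invoke Long's tracking theorem for drifting distributions (``The complexity of learning according to two models of a drifting environment'', 1999): if a $\mathcal{F}$-valued sequence drifts by at most $\Delta$ in total variation, then ERM over the most recent $s$ samples achieves excess $0$-$1$ error at most $\epsilon'$ over the latest distribution whenever $\Delta \le c\,(\epsilon')^3/\mathrm{VCdim}(\mathcal{F})$, with optimal window $s = \lfloor \epsilon'/(16\Delta)\rfloor$. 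Applied with $\epsilon' = \epsilon - 8 D^{\mu}$ and drift $\Delta^{\mu}$, this yields
\begin{align*}
\mathbf{Pr}_{\bm{\hat{\mu}}, t}\bigl(L(\hat{w}_t, \bm{X}^t, \bm{Y}^t) = 1\bigr)
\;\le\; (\epsilon - 8D^{\mu}) + \min_{w^*} P_{t,\bm{\hat{\mu}}_t}\bigl(L(w^*, \bm{X}^t, \bm{Y}^t) = 1\bigr)
\end{align*}
over the drifted noise-aware sequence, where the comparator and the output error are both under $P_{t,\bm{\hat{\mu}}_t}$ (not $P_t$).

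\textbf{Step 2 (transfer from $P_{t,\bm{\hat{\mu}}_t}$ to $P_t$).} For any classifier $f_w$, the event $\{L(w,\bm{X},\bm{Y}) = 1\}$ is an event on $(\bm{X},\bm{Y})$, so
\begin{align*}
\bigl|\,P_{t,\bm{\hat{\mu}}_t}(L = 1) - P_t(L = 1)\,\bigr| \;\le\; d_{TV}(P_{t,\bm{\hat{\mu}}_t}, P_t) \;\le\; D^{\mu}.
\end{align*}
Applying this once to $\hat{w}_t$ and once to the comparator inside the $\min$, and absorbing a factor $\Delta^{\mu} s \le (\epsilon - 8D^{\mu})/16$ from using $P_{t-1,\bm{\hat{\mu}}_{t-1}}$ versus $P_{t,\bm{\hat{\mu}}_t}$, produces exactly the stated bound $\epsilon + \min_{w^*} P_t(L(w^*,\bm{X}^t,\bm{Y}^t) = 1)$. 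The constant $8$ in $8D^{\mu}$ is the sum of these transfer costs plus slack from the window-size rounding.

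\textbf{Step 3 (bounding $D^{\mu}$).} I decompose through the correctly-parameterized noise-aware distribution:
\begin{align*}
d_{TV}(P_i, P_{i,\bm{\hat{\mu}}_i})
\;\le\; d_{TV}(P_i, P_{i,\bm{\mu}_i}) + d_{TV}(P_{i,\bm{\mu}_i}, P_{i,\bm{\hat{\mu}}_i}).
\end{align*}
The first term is a pure model-misspecification term; by Pinsker's inequality applied to the conditional distribution of $\bm{Y}$ given $\bm{X}$, it is at most $\sqrt{\tfrac{1}{2}\,KL(P_i(\bm{Y}|\bm{X})\,\|\,P_{\bm{\mu}_i}(\bm{Y}|\bm{X}))}$. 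The second term measures sensitivity of the induced distribution to the parameter estimate; using the same covariance-eigenvalue argument as in Theorem~\ref{thm:gen_offline}, the mapping $\bm{\mu} \mapsto P_{\bm{\mu}}(\bm{Y}|\bm{X})$ is $\tfrac{1}{e_{min}}$-Lipschitz in an appropriate norm on $\bm{\mu}$. Converting the $L_2$ parameter distance to the $L_1$-style TV distance costs a factor of order $\sqrt[4]{m}$ (by Cauchy--Schwarz applied to the $m$-dimensional parameter coordinates), yielding the claimed bound $\sqrt{\tfrac{1}{e_{min}}\max_i\|\bm{\mu}_i - \bm{\hat{\mu}}_i\|_2}\cdot m^{1/4}$. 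Taking the max over $i$ finishes the estimate of $D^{\mu}$.

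\textbf{Main obstacle.} The delicate step is Step~1: precisely invoking a tracking generalization theorem so that the constants ($16$, $8$, and the $c$) line up with the statement. Long's original result is stated for realizable learning, and a few bookkeeping modifications are needed for the agnostic/excess-risk formulation used here, together with checking that the window-size choice $s=\lfloor(\epsilon-8D^{\mu})/(16\Delta^{\mu})\rfloor$ simultaneously satisfies both the drift constraint and the sample-complexity requirement from the VC-dimension generalization bound. The other nontrivial piece is justifying the precise $m^{1/4}/\sqrt{e_{min}}$ scaling in Step~3, which requires carefully propagating a parameter-to-distribution Lipschitz estimate for the Ising family in Section~\ref{sec:overview_label_model}.
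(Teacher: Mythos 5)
Your proposal follows essentially the same route as the paper's proof: the paper likewise adapts Long's (1999) drifting-distribution results (its Lemmas 12 and 13) to the noise-aware sequence $P_{i,\bm{\hat{\mu}}_i}$, pays the $D^{\mu}$ transfer cost between $P_{t,\bm{\hat{\mu}}_t}$ and $P_t$ (threading the $2D^{\mu}$ through the uniform-convergence inequality rather than adding it after the tracking bound, which is exactly where the $8D^{\mu}$ offset and the effective drift $\Delta + 2D^{\mu}$ come from), and bounds $D^{\mu}$ precisely as in your Step~3 --- triangle inequality through $P_{i,\bm{\mu}_i}$, Pinsker, and a $\sqrt{m}\,\|\hat{\theta}_i-\theta_i\|_2$ bound on the exponential-family KL combined with the $1/e_{min}$ Fenchel-duality Lipschitz estimate, which is the exact source of the $m^{1/4}/\sqrt{e_{min}}$ factor. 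The two obstacles you flag are real but resolve along the lines you indicate; note that Long's Theorem 2 is already agnostic, so the remaining work is the constant bookkeeping through his Lemma 13 integration, which yields $6\Delta^{\mu}s + 8D^{\mu} + 341\sqrt{d/s} \le \epsilon$ for the stated window size.
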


\begin{proof}
We adapt Theorem $2$ from \citet{Long1999}. Choose $\epsilon \le 1$. Let $s = \Big\lfloor \frac{\epsilon - 8D^{\mu}}{16(\Delta + 2D^{\mu})}\Big\rfloor$ and $\Delta^{\mu} = \Delta + 2D^{\mu} \le \frac{(\epsilon - 8D^{\mu})^3}{5000000d}$, where $d$ is the end model's VC dimension. Let $L(w, x, y) = |y - f_w(x)| \in \{0, 1\}$, where $f_w(x)$ is the output of the end model parametrized by $w$ when given input $x$.

At time $t$, the sequence of inputs to the end model so far is $(\bm{X}^1, \bm{\widetilde{Y}}^1), (\bm{X}^2, \bm{\widetilde{Y}}^2), \dots (\bm{X}^{t - 1}, \bm{\widetilde{Y}}^{t - 1})$, where $(\bm{X}^i, \bm{\widetilde{Y}}^i) \sim P_{i, \bm{\hat{\mu}}_i}$. We evaluate the end model's performance by using a parametrization $w_t$ that is a function of the $t-1$ inputs so far and computing $L(w_t, \bm{X}^t, \bm{Y}^t)$ where $(\bm{X}^t, \bm{Y}^t) \sim P_t$. In particular, let $w_t^* = \mathrm{argmin}_w \mathbb{E}_{(\bm{X}^t, \bm{Y}^t) \sim P_t}[L(w, \bm{X}^t, \bm{Y}^t)]$, and $\hat{w}_t = \mathrm{argmin}_w \frac{1}{s} \sum_{i = t-s}^{t-1} L(w, x_i, \tilde{y}_i)$ where $x_i, \tilde{y}_i$ are the values of the random variables $\bm{X}^i$ and $\bm{\widetilde{Y}}_i$.

Suppose that $TV(P_i, P_{i + 1}) \le \Delta$. Then $TV(P_{i, \bm{\hat{\mu}}_i}, P_{i + 1, \bm{\hat{\mu}}_{i + 1}})$ is 
\begin{align*}
TV(P_{i, \bm{\hat{\mu}}_i}, P_{i + 1, \bm{\hat{\mu}}_{i + 1}}) \le TV(P_{i, \bm{\hat{\mu}}_i}, P_i) + \Delta + TV(P_{i + 1}, P_{i + 1, \bm{\hat{\mu}}_{i + 1}}) \le \Delta + 2D^{\mu} = \Delta^{\mu}.
\end{align*}

Let $\beta \ge 6\Delta^{\mu} s + 4D^{\mu}$, and $\alpha = \frac{\beta}{2} - 2D^{\mu} \ge 3 \Delta^{\mu} s$. Note that $TV(P_{i, \bm{\hat{\mu}}_i}, P_{t, \bm{\hat{\mu}}_t}) \le \Delta^{\mu} s = \kappa$ for any $i = t - s, \dots, t - 1$. Denote $\mathbf{Pr}_{\bm{\hat{\mu}}} = \prod_{i = t-s}^{t-1} P_{i, \bm{\hat{\mu}}_i}$.  Then by Lemma $12$ of \citet{Long1999},
\begin{align*}
\mathbf{Pr}_{\bm{\hat{\mu}}} \Big\{\exists w: \Big| \frac{1}{s} \sum_{i = t - s}^{t - 1} L(w, \bm{X}^i, \bm{\widetilde{Y}}^i) - \E{(\bm{X}^t, \bm{\widetilde{Y}}^t) \sim P_{t, \bm{\hat{\mu}}_t}}{L(w, \bm{X}^t, \bm{\widetilde{Y}}^t)}\Big| > \alpha \Big\} \le 8 \cdot 41^d \exp \left(-\frac{(\alpha - \kappa)^2 s}{1600} \right).
\end{align*}

For any real numbers $a, b, c$, and $x > y$, if $|a - b| \ge x$ and $|b - c| \le y$, then $|a - b| - |b - c| \ge x - y$ and thus $|a - c| = |a - b + b - c| \ge ||a - b| - |b - c|| \ge x - y$. Applying this,
\begin{align*}
&\mathbf{Pr}_{\bm{\hat{\mu}}}  \Big\{\exists w: \Big| \frac{1}{s} \sum_{i = t - s}^{t - 1} L(w, \bm{X}^i, \bm{\widetilde{Y}}^i) - \E{(\bm{X}^t, \bm{Y}^t) \sim P_t}{L(w, \bm{X}^t, \bm{Y}^t)}\Big| > \alpha + 2D^{\mu}, \\
&\Big|\mathbb{E}_{(\bm{X}^t, \bm{Y}^t) \sim P_t}[L(w, \bm{X}^t, \bm{Y}^t)] - \E{(\bm{X}^t, \bm{\widetilde{Y}}^t) \sim P_{t, \bm{\hat{\mu}}_t}}{L(w, \bm{X}^t, \bm{\widetilde{Y}}^t)} \Big| < 2D^{\mu} \Big\} \\
\le & \; \mathbf{Pr}_{\bm{\hat{\mu}}}  \Big\{\exists w: \Big| \frac{1}{s} \sum_{i = t - s}^{t - 1} L(w, \bm{X}^i, \bm{\widetilde{Y}}^i) - \E{(\bm{X}^t, \bm{\widetilde{Y}}^t) \sim P_{t, \bm{\hat{\mu}}_t}}{L(w, \bm{X}^t, \bm{\widetilde{Y}}^t)}\Big| > \alpha \Big\} \le 8 \cdot 41^d \exp \left(-\frac{(\alpha - \kappa)^2 }{1600} \right).
\end{align*}

By Lemma \ref{lemma:expectation}, the difference in the expected loss $\mathbb{E}[L(w, \bm{X}^t, \bm{Y}^t)]$ when $\bm{X}^t, \bm{Y}^t$ is from $P_t$ versus $P_{t, \bm{\hat{\mu}}_t}$ is always less than $2D^{\mu}$, so the above becomes
\begin{align*}
&\mathbf{Pr}_{\bm{\hat{\mu}}}  \Big\{\exists w: \Big| \frac{1}{s} \sum_{i = t - s}^{t - 1} L(w, \bm{X}^i, \bm{\widetilde{Y}}^i) - \E{(\bm{X}^t, \bm{Y}^t) \sim P_t}{L(w, \bm{X}^t, \bm{Y}^t)}\Big| > \alpha + 2D^{\mu} \Big\} \\
&\le 8 \cdot 41^d \exp \left(-\frac{(\alpha - \kappa)^2 s}{1600} \right).
\end{align*}

We can write this in terms of $\beta$. Note that $\Delta^{\mu} s \le \frac{\beta}{6} - \frac{2D^{\mu}}{3}$. The RHS is equivalent to
\begin{align*}
&8 \cdot 41^d \exp  \left(-\frac{(\alpha - \kappa)^2 m}{1600} \right) = 8 \cdot 41^d \exp  \left(-\frac{s}{1600}\left(\frac{\beta}{2} - 2D^{\mu} - \Delta^{\mu} s\right)^2  \right) \\
&\le 8 \cdot 41^d \exp  \left(-\frac{s}{1600}\left(\frac{\beta}{2} - 2D^{\mu} - \frac{\beta}{6} + \frac{2D^{\mu}}{3}\right)^2  \right) = 8 \cdot 41^d \exp  \left(-\frac{s}{14400}(\beta - 4D^{\mu})^2  \right).
\end{align*}

So the probability becomes
\begin{align*}
&\mathbf{Pr}_{\bm{\hat{\mu}}}  \Big\{\exists w: \Big| \frac{1}{s} \sum_{i = t - s}^{t - 1} L(w, \bm{X}^i, \bm{\widetilde{Y}}^i) - \E{(\bm{X}^t, \bm{Y}^t) \sim P_t}{L(w, \bm{X}^t, \bm{Y}^t)}\Big| > \frac{\beta}{2} \Big\} \le 8 \cdot 41^d \exp  \left(-\frac{s}{14400}(\beta - 4D^{\mu})^2  \right).
\end{align*}

Next, note that the probability that at least one of $\hat{w}_t$ or $w^*_t$ satisfies $\Big| \frac{1}{s} \sum_{i = t - s}^{t - 1} L(w, \bm{X}^i, \bm{\widetilde{Y}}^i) - \E{(\bm{X}^t, \bm{Y}^t) \sim P_i}{L(w, \bm{X}^t, \bm{Y}^t)}\Big| > \frac{\beta}{2}$ is less than the probability that there exists a $w$ that satisfies the above inequality. In general, if $|a - b| > \beta$, then $|a| > \frac{\beta}{2}$ or $|b| > \frac{\beta}{2}$ (or both). Then
\begin{align*}
&\mathbf{Pr}_{\bm{\hat{\mu}}}  \Big\{ \Big|\frac{1}{s} \sum_{i = t-s}^{t-1} L(w^*_t, \bm{X}^i, \bm{\widetilde{Y}}^i) - \mathbb{E}_{(\bm{X}^t, \bm{Y}^t) \sim P_t}[L(w^*_t, \bm{X}^t, \bm{Y}^t)]  \\
&-\frac{1}{s} \sum_{i = t - s}^{t - 1} L(\hat{w}_t, \bm{X}^i, \bm{\widetilde{Y}}^i) + \mathbb{E}_{(\bm{X}^t, \bm{Y}^t) \sim P_t}[L(\hat{w}_t, \bm{X}^t, \bm{Y}^t)]\Big| > \beta \Big\} \\
\le &\; \mathbf{Pr}_{\bm{\hat{\mu}}} \Big\{ \Big|\frac{1}{s} \sum_{i = t-s}^{t-1} L(w^*_t, \bm{X}^i, \bm{\widetilde{Y}}^i) - \mathbb{E}_{(\bm{X}^t, \bm{Y}^t) \sim P_t}[L(w^*_t, \bm{X}^t, \bm{Y}^t)]| > \frac{\beta}{2}, \; \cup  \\
&\Big|-\frac{1}{s} \sum_{i = t - s}^{t - 1} L(\hat{w}_t, \bm{X}^i, \bm{\widetilde{Y}}^i) + \mathbb{E}_{(\bm{X}^t, \bm{Y}^t) \sim P_t}[L(\hat{w}_t, \bm{X}^t, \bm{Y}^t)]\Big| > \frac{\beta}{2}\Big\} \\
\le &\;8 \cdot 41^d \exp  \left(-\frac{s}{14400}(\beta - 4D^{\mu})^2  \right).
\end{align*}

By definition of $w^*_t$ and $\hat{w}_t$, $\frac{1}{s} \sum_{i = t-s}^{t-1} L(w^*_t, \bm{X}^i, \bm{\widetilde{Y}}^i) > \frac{1}{s} \sum_{i = t - s}^{t - 1} L(\hat{w}_t, \bm{X}^i, \bm{\widetilde{Y}}^i)$ and $\E{(\bm{X}^t, \bm{Y}^t) \sim P_t}{L(\hat{w}_t, \bm{X}^t, \bm{Y}^t)} > \mathbb{E}_{(\bm{X}^t, \bm{Y}^t) \sim P_t}[L(w^*_t, \bm{X}^t, \bm{Y}^t)]$. Therefore,
\begin{align*}
&\mathbf{Pr}_{\bm{\hat{\mu}}}  \Big\{ \mathbb{E}_{(\bm{X}^t, \bm{Y}^t) \sim P_t}[L(\hat{w}_t, \bm{X}^t, \bm{Y}^t)]-  \mathbb{E}_{(\bm{X}^t, \bm{Y}^t) \sim P_t}[L(w^*_t, \bm{X}^t, \bm{Y}^t)]  > \beta \Big\} \\
&\le 8 \cdot 41^d \exp  \left(-\frac{s}{14400}(\beta - 4D^{\mu})^2  \right).
\end{align*}

Now we apply Lemma 13 from \citet{Long1999}. Define 
\begin{align*}
\phi(\beta) = \begin{cases} 8 \cdot 41^d \exp  \left(-\frac{s}{14400}(\beta - 4D^{\mu})^2  \right) & \beta \ge 6 \Delta^{\mu} s + 4 D^{\mu} \\
1 & o.w. \end{cases}.
\end{align*}

Let $a_0 = 0$ and $a_1 = 6 \Delta^{\mu} s + 4D^{\mu}$. For all other $a_i$ where $i > 1$ until some $a_n$ where $a_{n + 1} > 1$, define $a_i = \sqrt{\frac{14400 (\ln 8 + (\ln 41)d + i \ln 2)}{s}} + 4D^{\mu}$. This way, $\phi(a_{i > 1}) = 2^{-i}$. Then Lemma $13$ states
\begin{align*}
&\mathbb{E}_{\{(\bm{X}^i, \bm{\widetilde{Y}}^i) \sim P_{i, \bm{\hat{\mu}}_i} \}_{i = t-s}^{t-1}}[P_t(L(\hat{w}_t, \bm{X}^t, \bm{Y}^t) = 1) - P_t(L(w^*_t, \bm{X}^t, \bm{Y}^t) = 1)] \\
&\le 1 \cdot a_1 + \sum_{i = 1}^{\infty} \left(\sqrt{\frac{14400 (\ln 8 + (\ln 41)d + i \ln 2)}{s}} + 4 D^{\mu}\right) 2^{-i} \\
&\le 6\Delta^{\mu} s + 4D^{\mu} + 341 \sqrt{\frac{d}{s}} + 4D^{\mu} = 6\Delta^{\mu} s + 8D^{\mu} + 341 \sqrt{\frac{d}{s}}.
\end{align*}

Plugging in our values of $s$ and $\Delta^{\mu}$, we get that $6 \Delta^{\mu} s + 8 D^{\mu} + 341 \sqrt{\frac{d}{s}} \le \epsilon$. Therefore, if the drift between two consecutive samples is less than $TV(P_{i, \bm{\hat{\mu}}_i}, P_{i+ 1, \bm{\hat{\mu}}_{i+1}}) \le \Delta^{\mu} \le \frac{(\epsilon - 8D^{\mu})^3}{5000000d}$, there exists an algorithm that computes a $\hat{w}_t$ over the past $s = \Big\lfloor \frac{\epsilon - 8D^{\mu}}{16(\Delta + 2D^{\mu})}\Big\rfloor$ inputs to the end model, such that 
\begin{align*}
\mathbf{Pr}_{\bm{\hat{\mu}}, t} (L(\hat{w}_t, \bm{X}^t, \bm{Y}^t) = 1) \le \epsilon + \min_{w^*} P_t(L(w^*, \bm{X}^t, \bm{Y}^t) = 1),
\end{align*}
where $D^{\mu} \le \sqrt{\frac{1}{2} \max_i \mathbb{E}_{\bm{X} \sim P_i}[KL(P_i(\bm{Y} | \bm{X} ) \;||\; P_{\bm{\mu}_i}(\bm{Y}|\bm{X}))]} + m^{1/4}\sqrt{\frac{1}{\sigma_{min}} \max_i|| \bm{\mu}_i - \bm{\hat{\mu}}_i||_2}$ by Lemma \ref{lemma:Dmu}.
\end{proof}

\begin{lemma} The difference in the expected value of $L(w, \bm{X}, \bm{Y})$ when samples are drawn from $P_{t, \bm{\hat{\mu}}_t}$ versus $P_t$ is
\begin{align*}
\Big| \mathbb{E}_{(\bm{X}^t, \bm{\widetilde{Y}}^t) \sim P_{t, \bm{\hat{\mu}}_t}}[L(w, \bm{X}^t, \bm{\widetilde{Y}}^t)] - \mathbb{E}_{(\bm{X}^t, \bm{Y}^t) \sim P_t}[L(w, \bm{X}^t, \bm{Y}^t)] \Big| \le 2D^{\mu}.
\end{align*}
\begin{proof}
We use the definition of total variation distance:
\begin{align*}
&\Big| \mathbb{E}_{(\bm{X}^t, \bm{\widetilde{Y}}^t)) \sim P_{t, \bm{\hat{\mu}}_t}}[L(w, \bm{X}^t, \bm{\widetilde{Y}}^t] - \mathbb{E}_{(\bm{X}^t, \bm{Y}^t) \sim P_t}[L(w, \bm{X}^t, \bm{Y}^t)] \Big| \\
=\; & \Big| \sum_{x, y} L(w, x, y)(P_{t, \bm{\hat{\mu}}_t}(x, y) - P_t(x, y)) \Big|\\
\le\; &  \sum_{x, y} L(w, x, y) |P_{t, \bm{\hat{\mu}}_t}(x, y) - P_t(x, y)| \\
\le \; & \sum_{x, y} |P_{t, \bm{\hat{\mu}}_t}(x, y) - P_t(x, y)| = 2 TV(P_{t, \bm{\hat{\mu}}_t}, P_t) \le 2D^{\mu}.
\end{align*}
\end{proof}
\label{lemma:expectation}
\end{lemma}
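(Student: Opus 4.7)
The plan is to bound the expectation difference directly by the total variation distance between the two joint distributions, leveraging the fact that $L \in \{0,1\}$ (or more generally $L \in [0,1]$).

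First, I would rewrite the difference of expectations as a single sum over the joint event space $(x,y)$:
\begin{align*}
\mathbb{E}_{P_{t,\bm{\hat{\mu}}_t}}[L(w,\bm{X}^t,\bm{\widetilde{Y}}^t)] - \mathbb{E}_{P_t}[L(w,\bm{X}^t,\bm{Y}^t)]
= \sum_{x,y} L(w,x,y)\bigl(P_{t,\bm{\hat{\mu}}_t}(x,y) - P_t(x,y)\bigr).
\end{align*}
Note that both expectations are integrals/sums of the same deterministic function $L(w,\cdot,\cdot)$ against different measures on the same space $\mathcal{X}\times\mathcal{Y}$, so this reformulation is valid regardless of how the latent label is generated.

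Next, I would apply the triangle inequality to pass the absolute value inside the sum, and then use the boundedness $0\le L(w,x,y)\le 1$ to drop $L$ entirely:
\begin{align*}
\Bigl| \sum_{x,y} L(w,x,y)\bigl(P_{t,\bm{\hat{\mu}}_t}(x,y) - P_t(x,y)\bigr) \Bigr|
\le \sum_{x,y} \bigl|P_{t,\bm{\hat{\mu}}_t}(x,y) - P_t(x,y)\bigr|.
\end{align*}
The right-hand side is exactly $2\, TV(P_{t,\bm{\hat{\mu}}_t}, P_t)$ by the standard $\ell_1$ characterization of total variation for discrete (or general) measures.

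Finally, I would invoke the definition $D^{\mu}:=\max_i d_{TV}(P_i,P_{i,\bm{\hat{\mu}}_i})$, which in particular gives $TV(P_{t,\bm{\hat{\mu}}_t},P_t)\le D^{\mu}$. Chaining the inequalities yields the claimed bound $2D^{\mu}$. There is no real obstacle here: the only subtlety is consistency of notation (the paper uses $d_{TV}$ in the definition of $D^\mu$ but $TV$ in the argument; I would just state once that they coincide) and making sure the boundedness step uses $L\in[0,1]$, which holds for the binary loss $L(w,x,y)=|f_w(x)-y|$ used in Theorem~\ref{thm:gen_online}.
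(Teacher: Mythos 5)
Your proposal is correct and follows essentially the same route as the paper's proof: write the difference of expectations as a single sum against the signed measure $P_{t,\bm{\hat{\mu}}_t}-P_t$, apply the triangle inequality, drop $L$ using $L\in[0,1]$, and identify the resulting $\ell_1$ distance with $2\,TV(P_{t,\bm{\hat{\mu}}_t},P_t)\le 2D^{\mu}$. No gaps to flag.
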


\begin{lemma}
\begin{align*}
D^{\mu} \le \sqrt{\frac{1}{2}\max_i  KL(P_i(\bm{Y}|\bm{X}) \; || \; P_{\bm{\mu}_i}(\bm{Y}|\bm{X}))} + m^{1/4}\sqrt{\frac{1}{\sigma_{min}} \max_i ||\bm{\mu}_i - \bm{\hat{\mu}}_i ||_2}.
\end{align*}
Here, $\sigma_{min}$ is the minimum singular value of the covariance matrix $\Sigma$ of the variables $V = \{\bm{Y}, \bm{v} \}$ in the graphical model.
\label{lemma:Dmu}
\end{lemma}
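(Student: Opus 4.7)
The plan is to bound $D^{\mu}=\max_i d_{TV}(P_i,P_{i,\bm{\hat\mu}_i})$ by inserting the intermediate ``noise-aware'' distribution $P_{i,\bm\mu_i}$ (the label model evaluated at the true mean parameters) and applying the triangle inequality
\begin{align*}
d_{TV}(P_i,P_{i,\bm{\hat\mu}_i}) \le d_{TV}(P_i,P_{i,\bm\mu_i}) + d_{TV}(P_{i,\bm\mu_i},P_{i,\bm{\hat\mu}_i}).
\end{align*}
The first summand isolates pure model misspecification and will produce the KL term; the second isolates parameter estimation error and will produce the mean-parameter term. Taking $\max_i$ at the end yields the stated bound.

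\textbf{Misspecification summand.} Because the label model is defined as $P_{i,\bm\mu_i}(\bm X,\bm{\widetilde Y})=P_i(\bm X)\cdot P_{\bm\mu_i}(\bm{\widetilde Y}\mid \bm\lambda(\bm X))$, the two distributions $P_i$ and $P_{i,\bm\mu_i}$ share the same marginal on $\bm X$. Hence I can factorize
\begin{align*}
d_{TV}(P_i,P_{i,\bm\mu_i}) = \mathbb{E}_{\bm X\sim P_i}\bigl[d_{TV}\bigl(P_i(\cdot\mid \bm X),P_{\bm\mu_i}(\cdot\mid \bm X)\bigr)\bigr].
\end{align*}
Applying Pinsker's inequality pointwise in $\bm X$ and then Jensen's inequality to pull the square root outside the expectation produces the bound $\sqrt{\tfrac12 KL(P_i(\bm Y\mid\bm X)\,\|\,P_{\bm\mu_i}(\bm Y\mid\bm X))}$, which is exactly the first term in the lemma.

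\textbf{Estimation summand.} Here the two distributions lie in the same Ising exponential family over $V=\{\bm Y,\bm v\}$ but are parametrized by different mean vectors. By Pinsker's inequality, $d_{TV}(P_{i,\bm\mu_i},P_{i,\bm{\hat\mu}_i}) \le \sqrt{\tfrac12 KL(P_{\bm{\hat\mu}_i}\,\|\,P_{\bm\mu_i})}$. Since for an exponential family in mean coordinates the Hessian of the negative-entropy conjugate $A^{*}$ equals $\Sigma^{-1}$, where $\Sigma=\mathrm{Cov}(V)$ is the covariance of the sufficient statistics, a second-order Taylor expansion of $A^{*}$ along the segment between $\bm{\hat\mu}_i$ and $\bm\mu_i$ yields $KL(P_{\bm{\hat\mu}_i}\,\|\,P_{\bm\mu_i}) \le \tfrac{1}{2\sigma_{\min}}\|\bm{\hat\mu}_i-\bm\mu_i\|_2^{2}$. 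Chaining with Pinsker then gives $d_{TV} \le \tfrac{1}{2\sqrt{\sigma_{\min}}}\|\bm{\hat\mu}_i-\bm\mu_i\|_2$. Finally, using that mean parameters are probabilities bounded in $[0,1]$ across $O(m)$ coordinates so that $\|\bm\mu_i-\bm{\hat\mu}_i\|_2\le \sqrt{m}$, I get $\|\bm\mu_i-\bm{\hat\mu}_i\|_2^{2} \le \sqrt{m}\cdot\|\bm\mu_i-\bm{\hat\mu}_i\|_2$; the square root in Pinsker converts $\sqrt{m}$ into $m^{1/4}$, producing precisely the second term of the lemma.

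\textbf{Main obstacle.} The delicate step is the estimation summand: correctly invoking exponential-family duality (so that the $\Sigma^{-1}$ of the sufficient-statistic covariance appears naturally as the Hessian of the negative entropy) and then tracking the dimension factor so that it materializes as $m^{1/4}$ rather than, say, $\sqrt{m}$. The rest of the argument is a standard triangle-inequality-plus-Pinsker manipulation, with the misspecification summand essentially immediate from the shared $\bm X$-marginal.
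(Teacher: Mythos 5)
Your proof is correct and follows the paper's skeleton exactly for the decomposition: the same triangle inequality through the intermediate distribution $P_{i,\bm{\mu}_i}$, Pinsker on the misspecification summand, and reduction of the joint KL to the conditional KL via the shared $\bm{X}$-marginal (you do Pinsker pointwise then Jensen; the paper does Pinsker on the joint then shows the joint KL equals the conditional KL --- same result). Where you genuinely diverge is the estimation summand. The paper expands $KL(P_{i,\bm{\mu}_i}\|P_{i,\bm{\hat\mu}_i})$ in \emph{canonical} coordinates as $(\hat\theta_i-\theta_i)^T\E{}{\phi(V)}+\ln(\hat Z/Z)$, bounds both pieces by $\sqrt{m}\,\|\hat\theta_i-\theta_i\|_2$ using $\phi\in[-1,1]^m$ and the log-sum inequality, and then converts to mean parameters via its Lemma~\ref{lemma:fenchel} (Lipschitz gradient of $A^*$), obtaining $KL\le \tfrac{2\sqrt{m}}{\sigma_{\min}}\|\bm{\hat\mu}_i-\bm{\mu}_i\|_2$. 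You instead treat the KL as the Bregman divergence of $A^*$ in mean coordinates and use $\nabla^2 A^*=\Sigma^{-1}$ at second order, getting the quadratic bound $KL\le\tfrac{1}{2\sigma_{\min}}\|\bm{\hat\mu}_i-\bm{\mu}_i\|_2^2$, which you then deliberately weaken via $\|\bm{\hat\mu}_i-\bm{\mu}_i\|_2\lesssim\sqrt{m}$ to recover the paper's $m^{1/4}\sqrt{\cdot}$ form. Your route is arguably cleaner and your intermediate quadratic bound is strictly sharper in the regime where $\|\bm{\hat\mu}_i-\bm{\mu}_i\|_2$ is small; it makes clear that the $m^{1/4}\sqrt{\|\cdot\|_2}$ shape of the stated lemma is an artifact of forcing a bound linear in the norm. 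Both arguments share the same implicit caveats --- $\sigma_{\min}$ must uniformly lower-bound the covariance eigenvalues along the segment of parameters, the sufficient-statistic dimension is identified with $m$ up to constants, and the Ising joint over $(\bm{Y},\bm{v})$ is conflated with the label-model joint over $(\bm{X},\bm{\widetilde Y})$ --- so you are not losing rigor relative to the paper on any of these points.
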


\begin{proof}

We first use the triangle inequality on TV distance to split $D^{\mu}$ into two KL-divergences.
\begin{align*}
D^{\mu} &\le \max_i TV(P_{i, \bm{\hat{\mu}}_i}, P_i) \le \max_i TV(P_{i, \bm{\hat{\mu}}_i}, P_{i, \bm{\mu}_i}) + \max_i TV(P_{i, \bm{\mu}_i}, P_i) \\
&\le \sqrt{\frac{1}{2} \max_i KL(P_{i, \bm{\mu}_i}|| P_{i, \bm{\hat{\mu}}_i})} + \sqrt{\frac{1}{2} \max_i KL(P_i || P_{i, \bm{\mu}_i})}.
\end{align*}

To simplify the first divergence, we use the binary Ising model definition in \eqref{eq:vising}, which for simplicity we write as $f_G(\bm{Y}, \bm{v}) = \frac{1}{Z}\exp(\theta^T \phi(V))$, where $\phi(V)$ is the vector of all potentials. 
\begin{align*}
KL(P_{i, \bm{\mu}_i}|| P_{i, \bm{\hat{\mu}}_i}) &= (\hat{\theta}_i - \theta_i)^T \mathbb{E}[\phi(V)] + \ln \frac{\hat{Z}}{Z} \le |\hat{\theta}_i - \theta_i|_1 + \ln \frac{\hat{Z}}{Z} \le \sqrt{m}||\hat{\theta}_i - \theta_i||_2 + \ln \frac{\sum_{s \in \mathcal{S}} \exp(\hat{\theta}_i^T \phi(s))}{\sum_{s \in \mathcal{S}} \exp(\theta_i^T \phi(s))} \\
&\le \sqrt{m}||\hat{\theta}_i - \theta_i||_2 + \frac{1}{\hat{Z}}\sum_{s \in \mathcal{S}} \exp(\hat{\theta}_i^T \phi(s))\ln \frac{\exp(\hat{\theta}_i^T \phi(s))}{ \exp(\theta_i^T \phi(s))}\\
&\le \sqrt{m}||\hat{\theta}_i - \theta_i||_2 + \frac{1}{\hat{Z}}\sum_{s \in \mathcal{S}} \exp(\hat{\theta}_i^T \phi(s)) ((\hat{\theta}_i - \theta_i)^T \phi(s) ) \\
&\le \sqrt{m}||\hat{\theta}_i - \theta_i||_2 + \frac{1}{\hat{Z}}\sum_{s \in \mathcal{S}} \exp(\hat{\theta}_i^T \phi(s)) \sqrt{m}||\hat{\theta}_i - \theta_i||_2 \le 2\sqrt{m} ||\hat{\theta}_i - \theta_i||_2 \\
&\le \frac{2\sqrt{m}}{\sigma_{min}} ||\bm{\hat{\mu}}_i - \bm{\mu}_i||_2.
\end{align*}
Here we used $\phi(s), \E{}{\phi(V)} \in [-1, +1]$, the log sum inequality, and Lemma \ref{lemma:fenchel}. The second divergence can be simplified into a conditional KL-divergence.
\begin{align*}
KL(P_i || P_{i, \bm{\mu}_i}) &= \sum_{x, y} P_i(x, y) \ln \frac{P_i(x, y)}{P_{i, \bm{\mu}_i}(x, y)} = \sum_{x, y} P_i(x, y) \ln \frac{P_i(y|x) P_i(x)}{P_{i, \bm{\mu}_i}(y | x) P_{i, \bm{\mu}_i}(x)}  \\
&=  \sum_{x, y} P_i(x, y) \ln \frac{P_i(y|x) P_i(x)}{P_{\bm{\mu}_i}(y | x)P_i(x)} = \sum_{x} P_i(x) \sum_y P_i(y|x) \ln \frac{P_i(y|x)}{P_{\bm{\mu}_i}(y | x)} \\
&=  \sum_{x} P_i(x) KL(P_i(\bm{Y}|x) || P_{\bm{\mu}_i}(\bm{Y}|x)) = KL(P_i(\bm{Y}|\bm{X}) \; || \; P_{\bm{\mu}_i} (\bm{Y}|\bm{X})),
\end{align*}
where 
\begin{align*}
KL(P_i(\bm{Y}|\bm{X}) \; || \; P_{\bm{\mu}_i} (\bm{Y}|\bm{X})) = \mathbb{E}_{P_i} [KL(P_i(\bm{Y}|x) \; || \; P_{\bm{\mu}_i} (\bm{Y}|x))].
\end{align*}

\end{proof}

This result suggests that, with a small enough $\Delta^{\mu}$, our parametrization of the end model using past data will perform only $\epsilon$ worse in probability than the best possible parametrization of the end model on the next data point. Furthermore, note that $s$ is decreasing in $D^{\mu}$; more model misspecification and sampling error intuitively suggests that we want to use fewer previous data points to compute $\hat{w}_t$, so again having a simple yet suitable graphical model allows the end model to train on more data for better prediction.

\section{Proofs of Main Results}
\label{sec:proofs}

\subsection{Proof of Theorem 1 (Sampling Error)}

We first present three concentration inequalities - one on the accuracies estimated via the triplet method, and the other two on directly observable values. Afterwards, we discuss how to combine these inequalities into a sampling error result for $\bm{\mu}$ when $G_{dep}$ has small cliques of size $3$ or less.

\paragraph{Estimation error for $a_i$ using Algorithm 1}

\begin{lemma}
Denote $M$ as the second moment matrix over all observed variables, e.g. $M_{ij} = \E{}{v_i v_j}$. Let $\hat{a}$ be an estimate of the $m$ desired accuracies $a$ using $\hat{M}$ computed from $n$ samples. Define $a_{\min} = \min \{ \min_i |\hat{a}_i|, \min_i |a_i|\}$, and assume $\text{sign}(a_i) = \text{sign}(\hat{a}_i)$ for all $a_i$. Furthermore, assume that the number of samples $n$ is greater than some $n_0$ such that $a_{\min} > 0$, and $\hat{M}_{ij} \neq 0$. Then the estimation error of the accuracies is
\begin{align*}
\Delta_a = \mathbb{E}[\|\hat{a} - a \|_2] \le C_a \frac{1}{a^5_{\min}} \sqrt{\frac{m}{n}},
\end{align*}
for some constant $C_a$.
\label{lemma:a}
\end{lemma}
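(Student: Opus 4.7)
The plan is to trace error propagation through the closed-form estimator $\hat{a}_i = \sqrt{|\hat{M}_{ij}\hat{M}_{ik}/\hat{M}_{jk}|}$ defined in Algorithm~1, starting from concentration of each entry of $\hat{M}$ and then invoking a deterministic Lipschitz bound to move from entry-wise matrix error to accuracy error. The first ingredient is standard: each product $v_i v_j \in \{-1, 0, 1\}$ is bounded, so by Hoeffding's inequality (or simply bounded variance) $\mathbb{E}[(\hat{M}_{ij} - M_{ij})^2] \le c/n$ for a universal constant, and hence $\mathbb{E}[|\hat{M}_{ij} - M_{ij}|] \le c'/\sqrt{n}$.

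Next I would establish a deterministic Lipschitz estimate. Because $\text{sign}(\hat{a}_i) = \text{sign}(a_i)$ is assumed and $\hat{M}_{jk} \neq 0$, on the event we care about the absolute values in the closed-form expression disappear, and $\hat{a}_i = \sqrt{\hat{M}_{ij}\hat{M}_{ik}/\hat{M}_{jk}}\cdot\text{sign}(a_i)$ after suitable sign conventions. Viewing $f(x,y,z) = \sqrt{xy/z}$ as a smooth function on the domain where $|x|,|y|\le 1$ and $|z|\ge a_{\min}^2$ (the latter follows from $|M_{jk}| = |a_j a_k| \ge a_{\min}^2$), the partial derivatives satisfy
\begin{align*}
\left|\frac{\partial f}{\partial x}\right| \le \frac{1}{2a_{\min}}\cdot \frac{1}{a_{\min}^2},\qquad \left|\frac{\partial f}{\partial z}\right| \le \frac{1}{2 a_{\min}}\cdot\frac{1}{a_{\min}^4},
\end{align*}
where the $1/(2 a_{\min})$ factor comes from differentiating the outer square root at a point of magnitude at least $a_{\min}^2$, and the remaining factors from differentiating the ratio. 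Thus on this domain $f$ is Lipschitz with constant $C/a_{\min}^5$. Applying this to the difference between $f$ evaluated at $(\hat{M}_{ij},\hat{M}_{ik},\hat{M}_{jk})$ and at $(M_{ij}, M_{ik}, M_{jk})$ yields, pointwise,
\begin{align*}
|\hat{a}_i - a_i| \le \frac{C}{a_{\min}^5}\bigl(|\hat{M}_{ij}-M_{ij}| + |\hat{M}_{ik}-M_{ik}| + |\hat{M}_{jk}-M_{jk}|\bigr).
\end{align*}

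Squaring, taking expectations, and using the variance bound above gives $\mathbb{E}[(\hat{a}_i-a_i)^2] \le C'/(a_{\min}^{10}\, n)$. Summing over the $m$ accuracies and applying Jensen's inequality to pull $\mathbb{E}$ inside the $L_2$ norm yields
\begin{align*}
\mathbb{E}\!\left[\|\hat{a}-a\|_2\right] \le \sqrt{\mathbb{E}\!\left[\|\hat{a}-a\|_2^2\right]} \le C_a\,\frac{1}{a_{\min}^5}\sqrt{\frac{m}{n}},
\end{align*}
which is the advertised bound.

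The main obstacle is the $1/a_{\min}^5$ factor, which must be handled carefully because the square root is non-Lipschitz near zero and the $M_{jk}$ term sits in a denominator. The assumption $n \ge n_0$ (ensuring $a_{\min}>0$ and $\hat{M}_{jk}\ne 0$) is what lets us restrict to the smooth region where the Lipschitz estimate holds; a secondary subtlety is that Algorithm~1 averages or takes a median over multiple triplets per vertex, but since each triplet obeys the same bound the aggregated estimator inherits it (the median only sharpens constants). The worst power of $a_{\min}$ arises from the partial derivative in the denominator variable combined with the outer square root; this is where the exponent $5$ is tight in the analysis.
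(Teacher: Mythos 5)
Your proposal is correct and follows essentially the same route as the paper's proof: concentration of the entries of $\hat{M}$ at rate $1/\sqrt{n}$, a deterministic bound showing the triplet map $(\hat{M}_{ij},\hat{M}_{ik},\hat{M}_{jk})\mapsto\hat{a}_i$ has sensitivity $O(1/a_{\min}^5)$ on the region where $|\hat{M}_{jk}|,|M_{jk}|\ge a_{\min}^2$, and Jensen's inequality to aggregate over the $m$ accuracies. The only difference is cosmetic: the paper handles the outer square root via the identity $|\hat{a}_i^2-a_i^2|=|\hat{a}_i-a_i||\hat{a}_i+a_i|\ge 2a_{\min}|\hat{a}_i-a_i|$ and a three-term telescoping of the ratio (which sidesteps the non-convexity of the domain $|z|\ge a_{\min}^2$ that your mean-value argument implicitly requires), and it uses a matrix Hoeffding bound on the $3\times 3$ submatrices where scalar Hoeffding per entry, as you use, suffices.
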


\begin{proof}
We start with a few definitions. Denote a triplet as $T_i(1), T_i(2), T_i(3)$, and in total suppose we need $\tau$ number of triplets. Recall that our estimate of $a$ can be obtained with 
\begin{align*}
|\hat{a}_{T_i(1)}| = \left(\frac{|\hat{M}_{T_i(1) T_i(2)}| |\hat{M}_{T_i(1) T_i(3)}|}{|\hat{M}_{T_i(2) T_i(3)}|}\right)^{\frac{1}{2}}.
\end{align*}

Because we assume that signs are completely recoverable, 
\begin{align}
\|\hat{a} - a\|_2 = \| |\hat{a}| - |a| \|_2 \le \left( \sum_{i = 1}^{\tau} (|\hat{a}_{T_i(1)}| - |a_{T_i(1)}|)^2 + (|\hat{a}_{T_i(2)}| - |a_{T_i(2)}|)^2 + (|\hat{a}_{T_i(3)}| - |a_{T_i(3)}|)^2 \right)^{\frac{1}{2}}.
\label{eq:acc1}
\end{align}

Note that $|\hat{a}_i^2 - a_i^2| = |\hat{a}_i - a_i| |\hat{a}_i + a_i |$. By the reverse triangle inequality, $(|\hat{a}_i| - |a_i|)^2 = \|\hat{a}_i| - |a_i\|^2 \le |\hat{a}_i - a_i|^2 = \left( \frac{|\hat{a}_i^2 - a_i^2 |}{|\hat{a}_i + a_i|}\right)^2 \le \frac{1}{4a_{\min}^2} |\hat{a}^2_i - a^2_i|^2$, because $|\hat{a}_i + a_i| = |\hat{a}_i | + |a_i| \ge 2 a_{\min}$. For ease of notation, suppose we examine a particular $T_i = \{1, 2, 3\}$. Then
\begin{align}
(|\hat{a}_1| - |a_1|)^2 &\le \frac{1}{4a_{\min}^2}|\hat{a}_1^2 - a_1^2|^2 = \frac{1}{c^2} \Bigg| \frac{|\hat{M}_{12}| |\hat{M}_{13}|}{|\hat{M}_{23}|} - \frac{|M_{12}| |M_{13}|}{|M_{23}|} \Bigg|^2 \nonumber \\
&= \frac{1}{4a_{\min}^2} \Bigg| \frac{|\hat{M}_{12}| |\hat{M}_{13}|}{|\hat{M}_{23}|} - \frac{|\hat{M}_{12}| |\hat{M}_{13}|}{|M_{23}|} + \frac{|\hat{M}_{12}| |\hat{M}_{13}|}{|M_{23}|} - \frac{|\hat{M}_{12}| |M_{13}|}{|M_{23}|} +  \frac{|\hat{M}_{12}| |M_{13}|}{|M_{23}|} - \frac{|M_{12}| |M_{13}|}{|M_{23}|} \Bigg|^2 \nonumber \\
&\le\frac{1}{4a_{\min}^2} \left(\Big|\frac{\hat{M}_{12} \hat{M}_{13}}{\hat{M}_{23} M_{23}}\Big| \|\hat{M}_{23}| - |M_{23}| | + \Big|\frac{\hat{M}_{12}}{M_{23}} \Big| \|\hat{M}_{13}| - |M_{13}\| + \Big|\frac{M_{13}}{M_{23}} \Big| \|\hat{M}_{12}| - |M_{12}\|\right)^2 \nonumber \\
&\le \frac{1}{4a_{\min}^2} \left(\Big|\frac{\hat{M}_{12} \hat{M}_{13}}{\hat{M}_{23} M_{23}}\Big| |\hat{M}_{23} - M_{23}| + \Big|\frac{\hat{M}_{12}}{M_{23}} \Big| |\hat{M}_{13} - M_{13}| + \Big|\frac{M_{13}}{M_{23}} \Big| |\hat{M}_{12} - M_{12}|\right)^2.
\label{eq:a_min}
\end{align}

Clearly, all elements of $\hat{M}$ and $M$ must be less than $1$. We further know that elements of $|M|$ are at least $a_{min}^2$, since $\E{}{v_i v_j} = \E{}{v_i Y} \E{}{v_j Y} \ge a_{\min}^2$. Furthermore, elements of $|\hat{M}|$ are also at least $a_{\min}^2$ because $|\hat{M}_{ij}| = \hat{a}_i \hat{a_j} \ge a_{\min}^2$ by construction of our algorithm. Define $\Delta{ij} = \hat{M}_{ij} - M_{ij}$. Then
\begin{align*}
(|\hat{a}_1| - |a_1|)^2 &\le \frac{1}{4 a_{\min}^2} \left(\frac{1}{a_{\min}^4} |\Delta_{23}| + \frac{1}{a_{\min}^2} |\Delta_{13}| + \frac{1}{a_{\min}^2} |\Delta_{12}|\right)^2 \\
&\le \frac{1}{4a_{\min}^2}(\Delta_{23}^2 + \Delta_{13}^2 + \Delta_{12}^2)\left(\frac{1}{a^8_{\min}} + \frac{2}{a^4_{\min}}\right).
\end{align*}

\eqref{eq:acc1} is now
\begin{align*}
\|\hat{a} - a\|_2 \le \left( \frac{3}{4a_{\min}^2}\left(\frac{1}{a_{\min}^8} + \frac{2}{a_{\min}^4} \right)\sum_{i = 1}^{\tau} \Big( \Delta_{T_i(1) T_i(2)}^2 + \Delta_{T_i(1) T_i(3)}^2 + \Delta_{T_i(2) T_i(3)}^2 \Big) \right)^{\frac{1}{2}}.
\end{align*}

To bound the maximum absolute value between elements of $\hat{M}$ and $M$, note that the Frobenius norm of the $3 \times 3$ submatrix defined over $T_i$ is 
\begin{align*}
\|\hat{M}_{T_i} - M_{T_i}\|_F = \left(2\left(\Delta^2_{T_i(1) T_i(2)} + \Delta^2_{T_i(1) T_i(3)} + \Delta^2_{T_i(2) T_i(3)}\right) \right)^{\frac{1}{2}}.
\end{align*}

Moreover, $\|\hat{M}_{T_i} - M_{T_i}\|_F = \sqrt{\sum_{j = 1}^3 \sigma_j^2(\hat{M}_{T_i} - M_{T_i})} \le \sqrt{3} \|\hat{M}_{T_i} - M_{T_i} \|_2$. Putting everything together,
\begin{align*}
\| \hat{a} - a\|_2 &\le \left( \frac{3}{4 a_{\min}^2}\left(\frac{1}{a_{\min}^8} + \frac{2}{a_{\min}^4} \right) \cdot \frac{1}{2} \sum_{i = 1}^{\tau} \|\hat{M}_{T_i} - M_{T_i} \|_F^2 \right)^{\frac{1}{2}} \\
&\le \left( \frac{3}{4 a_{\min}^2}\left(\frac{1}{a_{\min}^8} + \frac{2}{a_{\min}^4} \right) \cdot \frac{3}{2} \sum_{i = 1}^{ \tau} \|\hat{M}_{T_i} - M_{T_i} \|_2^2 \right)^{\frac{1}{2}}.
\end{align*}

Lastly, to compute $\mathbb{E}[\|\hat{a} - a\|_2]$, we use Jensen's inequality and linearity of expectation:
\begin{align*}
\mathbb{E}\| \hat{a} - a\|_2] \le \left( \frac{3}{4a_{\min}^2}\left(\frac{1}{a_{\min}^8} + \frac{2}{a_{\min}^4} \right) \cdot \frac{3}{2} \sum_{i = 1}^{\tau} \mathbb{E}[\|\hat{M}_{T_i} - M_{T_i} \|_2^2] \right)^{\frac{1}{2}}.
\end{align*}

We use the matrix Hoeffding inequality as described in \citet{Ratner19}, which says
\begin{align*}
P(\|\hat{M} - M\|_2 \ge \gamma) \le 2m\exp\left(-\frac{n\gamma^2}{32m^2}\right).
\end{align*} 

To get the probability distribution over $\|\hat{M} - M\|_2^2$, we just note that $P(\|\hat{M} - M\|_2 \ge \gamma) = P(\|\hat{M} - M\|_2^2 \ge \gamma^2)$ to get
\begin{align*}
P(\|\hat{M} - M\|_2^2 \ge \gamma) \le 2m \exp\left(-\frac{n \gamma}{32m^2} \right).
\end{align*}

From which we can integrate to get
\begin{align*}
\mathbb{E}[\|\hat{M}_{T_i} - M_{T_i}\|^2_2 ] = \int_0^{\infty} P(\|\hat{M_{T_i}} - M_{T_i}\|_2^2 \ge \gamma) d\gamma \le \frac{64(3)^3}{n}.
\end{align*}

Substituting this back in, we get
\begin{align*}
\mathbb{E}[\| \hat{a} - a\|_2] &\le \left( \frac{3}{4a_{\min}^2}\left(\frac{1}{a_{\min}^8} + \frac{2}{a_{\min}^4} \right) \cdot \frac{3\tau}{2} \frac{1728}{n} \right)^{\frac{1}{2}} \\
&\le \left( \frac{1944}{a_{\min}^2} \cdot \left(\frac{1}{a_{\min}^8} + \frac{2}{a_{\min}^4} \right) \cdot \frac{\tau}{n}\right)^{\frac{1}{2}}.
\end{align*}

Finally, note that
\begin{align*}
\frac{1}{a_{\min}^2} \cdot \left(\frac{1}{a_{\min}^8} + \frac{2}{a_{\min}^4} \right) = \frac{1}{a_{\min}^2} \cdot \frac{1 + 2a_{\min}^4}{a_{\min}^8} \le \frac{3}{a_{\min}^{10}}.
\end{align*}

Therefore, the sampling error for the accuracy is bounded by
\begin{align*}
\mathbb{E}[\|\hat{a} - a\|_2] \le \left(\frac{1944 \cdot 3}{a^{10}_{\min}} \cdot  \frac{\tau}{n} \right)^{\frac{1}{2}} \le C_a \frac{1}{a^5_{\min}} \sqrt{\frac{m}{n}}.
\end{align*}

This is because at most we will use a triplet to compute each relevant $a_i$, meaning that $\tau \le m$. The term $C_a$ here is $18\sqrt{6}$.

\end{proof}

\begin{remark}
Although a lower bound on accuracy $a_{\min}$ invariably appears in this result, the dependence on a single low-accuracy source $\lf_{\min}$ can be reduced. We improve our bound from having a $\frac{1}{a_{\min}^5}$ dependency to one additive term of order $\frac{1}{a_{\min} \sqrt{n}}$, while other terms are not dependent on $a_{\min}$ and are overall of order $\sqrt{\frac{m - 1}{n}}$. In \eqref{eq:a_min}, the $4 a_{\min}^2$ can be tightened to $4a_i^2$ for each $\lf_i$, and $M_{23}$ and $\hat{M}_{23}$ are not in terms of $a_{\min}$ if neither of the two labeling functions at hand are $\lf_{\min}$. Therefore, for any $\lf_i \neq \lf_{\min}$, we do not have a dependency on $a_{\min}$ if we ensure that the triplet used to recover its accuracy in Algorithm 1 does not include $\lf_{\min}$. Then only one term in our final bound will have a $\frac{1}{a_{\min} \sqrt{n}}$ dependency compared to the previous $\frac{1}{a_{\min}^5} \sqrt{\frac{m}{n}}$.
\end{remark}

\paragraph{Concentration inequalities on observable data}
\begin{lemma}
Define $p^{(i)}(x) = P(\lf_i = x)$ and $\hat{p}^{(i)}(x) = \frac{1}{n} \sum_{k = 1}^n \ind{L_k^{(i)} = x}$, and let $p(x), \hat{p}(x) \in \mathbb{R}^m$ denote the vectors over all $i$. Then
\begin{align*}
\Delta_p := \E{}{\|\hat{p}(x) - p(x)\|_2} \le \sqrt{\frac{m}{n}}.
\end{align*}
\label{lemma:p}
\end{lemma}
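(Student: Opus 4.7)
The plan is to recognize that each coordinate of $\hat{p}(x) - p(x)$ is a centered empirical mean of i.i.d.\ Bernoulli indicators, so we can control the $L_2$ norm via its second moment and Jensen's inequality.

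First, I would fix the value $x \in \{-1, 0, 1\}$ and note that for each source $i$, the random variable $n \hat{p}^{(i)}(x) = \sum_{k=1}^n \ind{L_k^{(i)} = x}$ is a sum of $n$ i.i.d.\ Bernoulli$(p^{(i)}(x))$ indicators (the samples across the $n$ unlabeled data points are i.i.d.\ by the modeling assumption on $\mathcal{D}$). Hence $\hat{p}^{(i)}(x)$ is unbiased for $p^{(i)}(x)$, and
\begin{align*}
\Var{}{\hat{p}^{(i)}(x)} = \frac{p^{(i)}(x)\bigl(1 - p^{(i)}(x)\bigr)}{n} \le \frac{1}{4n}.
\end{align*}

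Next, I would sum coordinatewise: since the indicator differences are independent only across samples (not necessarily across sources), I do not need independence across $i$ to bound the expected squared norm; by linearity,
\begin{align*}
\E{}{\|\hat{p}(x) - p(x)\|_2^2} = \sum_{i=1}^m \E{}{\bigl(\hat{p}^{(i)}(x) - p^{(i)}(x)\bigr)^2} = \sum_{i=1}^m \Var{}{\hat{p}^{(i)}(x)} \le \frac{m}{4n}.
\end{align*}

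Finally, apply Jensen's inequality to the concave function $\sqrt{\cdot}$:
\begin{align*}
\E{}{\|\hat{p}(x) - p(x)\|_2} \le \sqrt{\E{}{\|\hat{p}(x) - p(x)\|_2^2}} \le \sqrt{\frac{m}{4n}} \le \sqrt{\frac{m}{n}},
\end{align*}
which yields the claimed bound. There is no genuine obstacle here; the only care needed is noting that the $n$ samples are drawn i.i.d.\ so that the variance calculation for a binomial applies, and that we do not require independence across the $m$ sources because we bound the expected squared norm by linearity of expectation rather than by a tensorized concentration argument.
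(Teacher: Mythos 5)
Your proof is correct and follows the same skeleton as the paper's: bound the per-coordinate second moment $\E{}{(\hat{p}^{(i)}(x) - p^{(i)}(x))^2}$, sum over the $m$ coordinates by linearity (no independence across sources needed), and pass to the $L_2$ norm via Jensen. The only difference is in how that per-coordinate bound is obtained: you compute the Bernoulli variance exactly, getting $\frac{p^{(i)}(x)(1-p^{(i)}(x))}{n} \le \frac{1}{4n}$, whereas the paper applies Hoeffding's inequality and integrates the tail $\int_0^\infty 2e^{-2n\epsilon}\,d\epsilon = \frac{1}{n}$. Your route is more elementary and yields the slightly sharper intermediate bound $\sqrt{m/(4n)}$; both give the stated $\sqrt{m/n}$.
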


\begin{proof} Note that $\E{}{\ind{L_k^{(i)} = x}} = P(\lf_i = 1)$. Then using Hoeffding's inequality, we have that 
\begin{align*}
P(|\hat{p}^{(i)}(x) - p^{(i)}(x)| \ge \epsilon) \le 2 \exp\left(-\frac{2n^2 \epsilon^2}{n (1)^2} \right) \le 2 \exp \left(- 2n\epsilon^2 \right).
\end{align*}

This expression is equivalent to
\begin{align*}
P(|p^{(i)}(x) - p^{(i)}(x)|^2 \ge \epsilon) \le 2 \exp\left( -2n \epsilon \right).
\end{align*}

We can now compute $\E{}{|\hat{p}^{(i)}(x) - p^{(i)}(x)|^2}$:
\begin{align*}
\E{}{|\hat{p}^{(i)}(x) - p^{(i)}(x)|^2} &\le \int_0^{\infty} 2\exp \left(- 2n \epsilon \right) d\epsilon = -2 \cdot \frac{1}{2n} \exp \left(-2n\epsilon \right) \bigg|_0^{\infty} = \frac{1}{n}.
\end{align*}

The overall L2 error for $p(x)$ is then
\begin{align*}
\E{}{\| \hat{p}(x) - p(x) \|_2} &= \E{}{\Big(\sum_{i = 1}^m | \hat{p}^{(i)}(x) - p^{(i)}(x)|^2\Big)^{1/2}} \le \sqrt{\sum_{i = 1}^m \E{}{|\hat{p}^{(i)}(x) - p^{(i)}(x) |^2}} \le \sqrt{\frac{m}{n}}.
\end{align*}

\end{proof}

\begin{lemma}
Define $M(a, b)$ to be a second moment matrix where $M(a, b)_{ij} = \E{}{a_i b_j}$ for some random variables $a_i, b_j \in \{-1, 0, 1\}$ each corresponding to $\lf_i, \lf_j$. Let $\|\cdot\|_{ij}$ be the Frobenius norm over elements indexed at $(i, j)$, where $\lf_i$ and $\lf_j$ share an edge in the dependency graph. If $G_{dep}$ has $d$ conditionally independent subgraphs, the estimation error of $M$ is
\begin{align*}
\Delta_M := \mathbb{E}[\|\hat{M}(a, b) - M(a, b)\|_{ij}] &\le C_m \sqrt{\frac{d - 1 + (m - d + 1)^2}{n}} \le C_m \frac{m}{\sqrt{n}}.
\end{align*}
For some constant $C_m$.
\label{lemma:M}
\end{lemma}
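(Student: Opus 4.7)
The plan is to mirror the proof of Lemma \ref{lemma:p}, applying an entrywise Hoeffding bound and then summing only over the pairs $(i,j)$ that actually contribute to $\|\cdot\|_{ij}$. Since $a_i, b_j \in \{-1,0,1\}$, each product $a_i b_j$ lies in an interval of width $2$, so $\hat{M}(a,b)_{ij} = \frac{1}{n}\sum_k a_i^{(k)} b_j^{(k)}$ is an average of $n$ i.i.d.\ bounded summands. Hoeffding gives
$$P\!\left(|\hat{M}(a,b)_{ij} - M(a,b)_{ij}| \ge \epsilon\right) \le 2\exp\!\left(-n\epsilon^2/2\right),$$
and the same tail-integration trick as in Lemma \ref{lemma:p} (applied to $|\hat{M}_{ij}-M_{ij}|^2$) yields $\mathbb{E}[(\hat{M}_{ij}-M_{ij})^2] \le 4/n$.

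Next, I would bound the number of index pairs that contribute to $\|\cdot\|_{ij}$. By construction, only pairs $(i,j)$ with $\lf_i$ and $\lf_j$ in the same connected component of $G_{dep}$ appear. If the $d$ conditionally independent subgraphs have sizes $k_1,\ldots,k_d$ with $\sum_i k_i = m$ and $k_i \ge 1$, then, accounting also for the diagonal and worst-casing over dependency patterns inside each component, the total number of contributing entries is at most $\sum_i k_i^2$. The combinatorial step I would invoke is that, by convexity of $x \mapsto x^2$, this sum is maximized on the face $\{k_i \ge 1, \sum k_i = m\}$ by pushing all slack into one coordinate, giving
$$\sum_{i=1}^d k_i^2 \;\le\; (m-d+1)^2 + (d-1).$$

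Combining the two steps with linearity of expectation and Jensen's inequality $\mathbb{E}[X] \le \sqrt{\mathbb{E}[X^2]}$ yields
$$\mathbb{E}\bigl[\|\hat{M}(a,b) - M(a,b)\|_{ij}\bigr] \;\le\; \sqrt{\tfrac{4}{n}\bigl((m-d+1)^2 + (d-1)\bigr)} \;=\; 2\sqrt{\tfrac{d-1+(m-d+1)^2}{n}},$$
so $C_m = 2$. The coarser bound $C_m\, m/\sqrt{n}$ follows immediately from $(m-d+1)^2 + (d-1) \le m^2$ for $d \ge 1$, $m \ge 1$.

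The only non-routine step is the combinatorial maximization of $\sum k_i^2$ on the integer simplex; the rest (entrywise Hoeffding, moment integration, and Jensen) is a direct adaptation of the argument already used for Lemma \ref{lemma:p}, so I do not expect any serious obstacle beyond bookkeeping of which $(i,j)$ indices appear in the Frobenius sum.
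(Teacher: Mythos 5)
Your proof is correct, but it takes a genuinely different route for the concentration step than the paper does. You both perform the same decomposition of the Frobenius sum over the $d$ conditionally independent subgraphs and the same convexity argument showing $\sum_k k_i^2 \le (d-1)+(m-d+1)^2$ on the simplex $\{k_i \ge 1, \sum_i k_i = m\}$. Where you diverge is the estimation of each subgraph's contribution: the paper invokes a matrix concentration inequality (an adaptation of Proposition A.3 of Bunea--Xiao for second-moment matrices), which bounds $\mathbb{E}\bigl[\|\hat{M}_{V_k}-M_{V_k}\|_F^2\bigr]$ in terms of $\mathrm{tr}(M_{V_k})^2/n$ and then coarsens $\mathrm{tr}(M_{V_k}) \le |V_k|$; you instead apply entrywise Hoeffding plus the tail-integration trick from Lemma \ref{lemma:p} to get $\mathbb{E}[(\hat{M}_{ij}-M_{ij})^2]\le 4/n$ per entry and simply count contributing entries. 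Your version is more elementary and self-contained, and it yields the explicit constant $C_m = 2$ rather than the paper's unspecified $c_1$-dependent constant. The paper's trace-based bound retains a potential advantage you give up: when sources abstain frequently the diagonal entries $\E{}{\lf_i^2}$ are small, so $\mathrm{tr}(M_{V_k}) \ll |V_k|$ and the matrix-concentration route can be strictly sharper; for the lemma as stated, however, both arguments land on the same $\sqrt{(d-1+(m-d+1)^2)/n}$ rate, and your bookkeeping of which $(i,j)$ pairs enter the norm is sound.
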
 

\begin{proof}
Recall that the subgraphs are defined as sets $V_1, \dots, V_d$, and let $E_1, \dots, E_d$ be the corresponding sets of edges within the subgraphs. We can split up the norm $\|\hat{M}(a, b) - M(a, b)\|_{ij}$ into summations over sets of edges.
\begin{align*}
\|\hat{M}(a, b) - M(a, b)\|_{ij} &= \Big(\sum_{(i, j) \in E_{dep}} (\hat{M}(a, b)_{ij} - M(a, b)_{ij})^2 \Big)^{\frac{1}{2}} = \Big(\sum_{k = 1}^d\sum_{(i, j) \in E_k} (\hat{M}(a, b)_{ij} - M(a, b)_{ij})^2 \Big)^{\frac{1}{2}} \\
&\le \Big(\sum_{k = 1}^d\sum_{i, j \in V_k} (\hat{M}(a, b)_{ij} - M(a, b)_{ij})^2 \Big)^{\frac{1}{2}} = \Big(\sum_{k = 1}^d \frac{1}{2} \|\hat{M}(a, b)_{V_k} - M(a, b)_{V_k} \|_F^2 \Big)^{\frac{1}{2}}.
\end{align*}

We take the expectation of both sides by using linearity of expectation and Jensen's inequality:
\begin{align*}
\mathbb{E}[\|\hat{M}(a, b) - M(a, b)\|_{ij}] \le \Big(\sum_{k = 1}^d \frac{1}{2} \mathbb{E}[\|\hat{M}(a, b)_{V_k} - M(a, b)_{V_k} \|_F^2] \Big)^{\frac{1}{2}}.
\end{align*}

We are able to modify Proposition A.3 of \citet{Bunea15} into a concentration inequality for the second moment matrix rather than the covariance matrix, which states that $\mathbb{E}[\|\hat{M}(a, b)_{V_k} - M(a, b)_{V_k} \|_F^2] \le (32 e^{-4} + e + 64) \left(\frac{4c_1 tr(M_{V_k})}{\sqrt{n}} \right)^2$ for some constant $c_1$. We are able to use this result because our random variables are sub-Gaussian and have bounded higher order moments. Then our bound becomes
\begin{align*}
\mathbb{E}[\|\hat{M}(a, b) - M(a, b)\|_{ij}] &\le  \Big(\sum_{k = 1}^d \frac{1}{2} (32 e^{-4} + e + 64)\frac{16c_1^2 |V_k|^2}{n}  \Big)^{\frac{1}{2}} \le  \Big(\frac{8c_1^2 (32 e^{-4} + e + 64)}{n}\sum_{k = 1}^d |V_k|^2  \Big)^{\frac{1}{2}}.
\end{align*}

$\sum_{k = 1}^d |V_k|^2$ is maximized when we have $d - 1$ sugraphs of size $1$ and $1$ subgraph of size $m - d + 1$, in which case the summation is $d - 1 + (m - d + 1)^2$. Intuitively, when there are more subgraphs, this value will be smaller and closer to an order of $m$ rather than $m^2$. Putting this together, our bound is 
\begin{align*}
\mathbb{E}[\|\hat{M}(a, b) - M(a, b)\|_{ij}] \le \Big(8 c_1^2(32 e^{-4} + e + 64) \frac{d - 1 + (m - d + 1)^2}{n}\Big)^{\frac{1}{2}} \le C_m \frac{m}{\sqrt{n}}.
\end{align*}
Where $C_m = \sqrt{8c_1^2 (32 e^{-4} + e + 64)}$.
\end{proof}

\paragraph{Estimating $\mathbf{\mu_i}$}

We first estimate $\mu_i = P(\lf_i, Y^{dep}(i))$ for all relevant $\lf_i$. For ease of notation, let $Y$ refer to $Y^{dep}(i)$ in this section. Denote $\bm{\mu}_i$ to be the vector of all $\mu_i$ across all $\bm{\lf}$. Note that
\begin{align*}
\|\bm{\hat{\mu}}_i - \bm{\mu}_i \|_2 \le \|diag_m(A_1^{-1})\|_2 \|\hat{\rho} - \rho \|_2.
\end{align*}

$\rho$ is the vector of all $r_i$ for $i = 1, \dots, m$, and $diag_m(A_1^{-1})$ is a block matrix containing $m$ $A_1^{-1}$ on its diagonal; note that the $2$-norm of a block diagonal matrix is just the maximum $2$-norm over all of the block matrices, which is $\|A_1^{-1}\|_2$. Recall that $r_i = [1 \hspace{0.5em} P(\lf_i = 1) \hspace{0.5em} P(\lf_i = 0) \hspace{0.5em} P(Y = 1) \hspace{0.5em} P(\lf_i Y = 1) \hspace{0.5em} P(\lf_i = 0, Y = 1)]^T$. For each term of $r_i$, we have a corresponding sampling error to compute over $\rho$:
\begin{itemize}
\item $P(\lf_i = 1)$: We need to compute $\hat{P}(\lf_i = 1) - P(\lf_i = 1)$ for each $\lf_i$. All together, the sampling error for this term is equivalent to $\|\hat{p}(1) - p(1)\|_2$. 
\item $P(\lf_i = 0)$: The sampling error over all $\hat{P}(\lf_i = 0) - P(\lf_i = 0)$ is equivalent to $\|\hat{p}(0) - p(0)\|_2$.
\item $P(\lf_i Y = 1)$: Since $a_i = \E{}{v_{2i - 1} Y} = \E{}{\lf_i Y} = P(\lf_i Y = 1) - P(\lf_i Y = -1) = 2P(\lf_i Y = 1) + P(\lf_i = 0) - 1$ and the sampling error over all $\hat{P}(\lf_i Y = 1) - P(\lf_i Y = 1)$ is at most $\frac{1}{2}\|(\hat{a} - a) - (\hat{p}(0) - p(0)) \|_2  \le \frac{1}{2} \left(\|\hat{a} - a\|_2 + \|\hat{p}(0) - p(0)\|_2\right)$.
\item $P(\lf_i = 0, Y = 1)$: This expression is equal to $P(\lf_i = 0)P(Y = 1)$, so the sampling error is $P(Y = 1) \|\hat{p}(0) - p(0) \|_2 \le \|\hat{p}(0) - p(0) \|_2$.
\end{itemize}
Putting these error terms together, we have an expression for the sampling error for $\rho$:
\begin{align*}
\| \hat{\rho} - \rho\|_2 &= \sqrt{\| \hat{p}(1) - p(1)\|_2^2 + 2 \|\hat{p}(0) - p(0) \|_2^2 + \frac{1}{4} (\|\hat{a} - a \|_2 + \|\hat{p}(0) - p(0) \|_2)^2}  \\
&\le \| \hat{p}(1) - p(1)\|_2 + \sqrt{2} \|\hat{p}(0) - p(0)\|_2 + \frac{1}{2}(\|\hat{a} - a\| + \|\hat{p}(0) - p(0) \|)\\
&= \| \hat{p}(1) - p(1)\|_2 + \Big(\frac{1}{2} + \sqrt{2}\Big)\|\hat{p}(0) - p(0)\|_2 + \frac{1}{2}\|\hat{a} - a\|_2, 
\end{align*}

where we use concavity of the square root in the first step. Therefore,
\begin{align*}
\E{}{\|\hat{\rho} - \rho\|_2} &\le \E{}{\| \hat{p}(1) - p(1)\|_2} + \Big(\frac{1}{2} + \sqrt{2}\Big)\E{}{\|\hat{p}(0) - p(0) \|_2} + \frac{1}{2}\E{}{\|\hat{a} - a\|_2} \\
&= \Big(\frac{3}{2} + \sqrt{2}\Big) \Delta_p + \frac{1}{2} \Delta_a.
\end{align*}

Plugging this back into our error for $\bm{\mu}_i$ and using Lemmas \ref{lemma:a} and \ref{lemma:p},
\begin{align*}
\E{}{\|\bm{\hat{\mu}}_i - \bm{\mu}_i \|_2} \le \|A_1^{-1}\|_2 \left(\left(\frac{3}{2} + \sqrt{2} \right)\sqrt{\frac{m}{n}} + \frac{C_a}{2a^5_{|min|}} \sqrt{\frac{m}{n}} \right).
\end{align*}

Therefore, if there are no cliques of size $3$ or greater in $G_{dep}$, the sampling error is $\mathcal{O}(\sqrt{m/n})$.

\paragraph{Estimating all $\mu_{ij}$}

Now we estimate $\mu_{ij} = P(\lf_i, \lf_j, Y^{dep}(i, j))$ for $\lf_i, \lf_j$ sharing an edge in $G_{dep}$.  For ease of notation, let $Y$ refer to $Y^{dep}(i, j)$ in this section. Denote $\bm{\mu}_{ij}$ to be the vector of all $\mu_{ij}$. Note that
\begin{align*}
\|\bm{\hat{\mu}}_{ij} - \bm{\mu}_{ij}\|_2 \le \|diag_{|E|}(A_2)^{-1} \|_2 \|\hat{\psi} - \psi\|_2 = \|A_2^{-1}\|_2 \|\hat{\psi} - \psi\|_2.
\end{align*}


$\psi$ is the vector of all $r_{ij}$ for all $(i, j) \in E$. Recall that $a_i = \E{}{v_i Y}$, $a_{ij} = \E{}{v_i v_j Y}$. We also define $X_i^{(a)} = \ind{\lf_i = a }$ and $M(X^{(a)}, X^{(b)})_{ij} = \E{}{X_i^{(a)} X_j^{(b)}} = P(\lf_i = a, \lf_j = b)$. For each term of $r_i$, we have a corresponding estimation error to compute.
\begin{itemize}
\item $P(\lf_i = 1)$: We need to compute $\hat{P}(\lf_i = 1) - P(\lf_i = 1)$ over all $(i, j) \in E$, so the sampling error for this term is $\sqrt{\sum_{(i, j) \in E} (\hat{P}(\lf_i = 1) - P(\lf_i = 1))^2} \le \sqrt{\sum_{i = 1}^m m (\hat{P}(\lf_i = 1) - P(\lf_i = 1))^2} = \sqrt{m} \|\hat{p}(1) - p(1) \|_2$.
\item $P(\lf_i = 0)$: The sampling error is equivalent to $\sqrt{m}\|\hat{p}(0) - p(0)\|_2$.
\item $P(\lf_j = 1)$: The sampling error is equivalent to $\sqrt{m}\|\hat{p}(1) - p(1)\|_2$.
\item $P(\lf_j = 0)$: The sampling error is equivalent to $\sqrt{m}\|\hat{p}(0) - p(0)\|_2$.
\item $P(\lf_i \lf_j = 1)$: This probability can be written as $P(\lf_i = 1, \lf_j = 1) + P(\lf_i = -1, \lf_j = -1)$, so we would need to compute $\hat{P}(\lf_i = 1, \lf_j = 1) - P(\lf_i = 1, \lf_j = 1) + \hat{P}(\lf_i = -1, \lf_j = -1) - P(\lf_i = -1, \lf_j = -1)$. Then the sampling error is equivalent to $\| \hat{M}(X^{(1)}, X^{(1)}) - M(X^{(1)}, X^{(1)}) + \hat{M}(X^{(-1)}, X^{(-1)}) - M(X^{(-1)}, X^{(-1)})\|_{ij}$.
\item $P(\lf_i = 0, \lf_j = 1)$: Using the definition of $M$, the sampling error over all $(i, j) \in E$ for this is $\|\hat{M}(X^{(0)}, X^{(1)}) - M(X^{(0)}, X^{(1)})\|_{ij}$.
\item $P(\lf_i = 1, \lf_j = 0)$: Similarly, the sampling error is $\|\hat{M}(X^{(1)}, X^{(0)}) - M(X^{(1)}, X^{(0)})\|_{ij}$.
\item $P(\lf_i = 0, \lf_j = 0)$: Similarly, the sampling error is $\|\hat{M}(X^{(0)}, X^{(0)}) - M(X^{(0)}, X^{(0)})\|_{ij}$.
\item $P(\lf_i Y = 1)$: Similar to before, the sampling error is $\frac{1}{2}\sqrt{m}\left( \|\hat{a} - a\|_2 + \|\hat{p}(0) - p(0)\|_2 \right)$.
\item $P(\lf_i = 0, Y = 1)$: Similar to our estimate of $\bm{\mu_i}$, the sampling error is $\sqrt{m}\|\hat{p}(0) - p(0)\|_2$.
\item $P(\lf_j Y = 1)$: The sampling error is $\frac{1}{2} \sqrt{m} \left( \|\hat{a} - a\|_2 + \|\hat{p}(0) - p(0)\|_2 \right)$.
\item $P(\lf_j = 0, Y = 1)$: The sampling error is $\sqrt{m}\|\hat{p}(0) - p(0)\|_2$.
\item $P(\lf_i \lf_j Y = 1)$:  Note that $\E{}{\lf_i \lf_j Y} = 2P(\lf_i \lf_j Y = 1) + P(\lf_i \lf_j = 0) - 1$. Moreover, $\E{}{\lf_i \lf_j Y}$ can be expressed as $\E{}{Y} \cdot \E{}{\lf_i \lf_j}$. Then the sampling error over all $\hat{P}(\lf_i \lf_j Y = 1) - P(\lf_i \lf_j Y = 1)$ is at least $\frac{1}{2}\|\E{}{Y}(\Ehat{\lf_i \lf_j} - \E{}{\lf_i \lf_j}) - (\hat{P}(\lf_i \lf_j = 0) - P(\lf_i \lf_j = 0))\|_{ij}$. Furthermore, we can write $P(\lf_i \lf_j = 0)$ as $P(\lf_i = 0) + P(\lf_j = 0) - P(\lf_i = 0, \lf_j = 0)$, so our sampling error is now less than $\frac{1}{2}\|\hat{M}(\lf, \lf) - M(\lf, \lf)\|_{ij} + \frac{1}{2} \sqrt{m}\|\hat{p}(0) - p(0) \|_2 + \frac{1}{2} \sqrt{m}\|\hat{p}(0) - p(0) \|_2 + \frac{1}{2} \|\hat{M}(X^{(0)}, X^{(0)}) - M(X^{(0)}, X^{(0)}) \|_{ij}$.
\item $P(\lf_i = 0, \lf_j Y = 1)$: Note that this can be written as $\frac{1}{2} \left(P(\lf_i = 0) + \E{}{\lf_j Y | \lf_i =0} P(\lf_i = 0) - P(\lf_i = 0, \lf_j = 0) \right)$. Then the sampling error over all $\hat{P}(\lf_i = 0, \lf_j Y = 1) - P(\lf_i = 0, \lf_j Y = 1)$ is equivalent to 
\begin{align*}
&\frac{1}{2} \sqrt{m}\|\hat{p}(0) - p(0)\|_2 + \frac{1}{2} \|\Ehat{\lf_j Y | \lf_i = 0} \hat{P}(\lf_i = 0) - \E{}{\lf_j Y | \lf_i = 0} P(\lf_i = 0) \\
&- (\hat{M}(X^{(0)}, X^{(0)}) - M(X^{(0)}, X^{(0)})) \|_{ij} \\
= \;& \frac{1}{2} \sqrt{m}\|\hat{p}(0) - p(0)\|_2 + \frac{1}{2}\|\hat{M}(X^{(0)}, X^{(0)}) - M(X^{(0)}, X^{(0)})\|_{ij} + \frac{1}{2} \|\Ehat{\lf_j Y | \lf_i = 0} (\hat{P}(\lf_i = 0) - P(\lf_i = 0)) \\
&- (\E{}{\lf_j Y | \lf_i = 0} - \Ehat{\lf_j Y | \lf_i = 0}) P(\lf_i = 0)\|_{ij} \\
\le \; & \frac{\sqrt{m}}{2} \|\hat{p}(0) - p(0)\|_2 + \frac{1}{2}\|\hat{M}(X^{(0)}, X^{(0)}) - M(X^{(0)}, X^{(0)})\|_{ij} + \frac{\sqrt{m}}{2}\|\hat{p}(0) - p(0)\|_2 \\
&+ \frac{1}{2}\|\E{}{\lf_j Y | \lf_i = 0} - \Ehat{\lf_j Y | \lf_i = 0}\|_{ij} \\
= \; & \sqrt{m} \|\hat{p}(0) - p(0)\|_2 + \frac{1}{2}\|\hat{M}(X^{(0)}, X^{(0)}) - M(X^{(0)}, X^{(0)})\|_{ij} + \frac{1}{2}\|\E{}{\lf_j Y | \lf_i = 0} - \Ehat{\lf_j Y | \lf_i = 0}\|_{ij}
\end{align*}
\item $P(\lf_j = 0, \lf_i Y = 1)$: Symmetric to the previous case, the sampling error is
$\sqrt{m} \|\hat{p}(0) - p(0)\|_2 + \frac{1}{2}\|\hat{M}(X^{(0)}, X^{(0)}) - M(X^{(0)}, X^{(0)})\|_{ij} + \frac{1}{2}\|\E{}{\lf_j Y | \lf_i = 0} - \Ehat{\lf_j Y | \lf_i = 0}\|_{ij}$.
\item $P(\lf_i = 0, \lf_j = 0, Y = 1)$: This expression is equal to $P(\lf_i = 0, \lf_j = 0) P(Y = 1)$, so the sampling error is $P(Y = 1) \|\hat{M}(X^{(0)}, X^{(0)}) - M(X^{(0)}, X^{(0)}) \|_{ij} \le \|\hat{M}(X^{(0)}, X^{(0)}) - M(X^{(0)}, X^{(0)}) \|_{ij}$.
\end{itemize}

After combining terms and taking the expectation, we have that
\begin{align*}
\E{}{\|\hat{\psi} - \psi\|_2} &\le 2 \sqrt{2m} \Delta_p + 2 \Delta_M + 3 \Delta_M + \frac{1}{\sqrt{2}}(\sqrt{m} \Delta_a +  \sqrt{m} \Delta_p) + \sqrt{2m} \Delta_p + \frac{1}{2}(\Delta_M + 2\sqrt{m} \Delta_p + \Delta_M) \\
&+ \frac{1}{\sqrt{2}}(2\sqrt{m}\Delta_p + \|\Ehat{\lf_i Y | \lf_j = 0} - \E{}{\lf_i Y | \lf_j = 0}\|_{ij} + \Delta_M ) + \Delta_M \\
&= \left(7 + \frac{1}{\sqrt{2}}\right) \Delta_M + \left(\frac{9}{2}\sqrt{2m} + \sqrt{m} \right) \Delta_p + \sqrt{\frac{m}{2}} \Delta_a + \frac{1}{\sqrt{2}}\|\Ehat{\lf_i Y | \lf_j = 0} - \E{}{\lf_i Y | \lf_j = 0}\|_{ij}.
\end{align*}

For $\E{}{\lf_i Y | \lf_j = 0}$, this term is equal to $0$ when no sources can abstain. Otherwise, suppose that among the sources that do abstain, each label abstains with frequency at least $r$. Then $\|\Ehat{\lf_i Y | \lf_j = 0} - \E{}{\lf_i Y | \lf_j = 0}\|_{ij} \le \sqrt{m} \cdot \frac{C_a}{a^5_{\min}} \sqrt{\frac{m}{rn}}$ since there are $rn$ samples used to produce the estimate. Using Lemma \ref{lemma:a}, \ref{lemma:p}, and \ref{lemma:M}, we now get that
\begin{align*}
\E{}{\|\bm{\hat{\mu}}_{ij} - \bm{\mu}_{ij} \|_2} &\le \|A_2^{-1}\| \Bigg(\bigg(7 + \frac{1}{\sqrt{2}} \bigg) C_m \frac{m}{\sqrt{n}} + \bigg(\frac{9\sqrt{2}}{2} + 1 \bigg) \frac{m}{\sqrt{n}} + \frac{C_a}{a^5_{|min|}} \cdot \frac{m}{\sqrt{n}} \bigg(\frac{1}{\sqrt{2}} + \frac{1}{\sqrt{2r}} \bigg) \Bigg).
\end{align*}

Finally, we can compute $\|A_1^{-1}\|$ and $\|A_2^{-1}\|$ since both matrices are constants, so the total estimation error is
\begin{align*}
\E{}{\|\bm{\hat{\mu}} - \bm{\mu}\|_2} \le &3.19 \left(\left(\frac{3}{2} + \sqrt{2} \right)\sqrt{\frac{m}{n}} + \frac{C_a}{2a^5_{|min|}} \sqrt{\frac{m}{n}} \right) + \\
&6.35 \Bigg(\bigg(7 + \frac{1}{\sqrt{2}} \bigg) C_m \frac{m}{\sqrt{n}} + \bigg(\frac{9\sqrt{2}}{2} + 1\bigg) \frac{m}{\sqrt{n}} + \frac{C_a}{a^5_{|min|}} \cdot \frac{m}{\sqrt{n}} \bigg(\frac{1}{\sqrt{2}} + \frac{1}{\sqrt{2r}} \bigg) \Bigg).
\end{align*}

\subsection{Proof of Theorem 2 (Information Theoretical Lower Bound)}

For Theorem 2 
and Theorem 3, 
we will need the following lemma.

\begin{lemma}

Let $\theta_1$ and $\theta_2$ be two sets of canonical parameters for an exponential family model, and let $\mu_1$ and $\mu_2$ be the respective mean parameters. If we define $e_{min}$ to be the smallest eigenvalue of the covariance matrix $\Sigma$ for the random variables in the graphical model,
$$\| \theta_1 - \theta_2 \| \le \frac{1}{e_{min}} \| \mu_1 - \mu_2 \|$$.
\label{lemma:fenchel}
\end{lemma}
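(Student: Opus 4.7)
The plan is to exploit Legendre--Fenchel duality for exponential families. Write the density as $f(\bm{Y},\bm{v}) = \exp(\theta^\top \phi(V) - A(\theta))$, where $A$ is the log-partition function. A standard fact is that $A$ is convex with $\nabla A(\theta) = \E{\theta}{\phi(V)} = \mu(\theta)$ and $\nabla^2 A(\theta) = \Cov{\theta}{\phi(V)} = \Sigma(\theta)$. Thus the map $\theta \mapsto \mu(\theta)$ is a diffeomorphism (under minimality of the sufficient statistics) whose inverse is the gradient of the convex conjugate $A^*$, and by the inverse function theorem $\nabla^2 A^*(\mu) = \Sigma(\theta(\mu))^{-1}$.

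Given this setup, the proof reduces to a one-line application of the fundamental theorem of calculus along the segment from $\mu_2$ to $\mu_1$. Concretely, letting $\mu_t = \mu_2 + t(\mu_1 - \mu_2)$, I would write
\begin{align*}
\theta_1 - \theta_2 \;=\; \nabla A^*(\mu_1) - \nabla A^*(\mu_2) \;=\; \left(\int_0^1 \nabla^2 A^*(\mu_t)\, dt\right)(\mu_1 - \mu_2).
\end{align*}
Taking norms and passing the operator norm through the integral gives
\begin{align*}
\|\theta_1 - \theta_2\| \;\le\; \sup_{t \in [0,1]} \|\Sigma(\theta(\mu_t))^{-1}\|_{\mathrm{op}} \cdot \|\mu_1 - \mu_2\|.
\end{align*}

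Since $\Sigma$ is symmetric positive definite, $\|\Sigma^{-1}\|_{\mathrm{op}} = 1/e_{\min}(\Sigma)$, where $e_{\min}$ is taken as a uniform lower bound on the smallest eigenvalue of $\Sigma(\theta)$ over the relevant range of parameters (as defined in the paper). This yields the claimed bound $\|\theta_1 - \theta_2\| \le \tfrac{1}{e_{\min}} \|\mu_1 - \mu_2\|$.

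The only potential subtlety, and thus the main thing to be careful about, is the interpretation of $e_{\min}$: strictly speaking the Hessian $\Sigma(\theta(\mu_t))$ varies along the segment, so one needs either a uniform eigenvalue bound over the segment or an infimum over the parameter class used in the broader theorem. Given the paper's framing of $e_{\min}$ as the minimum eigenvalue over the distributions in $\mathcal{P}$, this reading is consistent, and no additional work is required beyond noting it. The argument does not need any structure specific to the binary Ising model beyond it being an exponential family with invertible covariance.
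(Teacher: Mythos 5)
Your proposal is correct and follows essentially the same route as the paper: both rest on $\nabla A(\theta)=\mu$, $\nabla^2 A(\theta)=\Sigma$, and Fenchel duality $\nabla A^*(\mu)=\theta$, with the bound coming from the eigenvalue lower bound $e_{\min}$ on $\Sigma$. The only difference is presentational: the paper invokes the standard fact that $e_{\min}$-strong convexity of $A$ implies $\tfrac{1}{e_{\min}}$-Lipschitz continuity of $\nabla A^*$ (citing it), whereas you derive that implication inline via the fundamental theorem of calculus along the segment of mean parameters; your remark that $e_{\min}$ must be a uniform bound over the relevant parameter range is the same implicit requirement the paper's strong-convexity argument carries.
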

\begin{proof}
Let $A(\theta)$ be the log partition function. Now, recall that the Hessian $\nabla^2 A(\theta)$ is equal to $\Sigma$ above. Next, since $e_{min}$ is the smallest eigenvalue, $\nabla^2 A(\theta) - e_{min} I = \Sigma - e_{min} I$ is positive semi-definite, so $A(\theta)$ is strongly convex with parameter $e_{min}$.

Note that since $A(\cdot)$ is strongly convex with parameter $e_{min}$, then $A^*(\cdot)$, its Fenchel dual, has Lipchitz continuous gradients with parameter $\frac{1}{e_{min}}$ \citep{Zhou2018OnTF}. This means that 
\begin{align*}
\| \nabla A^*(\mu_1) - \nabla A^*(\mu_2) \| \le \frac{1}{e_{min}} \|\mu_1 - \mu_2\|.
\end{align*}
But $\nabla A^*(\mu)$ is the inverse mapping from mean parameters to canonical parameters, so this is just
$$\|\theta_1 - \theta_2 \| \le \frac{1}{e_{min}} \|\mu_1 - \mu_2\|$$.
\end{proof}

Now, we provide the proof for Theorem $2$. Consider the following family of distributions for a graphical model with one hidden variable $Y$, $m$ observed variables that are all conditionally independent given $Y$, and no sources abstaining:
\begin{align*}
\mathcal{P} = \big\{P = \frac{1}{z}\exp(\theta_Y Y + \sum_{j = 1}^m \theta_j \lambda_j Y): \theta \in \mathbb{R}^{m + 1} \big\}
\end{align*}

We define a set of canonical parameters $\theta_v = \delta v$, where $\delta > 0$, $v \in \{-1, 1\}^m$ ($\theta_Y$ is fixed since it maps to a known mean parameter), and $P_v$ is the corresponding distribution in $\mathcal{P}$. $\mathcal{P}$ induces a $\frac{\delta}{\sqrt{m}}$-Hamming separation for the L2 loss because
\begin{align*}
\|\theta - \theta_v\|_2 &= \Big(\sum_{j = 1}^m |\theta_j - [\theta_v]_j|^2\Big)^{1/2} \ge \frac{\sum_{j = 1}^m 1 \cdot | \theta_j - [\theta_v]|_j}{\big(\sum_{j = 1}^m 1^2 \big)^{1/2}} \\
&= \frac{1}{\sqrt{m}} \sum_{j = 1}^m |\theta_j - [\theta_v]_j| \ge \frac{\delta}{\sqrt{m}} \sum_{j = 1}^m \mathbf{1}\{\mathrm{sign}(\theta_j) \neq v_j \}.
\end{align*}

We use Cauchy-Schwarz inequality in the first line and the fact that if the sign of $\theta_j$ is different from $v_j$, then $\theta_j$ and $[\theta_v]_j$ must be at least $\delta$ apart. Then applying Assouad's Lemma \citep{Yu1997}, the minimax risk is bounded by
\begin{align*}
\mathcal{M}_n(\theta(\mathcal{P}), L2) = \inf_{\hat{\theta}} \sup_{P \in \mathcal{P}} \mathbb{E}_P[\|\hat{\theta}(X_1, \dots, X_n) - \theta(P)\|_2] \ge \frac{\delta}{2\sqrt{m}} \sum_{j = 1}^m 1 - \|P^n_{+j} - P^n_{-j}\|_{TV}.
\end{align*}

$\hat{\theta}(X_1, \dots, X_n)$ is an estimate of $\theta$ based on the $n$ observable data points, while $\theta(P)$ is the canonical parameters of a distribution $P$. $P_{\pm j}^n = \frac{1}{2^{m - 1}} \sum_{v} P^n_{v, \pm j}$, where $P^n_{v, \pm j}$ is the product of $n$ distributions parametrized by $\theta_v$ with $v_j = \pm 1$. We use the convexity of total variation distance, Pinsker's inequality, and decoupling of KL-divergence to get 
\begin{align*}
\|P^n_{+j} - P^n_{-j}\|^2_{TV} \le \underset{d_{ham}(v, v') \le 1}{\text{max}} \|P^n_v - P^n_{v'}\|^2_{TV} \le \frac{1}{2} \; \underset{d_{ham}(v, v') \le 1}{\text{max}}
 KL(P_v^n \| P_{v'}^n) = \frac{n}{2} \; \underset{d_{ham}(v, v') \le 1}{\text{max}}
 KL(P_v \| P_{v'}).
\end{align*}

$v$ and $v'$ above only differ in one term. Then our lower bound becomes
\begin{align}
\mathcal{M}_n(\theta(\mathcal{P}), L2) &\ge \frac{\delta}{2\sqrt{m}} \sum_{j = 1}^m 1 - \sqrt{\frac{n}{2} \; \underset{d_{ham}(v, v') \le 1}{\text{max}} KL(P_v \| P_{v'})} = \frac{\delta \sqrt{m}}{2} \left(1 - \sqrt{\frac{n}{2} \; \underset{d_{ham}(v, v') \le 1}{\text{max}} KL(P_v \| P_{v'})}\right).
\label{eq:KL_lowerbound}
\end{align}

We must bound the KL-divergence between $P_v$ and $P_v'$. Suppose WLOG that $v$ and $v'$ differ at the $i$th index with $v_i = 1, v'_i = -1$, and let $z_v$ and $z_{v'}$ be the respective terms used to normalize the distributions. Then the KL divergence is
\begin{align}
KL(P_v \| P_{v'}) = \mathbb{E}_v[\langle \theta_v - \theta_{v'}, \lambda Y \rangle] + \ln \frac{z_{v'}}{z_v} = 2\delta \mathbb{E}_v[\lambda_i Y] + \ln \frac{z_{v'}}{z_v}.
\label{eq:KL}
\end{align}

We can write an expression for $\mathbb{E}_v[\lambda_i Y]$:
\begin{align}
\mathbb{E}_v[\lambda_i Y] &= 2(P_v(\lambda_i = 1, Y = 1) + P_v(\lambda_i = -1, Y = -1)) - 1 \nonumber \\
&= \frac{2}{z_v} \Big( \sum_{\lambda_{\neg i}} \exp(\theta_Y + \delta + \sum_{j \neq i}^m (\delta v_j)\lambda_j) + \exp(-\theta_Y + \delta - \sum_{j \neq i}^m (\delta v_j)\lambda_j)\Big) - 1 \nonumber \\
&= \frac{2}{z_v} \exp(\delta) \sum_{\lambda_{\neg i}} 2 \cosh (\theta_Y + \sum_{j \neq i}^m (\delta v_j) \lambda_j) - 1. \label{eq:KL_expectation}
\end{align}

Similarly, $z_v$ and $z_{v'}$ can be written as 
\begin{align*}
z_v &= \exp(\delta) \sum_{\lambda_{\neg i}} 2 \cosh (\theta_Y + \sum_{j \neq i}^m (\delta v_j) \lambda_j) + \sum_{\lambda_{\neg i}} \exp(\theta_Y - \delta + \sum_{j \neq i} (\delta v_j)\lambda_j) + \sum_{\lambda_{\neg i}} \exp(-\theta_Y - \delta - \sum_{j \neq i}(\delta v_j)) \\
&= (\exp(\delta) + \exp(-\delta)) \sum_{\lambda_{\neg i}} 2 \cosh(\theta_Y + \sum_{j \neq i}(\delta v_j)\lambda_j) = 4 \cosh(\delta) \sum_{\lambda_{\neg i}}  \cosh(\theta_Y + \sum_{j \neq i}(\delta v_j)\lambda_j) \\
z_{v'} &= 4 \cosh(\delta) \sum_{\lambda_{\neg i}}  \cosh(\theta_Y + \sum_{j \neq i}(\delta v'_j)\lambda_j)
\end{align*}

Plugging $z_v$ back into \eqref{eq:KL_expectation}, we get:
\begin{align*}
\E{v}{\lf_i Y} &= 4 \cdot \frac{\exp(\delta) \sum_{\lf_{\neg i}} \cosh(\theta_Y + \sum_{j \neq i }^m (\delta v_j) \lf_j)}{4 \cosh(\delta) \sum_{\lf_{\neg i}} \cosh(\theta_Y + \sum_{j \neq i}^m (\delta v_j) \lf_j)} - 1 = \frac{\exp(\delta)}{\cosh(\delta)} - 1.
\end{align*}

Also note that $\frac{z_{v'}}{z_v} = 1$ since $v_j' = v_j$ for all $j \neq i$. The KL-divergence expression \eqref{eq:KL} now becomes
\begin{align*}
KL(P_v \| P_{v'}) = 2\delta\left(\frac{\exp(\delta)}{\cosh(\delta)} -1\right) + \ln(1) = 2\delta\left(\frac{\exp(\delta)}{\cosh(\delta)} -1\right).
\end{align*}

We finally show that this expression is less than $2\delta^2$. Note that for positive $\delta$, $f(\delta) = \frac{\exp(\delta)}{\cosh(\delta)} -1 < \delta$, because $f(\delta)$ is concave and $f'(0) = 1$. Then we clearly have that $KL(P_v \| P_{v'}) \le 2\delta^2$. Putting this back into our expression for the minimax risk, \eqref{eq:KL_lowerbound} becomes
\begin{align*}
\mathcal{M}_n(\theta(\mathcal{P}), L2) \ge \frac{\delta \sqrt{m}}{2} (1 - \sqrt{n \delta^2}).
\end{align*}

Then if we set $\delta = \frac{1}{2\sqrt{n}}$, we get that 
\begin{align*}
\mathcal{M}_n(\theta(\mathcal{P}), L2) \ge \frac{\sqrt{m}}{8\sqrt{n}}.
\end{align*}

Lastly, to convert to a bound over the mean parameters, we use Lemma \ref{lemma:fenchel} to conclude that
\begin{align*}
\inf_{\hat{\mu}} \sup_{P \in \mathcal{P}} \E{P}{\|\hat{\mu}(X_1, \dots, X_n) - \mu(P) \|_2} \ge \frac{e_{min}}{8} \sqrt{\frac{m}{n}}.
\end{align*}

From this, we can conclude that the estimation error on the label model parameters $\|\bm{\hat{\mu}} - \bm{\mu} \|_2$ is also at least $\frac{e_{min}}{8} \sqrt{\frac{m}{n}}$.

\subsection{Proof of Theorem 3 (Generalization Error)}

We base our proof off of Theorem $1$ of \citet{Ratner19} with modifications to account for model misspecification. To learn the parametrization of our end model $f_w$, we want to minimize a loss function $L(w, \bm{X}, \bm{Y}) \in [0, 1]$. The expected loss we would normally minimize using some $w^* = \argmax{w}{} L(w)$ is
\begin{align*}
L(w) = \E{(\bm{X}, \bm{Y}) \sim \mathcal{D}}{L(w, \bm{X}, \bm{Y})}.
\end{align*}

However, since we do not have access to the true labels $\bm{Y}$, we instead minimize the expected noise-aware loss. Recall that $\bm{\mu}$ is the parametrization of the label model we would learn with population-level statistics, and $\bm{\hat{\mu}}$ is the parametrization we learn with the empirical estimates from our data. Denote $P_{\bm{\mu}}$ and $P_{\bm{\hat{\mu}}}$ as the respective distributions. If we were to have a population-level estimate of $\bm{\mu}$, the loss to minimize would be 
\begin{align*}
L_{\bm{\mu}}(w) = \E{(\bm{X}, \bm{Y}) \sim \mathcal{D}}{\E{\bm{\widetilde{Y}} \sim P_{\bm{\mu}}(\cdot | \bm{\lf}(\bm{X}))}{L(w, \bm{X}, \bm{\widetilde{Y}})}}.
\end{align*}

However, because we must estimate $\bm{\hat{\mu}}$ and further are minimizing loss over $n$ samples, we want to estimate a $\hat{w}$ that minimizes the empirical loss,
\begin{align*}
\hat{L}_{\bm{\hat{\mu}}}(w) = \frac{1}{n} \sum_{i = 1}^n \E{\bm{\widetilde{Y}} \sim P_{\bm{\hat{\mu}}}(\cdot | \bm{\lf}(\bm{X_i}))}{L(w, \bm{X_i}, \bm{\widetilde{Y}})}.
\end{align*}

We first write $L(w)$ in terms of $L_{\bm{\mu}}(w)$.
\begin{align*}
L(w) = \;& \E{(\bm{X}, \bm{Y}) \sim \mathcal{D}}{L(w, \bm{X}, \bm{Y})} = \E{(\bm{X'}, \bm{Y'}) \sim D}{\E{(\bm{X}, \bm{Y}) \sim D}{L(w, \bm{X'}, \bm{Y}) | \bm{X} = \bm{X'}}} \\
=\;& \mathbb{E}_{(\bm{X'}, \bm{Y'}) \sim D}\big[\E{(\bm{X}, \bm{\widetilde{Y}}) \sim P_{\bm{\mu}}}{L(w, \bm{X'}, \bm{Y}) | \bm{X} = \bm{X'}} + \E{(\bm{X}, \bm{Y}) \sim \mathcal{D}}{L(w, \bm{X'}, \bm{Y}) | \bm{X} = \bm{X'}} \\
-\;& \E{(\bm{X}, \bm{\widetilde{Y}}) \sim P_{\bm{\mu}}}{L(w, \bm{X'}, \bm{Y}) | \bm{X} = \bm{X'}}\big] \\
\le\;& \E{(\bm{X'}, \bm{Y'}) \sim \mathcal{D}}{\E{(\bm{\lf}, \bm{\widetilde{Y}}) \sim P_{\bm{\mu}}}{L(w, \bm{X'}, \bm{Y}) | \bm{\lf} = \bm{\lf'})}} \\
+\;& \mathbb{E}_{(\bm{X'}, \bm{Y'}) \sim \mathcal{D}}\bigg[\Big|\sum_{x, y} L(w, \bm{X'}, y) (\mathcal{D}(\bm{X} = x, \bm{Y} = y | \bm{X} = \bm{X'}) - P_{\bm{\mu}}(\bm{X} = x, \bm{Y} = y | \bm{X} = \bm{X'})) \Big|\bigg]\\
\le\;& L_{\bm{\mu}}(w) + \mathbb{E}_{(\bm{X'}, \bm{Y'}) \sim \mathcal{D}} \Big[\sum_{x, y} L(w, \bm{X'}, y) \cdot \big| \mathcal{D}(\bm{X} = x, \bm{Y} = y | \bm{X} = \bm{X'}) - P_{\bm{\mu}}(\bm{X} = x, \bm{Y} = y | \bm{X} = \bm{X'})\big| \Big] \\
\le\;& L_{\bm{\mu}}(w) + \mathbb{E}_{(\bm{X'}, \bm{Y'}) \sim \mathcal{D}} \Big[\sum_{x, y} \big| \mathcal{D}(\bm{X} = x, \bm{Y} = y | \bm{X} = \bm{X'}) - P_{\mu}(\bm{X} = x, \bm{Y} = y | \bm{X} = \bm{X'})\big|\Big]
\end{align*}

Here we have used the fact that $L(w, \bm{X'}, y) \le 1$. Note that $\mathcal{D}(\bm{X} = x, \bm{Y} = y | \bm{X} = \bm{X'}) = \mathcal{D}(\bm{Y} = y | \bm{X} = \bm{X'})$ only when $\bm{X'} = x$, and is $0$ otherwise.
The same holds for $P_{\bm{\mu}}$, so
\begin{align*}
L(w) &\le L_{\bm{\mu}}(w) + \E{(\bm{X'}, \bm{Y'}) \sim \mathcal{D}}{\sum_y \big| \mathcal{D}(\bm{Y} = y | \bm{X} = \bm{X'}) - P_{\bm{\mu}}(\bm{Y} = y | \bm{X} = \bm{X'})\big| }.
\end{align*}

Note that the expression $\sum_y \big| \mathcal{D}(\bm{Y} = y | \bm{X} = \bm{X'}) - P_{\bm{\mu}}(\bm{Y} = y | \bm{X} = \bm{X'})\big| $ is just half the total variation distance between $\mathcal{D}(\bm{Y} | \bm{X'})$ and $P_{\bm{\mu}}(\bm{Y} | \bm{X'})$. Then, using Pinsker's inequality, we bound $L(w)$ in terms of the conditional KL divergence between $\mathcal{D}$ and $P_{\mu}$:
\begin{align*}
L(w) &\le L_{\bm{\mu}}(w) + \E{\bm{X'} \sim \mathcal{D}}{2 \cdot TV(\mathcal{D}(\bm{Y} |  \bm{X'}), P_{\bm{\mu}}(\bm{Y} | \bm{X'})) } \\
&\le L_{\bm{\mu}}(w) + 2 \cdot \E{\bm{X} \sim \mathcal{D}}{\sqrt{(1/2) KL (\mathcal{D}(\bm{Y} | \bm{X}) \;\| \;P_{\bm{\mu}}(\bm{Y} | \bm{X}))}} \\
&\le  L_{\bm{\mu}}(w) +  \sqrt{2 \cdot  KL (\mathcal{D}(\bm{Y} | \bm{X}) \;\| \;P_{\bm{\mu}}(\bm{Y} | \bm{X}))}.
\end{align*}

There is a similar lower bound on $L(w)$ if we perform the same steps as above on the inequality $L(w) \ge L_{\bm{\mu}}(w) - \E{(\bm{X'}, \bm{Y'}) \sim \mathcal{D}}{\Big| \E{(\bm{X}, \bm{Y}) \sim \mathcal{D}}{L(w, \bm{X'}, \bm{Y}) | \bm{X} = \bm{X'}} - \E{(\bm{X}, \bm{\widetilde{Y}}) \sim P_{\bm{\mu}}}{L(w, \bm{X'}, \bm{Y}) | \bm{X} = \bm{X'}} \Big|}$. This yields
\begin{align*}
L(w) \ge L_{\bm{\mu}}(w) - \sqrt{2 \cdot KL (\mathcal{D}(\bm{Y} | \bm{X}) \;\| \;P_{\bm{\mu}}(\bm{Y} | \bm{X}))}.
\end{align*}

Therefore,
\begin{align*}
L(\hat{w}) - L(w^*) \le L_{\bm{\mu}}(\hat{w}) - L_{\bm{\mu}}(w^*) + 2\sqrt{2 \cdot  KL (\mathcal{D}(\bm{Y} | \bm{X}) \;\| \;P_{\bm{\mu}}(\bm{Y} | \bm{X}))}.
\end{align*}

We finish the proof of the generalization bound with the procedure from \citet{Ratner19} but also use the conversion from canonical parameters to mean parameters as stated in Lemma \ref{lemma:fenchel}, and note that the estimation error of the mean parameters is always less than the estimation error of the label model parameters. Then our final generalization result is 
\begin{align*}
L(\hat{w}) - L(w^*) \le \gamma(n) + \frac{8|\mathcal{Y}|}{e_{min}} \|\bm{\hat{\mu}} - \bm{\mu} \|_2 + \delta(\mathcal{D}, P_{\bm{\mu}}),
\end{align*}

where $\delta(\mathcal{D}, P_{\bm{\mu}}) = 2\sqrt{2 \cdot  KL (\mathcal{D}(\bm{Y} | \bm{X}) \;\| \;P_{\bm{\mu}}(\bm{Y} | \bm{X}))}$, $e_{min}$ is the minimum eigenvalue of $\Cov{}{\bm{\lf}, \bm{Y}}$ over the construction of the binary Ising model, and $\gamma(n)$ bounds the empirical risk minimization error.

\section{Extended Experimental Details}
\label{sec:extexp}

We describe additional details about the tasks, including details about data
sources, supervision sources, and end models.
We also report details about our ablation studies.
All timing measurements were taken on a machine with an Intel Xeon E5-2690 v4 CPU and
Tesla P100-PCIE-16GB GPU.
Details about the sizes of the train/dev/test splits and end models are shown
in Table~\ref{table:stats}.

\subsection{Dataset Details}


\begin{table}[ht!]
    \centering
    \begin{tabular}{@{}llccc@{}}
        \toprule
        \textbf{Dataset} & \textbf{End Model}  & \textbf{$N_{train}$} & \textbf{$N_{dev}$}   & \textbf{$N_{test}$}   \\ \midrule
        \spouse          & LSTM                & 22,254               & 2,811                & 2,701                 \\
        \spam            & Logistic Regression & 1,586                & 120                  & 250                   \\
        \weather         & Logistic Regression & 187                  & 50                   & 50                    \\
        \commercial      & ResNet-50           & 64,130               & 9,479                & 7,496                 \\
        \interview       & ResNet-50           & 6,835                & 3,026                & 3,563                 \\
        \tennis          & ResNet-50           & 6,959                & 746                  & 1,098                 \\
        \basketball      & ResNet-18           & 3,594                & 212                  & 244                   \\ 
        \bottomrule
        \end{tabular}
    \caption{
    We report the train/dev/test split of each dataset.
    The dev and test set have ground truth labels, and we assign labels to the
    training set using our method or one of the baseline methods.
    }
    \label{table:stats}
\end{table}

\paragraph{\spouse, \weather}
We use the datasets from~\citet{Ratner18} and the train/dev/test splits from
that work (\weather\ is called \textbf{Crowd} in that work).

\paragraph{\spam}
We use the dataset as provided by
Snorkel\footnote{https://www.snorkel.org/use-cases/01-spam-tutorial} and those
train/dev/test splits.

\paragraph{\interview, \basketball}
We use the datasets from~\citet{sala2019multiresws} and the train/dev/test
splits from that work.

\paragraph{\commercial}
We use the dataset from~\citet{fu2019rekall} and the train/dev/test splits from
that work.

\paragraph{\tennis}
We obtained broadcast footage from four professional tennis matches, and
annotated segments when the two players are in a rally.
We temporally downsampled the images at 1 FPS. We split into dev/test by taking
segments from each match (using contiguous segments for dev and test,
respectively) to ensure that dev and test come from the same distribution.

\subsection{Task-Specific End Models}

For the datasets we draw from previous work (each dataset except for \tennis),
we use the previously published end model architectures
(LSTM~\cite{Hochreiter1997-fj} for \spouse, logistic regression over bag of
n-grams for \spam\ and over Bert features for \weather~\cite{devlin2018bert},
ResNet pre-trained on ImageNet for the video tasks).
For \tennis, we use ResNet-50 pre-trained on ImageNet to classify individual
frames.
We do not claim that these end models achieve the best possible performance for
each task; our goal is the compare the relative imporovements that our weak
supervision models provide compare to other baselines through label quality,
which is orthogonal to achieving state-of-the-art performance for these
specific tasks.

For end models that come from previous works, we use the hyperparameters from
those works.
For the label model baselines, we use the hyperparameters from previous works
as well.
For our label model, we use class balance from the dev set, or tune the class
balance ourselves with a grid search.
We also tune which triplets we use for parameter recovery on the dev set.
For our end model parameters, we either use the hyperparameters from previous
works, or run a simple grid search over learning rate and momentum.

\subsection{Supervision Sources}

Supervision sources are expressed as short Python functions.
Each source relied on different information to assign noisy labels:

\paragraph{\spouse, \weather, \spam}
For these tasks, we used the same supervision sources as used in previous
work~\cite{Ratner18}.
These are all text classification tasks, so they rely on text-based heuristics
such as the presence or absence of certain words, or particular regex patterns.

\paragraph{\interview, \basketball}
Again, we use sources from previous work~\cite{sala2019multiresws}.
For \interview, these sources rely on the presence of certain faces in the
frame, as determined by an identity classifier, or certain text in the
transcript.
For \basketball, these sources rely on an off-the-shelf object detector to
detect balls or people, and use heuristics based on the average pixel of the
detected ball or distance between the ball and person to determine whether the
sport being played is basketball or not.

\paragraph{\commercial}
In this dataset, there is a strong signal for the presence or absence of
commercials in pixel histograms and the text; in particular, commercials are
book-ended on either side by sequences of black frames, and commercial segments
tend to have mixed-case or missing transcripts (whereas news segments are in
all caps).
We use these signals to build the weak supervision sources.

\paragraph{\tennis}
This dataset uses an off-the-shelf pose detector to provide primitives for the
weak supervision sources.
The supervision sources are heuristics based on the number of people on court
and their positions.
Additional supervision sources use color histograms of the frames (i.e., 
how green the frame is, or whether there are enough white pixels for the court
markings to be shown).

\subsection{Ablation Studies}
\label{sec:supp_ablation}


\begin{table*}[t]
    \centering
    \begin{tabular}{@{}rlccccccccccccc@{}}
        \toprule
                                        & \spouse & \spam & \weather \\
        \midrule
        \textbf{Random abstains}        & 20.9    & 64.1  & 69.1     \\
        \textbf{\sysx}                  & 49.6    & 92.3  & 88.9     \\
        \midrule
        \textbf{Single Triplet Worst}   & 4.5     & 67.0  & 0.0      \\
        \textbf{Single Triplet Best}    & 51.2    & 83.6  & 77.6     \\
        \textbf{Single Triplet Average} & 37.9    & 73.4  & 31.0     \\
        \cmidrule(l){2-4}
        \textbf{\sysx Label Model}      & 47.0    & 89.1  & 77.6     \\
        \bottomrule \\
        \end{tabular}
        \caption{
        End model performance in terms of F1 score with random votes
        replacing abstentions (first row), compared to \sysx, for the benchmark
        applications.
        }
    \label{table:random_abstains}
\end{table*}

We report the results of two ablation studies on the benchmark applications.
In the first study, we examine the effect of randomly replacing abstains with
votes, instead of augmenting $G_{dep}$.
In the second study, we examine the effect of using a single random selection
of triplets instead of taking the mean or median over all triplet assignments.

Table~\ref{table:random_abstains} (top) shows end model performance for the three
benchmark tasks when replacing abstains with random votes (top row), compared
to \sysx\ end model performance.
Replacing abstentions with random votes results in a major degradation in
performance.

Table~\ref{table:random_abstains} (bottom) shows label model performance when
using a single random assignment of triplets, compared to the \sysx\ label
model, which takes the median or mean of all possible triplets.
There is large variance when taking a single random assignment of triplets,
whereas using an aggregation is more stable.
In particular, while selecting a good seed can result in performance that
matches (\weather) or exceeds (\spouse) \sysx\ label model performance,
selecting a \textit{bad} seed result in much worse performance (including
catastrophically bad predictors).
As a result, \sysx\ outperforms random assignments on average.

As a final note, we comment on using means vs. medians for aggregating accuracy
scores.
For all tasks except for \weather, there is no difference in label model
performance.
For \weather, using medians is more accurate, since the supervision sources
have a large abstention rate.
As a result, many triplets result in accuracy scores of zero (hence the $0$ F1
score in Table~\ref{table:random_abstains}).
This throws off the median aggregation, since the median accuracy score becomes
zero for many sources.
However, mean aggregation is more robust to these zero's, since the positive
accuracy scores from the triplets can correct for the accuracy.

\fi

\end{document}